\newcolumntype{R}[2]{%
    >{\adjustbox{angle=#1,lap=\width-(#2)}\bgroup}%
    l%
    <{\egroup}%
}
\newmdenv[topline=false,rightline=false,bottomline=false,nobreak=false]{leftlinebox}
\newtheorem{proposition2}{Proposition}
\newtheorem{lemma}{Lemma}
\newtheorem{utheorem}{Theorem}{}{}
\newcommand{\w}{\mathbf{w}}
\newcommand{\wbar}{\underline{\mathbf{w}}}
\newcommand{\wstar}{\mathbf{w_\star}}
\newcommand{\Z}{\mathcal{Z}}
\def\vv{{\bm{v}}}
\newcommand{\fstar}{f_\star}
\def\vw{{\w}}
\providecommand{\customgenericname}{}
\newsavebox{\leftbox}
\newsavebox{\rightbox}
\newcommand{\Arrow}[1]{%
\parbox{#1}{\tikz{\draw[->](0,0)--(#1,0);\draw[<-](0,-15)--(#1,-15);}}
}
\DeclareMathOperator*{\argmax}{argmax}
\DeclareMathOperator*{\argmin}{argmin}
\title{A Stochastic Bundle Method for Interpolating Networks}
\author{\name Alasdair Paren \footnotemark \email alasdair.paren@gmail.com \\
       \addr Department of Engineering Science\\
       University of Oxford\\
       Oxford, UK
       \AND
       \name Leonard Berrada \footnotemark[\value{footnote}] 
       \email  lberrada@robots.ox.ac.uk \\
       \addr Department of Engineering Science\\
       University of Oxford\\
       Oxford, UK
       \AND
       \name Rudra P. K. Poudel
       \email rudra.poudel@crl.toshiba.co.uk \\
       \addr Cambridge Research Laboratory,\\
            Toshiba Europe Ltd,\\
            Cambridge, UK.
       \AND
       \name M. Pawan Kumar \email pawan@robots.ox.ac.uk \\
       \addr Department of Engineering Science\\
       University of Oxford\\
       Oxford, UK.
       \footnotetext{These authors contributed equally to this work.}
       }
\begin{document}

\maketitle

\begin{abstract}%
We propose a novel method for training deep neural networks that are capable of interpolation, that is, driving the empirical loss to zero. At each iteration, our method constructs a stochastic approximation of the learning objective. The approximation, known as a bundle, is a pointwise maximum of linear functions. Our bundle contains a constant function that lower bounds the empirical loss. This enables us to compute an automatic adaptive learning rate, thereby providing an accurate solution. In addition, our bundle includes linear approximations computed at the current iterate and other linear estimates of the DNN parameters. The use of these additional approximations makes our method significantly more robust to its hyperparameters. Based on its desirable empirical properties, we term our method Bundle Optimisation for Robust and Accurate Training (BORAT). In order to operationalise BORAT, we design a novel algorithm for optimising the bundle approximation efficiently at each iteration. We establish the theoretical convergence of BORAT in both convex and non-convex settings. Using standard publicly available data sets, we provide a thorough comparison of BORAT to other single hyperparameter optimisation algorithms. Our experiments demonstrate BORAT matches the state-of-the-art generalisation performance for these methods and is the most robust.


\setcounter{footnote}{1}
\footnotetext{Authors contributed equally to this work.}
\end{abstract}

\begin{keywords}
   Bundle Methods, Stochastic Optimisation, Neural Network Optimisation, Interpolation, Adaptive Learning-rate
\end{keywords}

\section{Introduction}
Training a deep neural network (DNN) is a challenging optimization problem: it involves minimizing the average of many high-dimensional non-convex loss functions. In practice, the main algorithms utilised are Stochastic Gradient Descent (SGD) \citep{robbins1951} and adaptive gradient methods such as AdaGrad \citep{duchi2011} or Adam \citep{kingma2014}. It has been observed that SGD tends to provide better generalization performance than adaptive gradient methods \citep{Wilson2017}. 
However, the downside of SGD is that it requires the manual design of a learning-rate schedule. Many forms of schedule have been proposed in the literature, including piece wise constant \citep{Huang2017a}, geometrically decreasing \citep{Szegedy2015} and warm starts with cosine annealing \citep{loshchilov2016}. Examples of these schemes are plotted in Figure \ref{fig:sgd_learning_rate_schedules}. Consequently, practitioners who wish to use SGD in a novel setting need to select a learning-rate schedule for their learning task. To that end, they first need to choose the parameterization of the schedule (e.g. picking one of the examples given above). Then they need to tune the corresponding parameters to get good empirical performance.  This typically results in a cross-validation that searches over many critical and sensitive hyper-parameters. For example, a piece wise linear scheme requires a starting learning rate value, a decay factor and a list or metric to determine at which points in training to decay the learning rate. Due to the high dimensionality of this search space performing a grid search can mean training a large number of models. This number increases exponentially in combination with other hyperparameters such as regularisation and batch size. Thus, finding an SGD learning rate schedule that produces strong generalisation performance for a new task is time and computationally demanding, often requiring hundreds of GPU hours.
\begin{figure}[H]
  \centering
  \includegraphics[width=\textwidth]{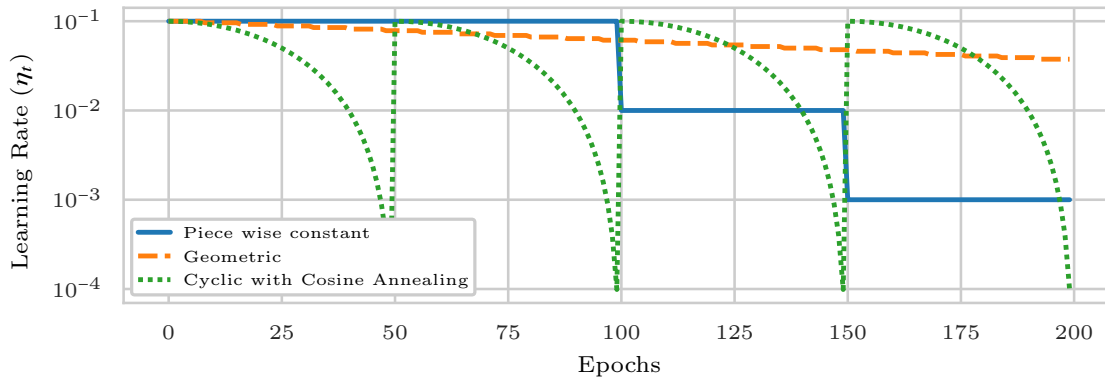}
  \caption{SGD Learning rate schedules proposed in the literature}
  \label{fig:sgd_learning_rate_schedules}
\end{figure}
\noindent
In this work, we alleviate this issue by presenting a family of algorithms that achieve comparable generalisation performance to SGD with a highly refined learning rate schedule, while requiring far less tuning of hyperparameters. This in turn leads to a reduction in the time, cost and energy required when finding a highly accurate network for a new task.\\
In more detail, we present a novel family of proximal algorithms for the optimisation of DNNs that are capable of interpolation. A DNN is said to interpolate a data set if it has the ability to simultaneously drive the loss of all the training samples in a data set to their minimum value. Thus a lower bound of the objective function is known: for instance, it is close to zero for a model trained with the cross-entropy loss. Our algorithms approximate the loss within each proximal problem by a bundle, that is, a point-wise maximum over linear functions. By including the interpolation lower bound within this bundle, we obtain the following two modelling benefits: (i) our model more closely mimics the true loss function than existing baselines like SGD, and; (ii) the learning rate gets automatically re-scaled at each iteration of the algorithm, thereby providing accurate updates. By increasing the number of linear approximations in the bundle, the true loss is better modelled. As an upshot, we obtain additional stability to optimisation and task-specific hyperparameters. Based on its highly desirable empirical properties, we term these methods Bundle optimization for Robust and Accurate Training (BORAT). \\
All the variants of BORAT use a single learning rate hyperparameter that requires minimal tuning. In particular, the learning rate hyperparameter is kept constant throughout the training procedure, unlike the learning rate of SGD that needs to be decayed for good generalization. The BORAT family of algorithms obtain state-of-the-art empirical performance for single hyperparameter training of neural networks.

\paragraph{Contributions}

\begin{itemize}
\item We design a family of adaptive algorithms that have a single hyperparameter that does not need any decaying schedule. In contrast, the related APROX \citep{asi2019} and L4 \citep{Rolinek2019} use respectively two and four hyperparameters for their learning-rate.

\item For the deep learning setting we establish a link between stochastic optimisation methods with adaptive learning rates and proximal bundle methods.

\item We provide convergence rates in various stochastic convex settings and for a class of non-convex problems. 

\item  We derive a novel algorithm for solving small quadratic programs where the constraints define the probability simplex. This algorithm permits a parallel implementation allowing for efficient solution on modern hardware. 

\item  We empirically demonstrate the increased stability to hyperparameters when increasing the bundle size. We show this on the CIFAR100 and Tiny ImageNet data sets. 

\item We achieve state-of-the-art results for learning a differentiable neural computer; training variants of residual networks on the SVHN and CIFAR data sets, and training a Bi-LSTM on the Stanford Natural Language Inference data set.

\end{itemize}
A preliminary version of this work appeared in the proceedings of ICML 2020 \citep{Berrada2019a}. While this previous work has considered bundles of size 2 resulting in the ALI-G algorithm  detailed with in Section \ref{sec:alig}. This article significantly differs from the previous work by (i) considering bundles of size greater than 2; (ii) introducing an novel algorithm to compute the exact solution of each bundle; (iii) investigating the robustness  towards hyperparameters; and (iv) showing how the use of large bundles permits easy application of BORAT to challenging non-smooth losses.

\section{Preliminaries}\label{sec:preliminaries}

\paragraph{Loss Function.}
We consider a supervised learning task where the model is parameterized by $\vw \in \mathbb{R}^d$.
Usually, the objective function can be expressed as an expectation over $z \in \Z$, a random variable indexing the samples of the training set:
\begin{equation}
    f(\vw) \triangleq \mathbb{E}_{z \in \Z}[\ell_z(\vw)],
\end{equation}
where each $\ell_z$ is the loss function associated with the sample $z$. We assume that each $\ell_z$ admits a known lower bound B. For simplicity we will often assume this lower bound is 0, which is the case for the large majority of loss functions used in machine learning. For instance, suppose that the model is a deep neural network with weights $\vw$ performing classification. Then for each sample $z$, $\ell_z(\vw)$ can represent the cross-entropy loss, which is always non-negative. Other non-negative loss functions include the structured or multi-class hinge loss, and the $\ell_1$ or $\ell_2$ loss functions for regression. Note that it is always possible to subtract a non-zero lower bound B from the loss function to define a new equivalent problem that satisfies the aforementioned assumption.

\paragraph{Interpolation.}
In this work we consider DNNs that can interpolate the data. Formally, we assume: 
\begin{equation}
   \exists \hspace{0.1cm }\wstar : \hspace{0.1cm } \forall {z \in \Z}, \ell_z(\wstar) \leq \epsilon, 
\end{equation}
where $\epsilon$ is a tolerance on the amount of error in the interpolation assumption. We will often want to make reference to the case when $\epsilon=0$. Following previous work \citep{ma2018} we will refer to this setting as perfect interpolation. The interpolation property is satisfied in many practical cases, since modern neural networks are typically trained in an over-parameterized regime where the parameters of the model far exceed the size of the training data \citep{Mingchen2020}. Additionally most modern DNN architectures can be easily increased in size and depth, allowing them to interpolate all but the largest data sets. Note, the data has to be labelled consistently for this to be possible. For instance it is impossible to interpolate a data set with two instances of the same image that have two different labels.

\paragraph{Regularisation.}
It is often desirable to encourage generalisation by the addition of a regularisation function $\phi(\vw)$ to the objective. Typical choices for $\phi$ include $\lambda \|\vw\|_2$ and $\lambda \|\vw\|_1$ where $\lambda$ governs the strength of the regularisation. However, in this work we incorporate such regularisation as a constraint on the feasible domain: $\Omega = \left\{ \vw \in \mathbb{R}^d: \ \phi(\vw) \leq r \right\}$ for some value of $r$. In the deep learning setting, this will allow us to assume that the objective function can be driven close to zero without unrealistic assumptions about the value of the regularisation term for the final set of parameters. Our framework can handle any constraint set $\Omega$ on which Euclidean projections are computationally efficient. This includes the feasible set induced by $\ell_2$ regularization: $\Omega = \left\{ \vw \in \mathbb{R}^d: \  \| \vw \|_2^2 \leq r \right\}$, for which the projection is given by a simple rescaling of $\vw$. Finally, note that if we do not wish to use any regularization, we define $\Omega = \mathbb{R}^d$ and the corresponding projection is the identity.

\paragraph{Problem Formulation.}
The learning task can be expressed as the problem $(\mathcal{P})$ of finding a feasible vector of parameters $\wstar \in \Omega$ that minimizes $f$:
\begin{equation} \tag{$\mathcal{P}$} \label{eq:main_problem}
    \wstar \in \argmin\limits_{\vw \in \Omega} f(\vw).
\end{equation}
We refer to the minimum value of $f$ over $\Omega$ as $\fstar$: $\fstar \triangleq \min_{\vw \in \Omega} f(\vw)$.

\paragraph{Proximal Perspective of Projected Stochastic Gradient Descent.}
In order to best introduce BORAT, we first detail the proximal interpretation of projected stochastic gradient descent (PSGD). The PSGD algorithm can be seen as solving a sequence of proximal problems. Within each proximal problem, a minimisation is performed over an approximate local model of the loss. This approximation is the first order Taylor's expansion of $\ell_{z_t}$ around the current iterate and a proximal term. At time step $t$, the PSGD proximal problem has the form: 
\begin{equation}\label{eq:proxsgd}
\vw_{t+1} = \argmin_{\vw \in \Omega}\left\{\frac{1}{2\eta_t}\|\vw - \vw_{t}\|^2 + \ell_{z_t}(\vw_t) +
\nabla \ell_{z_t}(\vw_t)^\top(\vw - \vw_{t}) \right\}.
\end{equation}
Here $\vw_t$ is the current iterate, $z_t$ is the index of the sample chosen and $\eta_t$ is the learning rate. For convex $\Omega$, problem (\ref{eq:proxsgd}) can be solved in two steps: first solving the unconstrained problem and then using Euclidean projection onto $\Omega$. Setting $\Omega = \mathbb{R}^d$ removes the need for projection and we recover SGD. When clear from the context we will use SGD to refer to both projected and un-projected variants. To solve the unconstrained problem one only needs to set the gradient to zero to recover the familiar closed form SGD update. 

\section{The BORAT Algorithm}
In this section we detail the BORAT algorithm. We start by introducing BORAT's proximal problem that is exactly solved at each iteration. We explain its advantages and disadvantages in relation to the SGD proximal problem (\ref{eq:proxsgd}). Each BORAT proximal problem is best solved in the dual. Hence we next introduce the dual problem, which permits a far more efficient solution due to its low dimensionality.  We then consider a special case of BORAT with minimal bundle size which we call Adaptive Learning-rates for Interpolation with Gradients (ALI-G). ALI-G permits a closed form solution and results in an automatically scaled gradient descent step. Specifically, it recovers a stochastic variant of the Polyak step size \citep{Polyak1969}, which offers competitive results in practice. This special case is used extensively in our experiments and in our analysis to establish the convergence rate of BORAT. Lastly, we detail our novel direct method for efficiently solving the general dual problem. This algorithm exploits the small size of the bundle to compute the exact optimum by solving a finite number of linear systems, removing the need for an inner iterative optimisation algorithm.

\subsection{Primal Problem}\label{Primal_Problem}
For tackling problems of type (\ref{eq:main_problem}) we identify two deficiencies in the proximal view of SGD (\ref{eq:proxsgd}). First, the approximation of the loss permits negative values even though the loss for (\ref{eq:main_problem}) is defined to be non-negative. Second, the accuracy of a linear model quickly deteriorates for functions with high curvature away from the site of the approximation. Due to this crude model, the selection of $\eta_t$ for all $t$ is critical for achieving good performance with SGD. BORAT aims to address these deficiencies by using a model composed of a point-wise maximum over $N$ linear approximations to better model the loss. One of the linear approximations is chosen to be a constant function equal to the loss lower bound, that is, 0. By including this linear approximation, we address the first deficiency. The second deficiency is addressed by making use of the remaining $N-1$ linear approximations. These extra approximations allow us to model variation over the parameter space and positive curvature of the loss. A model of this form in combination with a proximal term is commonly known as a bundle of size $N$. The main disadvantage of bundles is that they require multiple gradient evaluations to be performed and then held in memory. Hence we in this work only consider $N\leq5$, except where mentioned otherwise. Each linear approximation of the loss is constructed at a point $\hat{\w}_{t}^n$ using a different loss function $\ell_{z_t^n}$. The subscript $t$ indicates the iteration number, and $n$ indexes over the $N$ linear approximations. With this notation we first introduce a bundle of size 1 as:
\begin{align}\label{single_linear_approx}
\w_{t+1} = \argmin_{\vw \in \Omega}\left\{\frac{1}{2\eta}\|\vw - \vw_{t}\|^2 + \ell_{z_{t}^1}(\hat{\w}_{t}^1) + \nabla \ell_{z_{t}^1}(\hat{\w}_{t}^1)^\top(\w - \hat{\w}_{t}^1)  \right\}.
\end{align}
If we set $\hat{\w}_{t}^1$ to $\w_t$ we recover the SGD proximal problem. Thus SGD effectively uses a bundle of size $N=1$. We next introduce an expanded expression for a bundle of size $N$, before showing how to convert this into the compact form of  $\max_{n\in[N]}\left\{\bm{a}^{n\top} (\w-\w_t)+b^n\right\}$. Within a bundle each linear approximation is formed around a different point $\hat{\w}_{t}^n$. Hence in order to write each linear approximation in the aforementioned compact form we split each linear term in two. The first piece is a constant term, that does not depend on $\w$, and is a multiple of the distance between the current iterate and the centre of each approximation $\hat{\w}_{t}^n - \w_t$. The second term depends on the distance $\w-\w_t$, for all linear approximations. This gives the following expanded form for a bundle of size $N$ as:
\begin{align}\label{max_of_linear_approx}
\max_{n\in[N]}\left\{\ell_{z_{t}^n}(\hat{\w}_{t}^n) - \nabla \ell_{z_{t}^n}(\hat{\w}_{t}^n)^\top (\hat{\w}_{t}^n- \w_t)+ \nabla \ell_{z_{t}^n}(\hat{\w}_{t}^n)^\top(\w - \w_{t}) \right\},
\end{align}
where we use the notation $[N]$ to define the set of integers $\{1,...,N\}$. If we define $b_{t}^n = \ell_{z_{t}^n}(\hat{\w}_{t}^n) - \nabla \ell_{z_{t}^n}(\hat{\w}_{t}^n)^\top (\hat{\w}_{t}^n- \w_t)$, we can thus simplify the expression into the desired compact form. We now introduce the BORAT proximal problem at time $t$ with a bundle of size $N$, which can be stated as:
\begin{align}\label{primal}
\w_{t+1} = \argmin_{\w \in \Omega}\left\{ \frac{1}{2\eta}\|\w - \w_{t}\|^2 + \max_{n\in[N]}
\left\{\nabla \ell_{z_{t}^n}(\hat{\w}_{t}^n)^\top(\w - \w_{t}) + b_{t}^n\right\} \right\}.
\end{align}
For BORAT we always set $\hat{\w}_{t}^1 = \w_t$. We additionally use the last linear approximation to enforce the lower bound on the loss. This is done by setting $\nabla \ell_{z_t}(\hat{\w}_{t}^N) = [0,...,0]^\top$, $b_{t}^N = \text{B} = 0$. We give details on how we select $\hat{\w}_{t}^n$ for $n\in\{2,...,N-1\}$ in Section \ref{Bundle_Construction}. Thus each bundle is composed of $N-1$ linear approximations of the function, and the lower bound on the loss. These linear approximations of the loss need to be stored in memory during each step. Hence, in order to fit on a single GPU we only consider small bundle sizes in this work $(N \leq 5)$. For clarity we depict a 1D example for a bundle with $N=3$ in Figure \ref{fig:nr_borat}.\\
Unlike SGD, the BORAT proximal problem (\ref{primal}) is not smooth and hence cannot be solved by simply setting the derivatives to zero. Instead we choose to solve each proximal problem in the dual.

\begin{figure}[H]
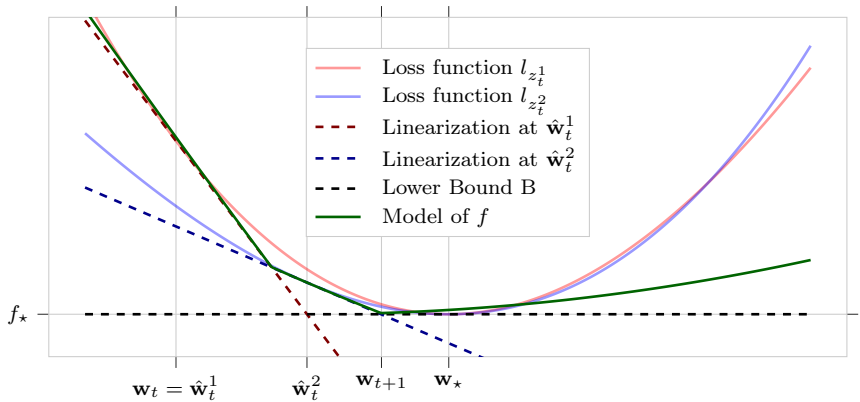

\centering
\scriptsize

\caption{\em Illustration of a BORAT bundle ($N=3$) in 1D, shown in green. Two stochastic samples $\ell_{z_t^1}$ and $\ell_{z_t^2}$ of the loss function $f$ are shown in red and blue (solid lines). The bundle is formed of the max of three linear approximations (dashed lines) and a proximal term. Two of these linear approximations are formed using the loss functions $\ell_{z_t^1}$ and $\ell_{z_t^2}$, and the last enforces the known lower bound on the loss. Here the BORAT approximation gives a more accurate model than approximation used by ALI-G, which would only include the linearization at $\hat{w}_t^1$ and the lower bound B. In this simple example this improved accuracy allows for a larger step to be taken towards the minimum.}
\label{fig:nr_borat}
\end{figure}

\subsection{Dual Problem}
The dual of (\ref{primal}) is a constrained concave quadratic maximisation over $N$ dual variables $\alpha^1, ..., \alpha^N$, and can be concisely written as follows (see supplementary material for derivation):
\begin{align}\label{dual}
\bm{\alpha}_t = \argmax_{\bm{\alpha} \in \Delta_N}D(\bm{\alpha}), \hspace{0.1cm}\text{where}\hspace{0.5cm} D(\bm{\alpha}) =  -\frac{\eta}{2}\bm{\alpha}^\top A_{t}^\top A_{t}\bm{\alpha} + \bm{\alpha}^\top \bm{b}_{t}.
\end{align}
Here $A_{t}$ is a $N \times d$ matrix whose $n^{th}$ row is $\nabla \ell_{z_t}(\hat{\w}_{t}^n)$. We define $\bm{b}_{t} = [b_{t}^1,...,b_{t}^N]^\top$, $\bm{\alpha} = [\alpha^{1}, \alpha^{2}, \dots, \alpha^{N}]^\top$ and $\Delta_N$ is a probability simplex over the $N$ variables. The dual problem (\ref{dual}) has a number of features that make it more appealing for optimisation than the primal (\ref{primal}). First, the primal problem is defined over the parameters space $\w\in\mathbb{R}^d$, where $p$ is in the order thousands if not millions for modern DNNs. In contrast, the dual variables are of dimension N, where is $N$ typically a small number $>10$ due to the memory requirements. Second, the dual is smooth and hence allows for faster convergence with standard optimisation techniques. Furthermore, as will be seen shortly, we use the fact that the dual feasible region is a tractable probability simplex to design a customised algorithm for its solution. We detail this algorithm in this Section \ref{dual_solution}. Once we have found the dual solution $\alpha_t$, we recover the following update from the KKT conditions:
\begin{align}\label{update}
 \w_{t+1} = \w_t - \eta A_{t}\bm{\alpha}_t. 
\end{align}
The form of the update (\ref{update}) deserves some attention. Since each row of $A_t$ is either the gradient of the loss $\ell_{z_t^n}$ or a zero vector, and $\alpha_t$ belongs to the probability simplex, the update step moves in the direction of a non-negative linear combination of negative gradients $-\nabla \ell_{z_t^n}(\hat{\w}_{t}^n)$. Due to the definitions of $\nabla \ell_{z_t}(\hat{\w}_{t}^N) = [0,...,0]^\top$ and $b_{t}^N = 0$, any weight given to $\alpha^N$ reduces the magnitude of the resulting step. This has the effect that as the loss value gets close to zero BORAT automatically decreases the size of the step taken.

\subsection{ALI-G (N=2)}\label{sec:alig}
We now consider a special case of BORAT with $N=2$. Here the bundle is the point wise maximum over the linear approximation of the loss around the current point and the global lower bound B, which we assume is 0. This special case is worthy of extra attention for the following four reasons. First, this special case only requires one gradient evaluation per step and has a similar time complexity to SGD. Second, it admits a closed form solution. Third, we use this special case extensively in our analysis of the convergence rate of BORAT. Fourth, given the definitions $\hat{\w}_{t}^1=\w_t$ and $\hat{\w}_{t}^N=0$, if we set $N=2$ we recover an algorithm that simply scales the SGD learning rate. Specifically, it automatically scales down a maximal learning rate $\eta$ by a factor $\alpha^1\in[0,1]$ to an appropriate value close to optimality. This is clear from the simplified version of Equation (\ref{update}), which has the following form:
\begin{align}\label{alig_update}
 \w_{t+1} = \w_t -  \alpha^1 \eta \nabla l_{z_{t}}(\vw_t).
\end{align}
Hence we will call this special case Adaptive Learning-rates for Interpolation with Gradients (ALI-G). For ALI-G the primal problem (\ref{primal}) simplifies to the following: 
\begin{align}\label{alig_prox}
\argmin_{\w \in \Omega}\left\{\frac{1}{2\eta}\|\w - \w_{t}\|^2 + \max\{\text{0},\ell_{z_t}(\w_t) +
\nabla \ell_{z_t}(\w_t)^\top(\w - \w_{t})\}\right\}.
\end{align}
Likewise, the dual problem (\ref{dual}) can be reduced to the following:
\begin{align}\label{alig_dual}
\alpha^1 = \argmax_{\alpha^1 \in [0,1]}\left\{ -\frac{\eta}{2}\|\alpha^1\nabla l_{z_{t}}(\vw_t)\|^2 + \alpha^1l_{z_{t}}(\vw_t) \right\}.
\end{align}
The ALI-G dual is a maximisation over a constrained concave function in one dimension. Hence we can obtain the optimal point by projecting the unconstrained solution on to the feasible region. This results in the following closed form solution:
\begin{align}\label{alig_alpha}
\alpha^1 = \min\left\{\frac{l_{z_{t}}(\vw_t)}{\eta\|\nabla l_{z_{t}}(\vw_t)\|^2},1\right\}.
\end{align}
This value of $\alpha^1$ is then used in (\ref{alig_update}). The ALI-G update can be viewed as a stochastic analog of the Polyak step size \citep{Polyak1969} with the addition of a maximal value $\eta$. Recall that, from the interpolation assumption, we have $\fstar=0$. The ALI-G update is computationally cheap with the evaluation of $\|\nabla l_{z_{t}}(\vw_t)\|^2$ being the only extra computation required over SGD. Hence when the interpolation assumption holds we suggest that ALI-G can be easily used in place of SGD.

\subsection{The Advantage of Bundles with more than Two Pieces}\label{General Case}

While ALI-G has many favourable qualities, its local model of the loss is still crude. We next give two motivating examples to help demonstrate why using a more complex model of the loss often increases the robustness to $\eta$. Thus it may prove useful to use larger values of $N$ in  settings that are sensitive to step size $\eta$.\\
In the convex setting any function can be perfectly modelled by the point-wise maximum over an infinite number of linear approximations. While intractable, performing a minimisation over this model would recover the true optimum by definition. With this perfect model any value of $\eta$ could be used. Setting $\eta$ to a large enough value would recover the optimum in a single step. This example demonstrates that, at least asymptotically as the accuracy of the local model increases we can expect a reduced dependence on the correct scale of the step size.\\

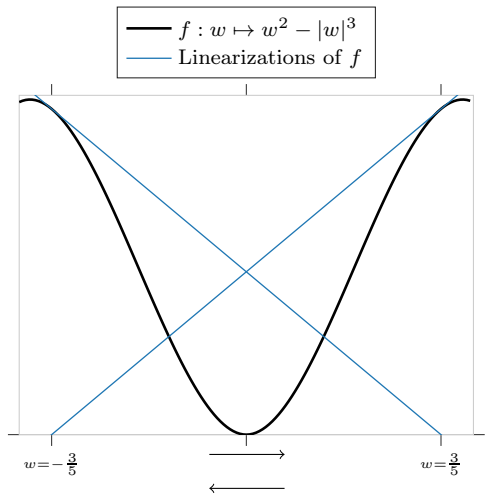
\begin{figure}[H]
\centering
\scriptsize
\begin{tikzpicture}

\definecolor{color0}{rgb}{0.12156862745098,0.466666666666667,0.705882352941177}

\begin{axis}[
axis line style={white!80!black},
compat=newest,
height=0.4\textwidth,
legend cell align={left},
legend style={at={(0.5,1.05)}, anchor=south, draw=lightgray!20.0!black},
tick align=outside,
tick pos=both,
width=0.5\textwidth,
x grid style={white!80!black},
x grid style={white},
xmajorgrids,
xmajorticks=true,
xmin=-0.7, xmax=0.7,
xtick style={color=white!15!black},
xtick={-0.6,0,0.6},
xticklabels={\(\displaystyle \scriptscriptstyle w=-\frac{3}{5}\),\Arrow{1.0cm},\(\displaystyle \scriptscriptstyle w=\frac{3}{5}\)},
y grid style={white!80!black},
ymajorgrids,
ymajorticks=true,
ymin=0, ymax=0.15,
yminorticks=false,
ytick style={color=white!15!black},
ytick={0},
yticklabels={\phantom{a}}
]
\addplot [line width=1pt, black]
table {%
-0.699999988079071 0.14699998497963
-0.689999997615814 0.1475909948349
-0.680000007152557 0.147967994213104
-0.670000016689301 0.148137003183365
-0.660000026226044 0.148104012012482
-0.649999976158142 0.147874981164932
-0.639999985694885 0.147455990314484
-0.629999995231628 0.146852999925613
-0.620000004768372 0.146072000265121
-0.610000014305115 0.14511901140213
-0.600000023841858 0.143999993801117
-0.589999973773956 0.142720997333527
-0.579999983310699 0.141287997364998
-0.569999992847443 0.139706999063492
-0.560000002384186 0.13798400759697
-0.550000011920929 0.13612499833107
-0.540000021457672 0.134136006236076
-0.529999971389771 0.132022991776466
-0.519999980926514 0.129792004823685
-0.509999990463257 0.127448990941048
-0.5 0.125
-0.490000009536743 0.1224509999156
-0.479999989271164 0.119808003306389
-0.469999998807907 0.117077000439167
-0.46000000834465 0.114264003932476
-0.449999988079071 0.111374996602535
-0.439999997615814 0.108415998518467
-0.430000007152557 0.105392999947071
-0.419999986886978 0.102311998605728
-0.409999996423721 0.0991789996623993
-0.400000005960464 0.0960000082850456
-0.389999985694885 0.0927809923887253
-0.379999995231628 0.089528001844883
-0.370000004768372 0.0862470045685768
-0.360000014305115 0.0829440057277679
-0.349999994039536 0.0796249955892563
-0.340000003576279 0.0762960016727448
-0.330000013113022 0.0729630067944527
-0.319999992847443 0.0696319937705994
-0.310000002384186 0.066309005022049
-0.300000011920929 0.063000001013279
-0.28999999165535 0.0597109943628311
-0.280000001192093 0.0564480014145374
-0.270000010728836 0.0532170012593269
-0.259999990463257 0.0500239990651608
-0.25 0.046875
-0.239999994635582 0.0437759980559349
-0.230000004172325 0.0407330021262169
-0.219999998807907 0.0377519987523556
-0.209999993443489 0.0348389968276024
-0.200000002980232 0.0320000015199184
-0.189999997615814 0.0292409993708134
-0.180000007152557 0.0265680011361837
-0.170000001788139 0.0239870008081198
-0.159999996423721 0.0215039998292923
-0.150000005960464 0.0191249996423721
-0.140000000596046 0.0168559998273849
-0.129999995231628 0.0147029999643564
-0.119999997317791 0.0126719996333122
-0.109999999403954 0.0107690002769232
-0.100000001490116 0.00900000054389238
-0.0900000035762787 0.00737100001424551
-0.0799999982118607 0.00588799966499209
-0.0700000002980232 0.00455700000748038
-0.0599999986588955 0.00338399992324412
-0.0500000007450581 0.00237500015646219
-0.0399999991059303 0.00153599993791431
-0.0299999993294477 0.000873000011779368
-0.0199999995529652 0.000391999987186864
-0.00999999977648258 9.89999971352518e-05
1.11022302462516e-16 1.23259516440783e-32
0.00999999977648258 9.89999971352518e-05
0.0199999995529652 0.000391999987186864
0.0299999993294477 0.000873000011779368
0.0399999991059303 0.00153599993791431
0.0500000007450581 0.00237500015646219
0.0599999986588955 0.00338399992324412
0.0700000002980232 0.00455700000748038
0.0799999982118607 0.00588799966499209
0.0900000035762787 0.00737100001424551
0.100000001490116 0.00900000054389238
0.109999999403954 0.0107690002769232
0.119999997317791 0.0126719996333122
0.129999995231628 0.0147029999643564
0.140000000596046 0.0168559998273849
0.150000005960464 0.0191249996423721
0.159999996423721 0.0215039998292923
0.170000001788139 0.0239870008081198
0.180000007152557 0.0265680011361837
0.189999997615814 0.0292409993708134
0.200000002980232 0.0320000015199184
0.209999993443489 0.0348389968276024
0.219999998807907 0.0377519987523556
0.230000004172325 0.0407330021262169
0.239999994635582 0.0437759980559349
0.25 0.046875
0.259999990463257 0.0500239990651608
0.270000010728836 0.0532170012593269
0.280000001192093 0.0564480014145374
0.28999999165535 0.0597109943628311
0.300000011920929 0.063000001013279
0.310000002384186 0.066309005022049
0.319999992847443 0.0696319937705994
0.330000013113022 0.0729630067944527
0.340000003576279 0.0762960016727448
0.349999994039536 0.0796249955892563
0.360000014305115 0.0829440057277679
0.370000004768372 0.0862470045685768
0.379999995231628 0.089528001844883
0.389999985694885 0.0927809923887253
0.400000005960464 0.0960000082850456
0.409999996423721 0.0991789996623993
0.419999986886978 0.102311998605728
0.430000007152557 0.105392999947071
0.439999997615814 0.108415998518467
0.449999988079071 0.111374996602535
0.46000000834465 0.114264003932476
0.469999998807907 0.117077000439167
0.479999989271164 0.119808003306389
0.490000009536743 0.1224509999156
0.5 0.125
0.509999990463257 0.127448990941048
0.519999980926514 0.129792004823685
0.529999971389771 0.132022991776466
0.540000021457672 0.134136006236076
0.550000011920929 0.13612499833107
0.560000002384186 0.13798400759697
0.569999992847443 0.139706999063492
0.579999983310699 0.141287997364998
0.589999973773956 0.142720997333527
0.600000023841858 0.143999993801117
0.610000014305115 0.14511901140213
0.620000004768372 0.146072000265121
0.629999995231628 0.146852999925613
0.639999985694885 0.147455990314484
0.649999976158142 0.147874981164932
0.660000026226044 0.148104012012482
0.670000016689301 0.148137003183365
0.680000007152557 0.147967994213104
0.689999997615814 0.1475909948349
};
\addlegendentry{$f: w \mapsto w^2 - |w|^3$}
\addplot [line width=0.5pt, color0]
table {%
-0.699999988079071 0.15599998831749
-0.689999997615814 0.154799997806549
-0.680000007152557 0.153599992394447
-0.670000016689301 0.152399986982346
-0.660000026226044 0.151199996471405
-0.649999976158142 0.149999991059303
-0.639999985694885 0.148799985647202
-0.629999995231628 0.147599995136261
-0.620000004768372 0.146399989724159
-0.610000014305115 0.145199999213219
-0.600000023841858 0.143999993801117
-0.589999973773956 0.142799988389015
-0.579999983310699 0.141599982976913
-0.569999992847443 0.140399992465973
-0.560000002384186 0.139199987053871
-0.550000011920929 0.137999996542931
-0.540000021457672 0.136799991130829
-0.529999971389771 0.135599985718727
-0.519999980926514 0.134399995207787
-0.509999990463257 0.133199989795685
-0.5 0.131999984383583
-0.490000009536743 0.130799993872643
-0.479999989271164 0.129599988460541
-0.469999998807907 0.1283999979496
-0.46000000834465 0.127199992537498
-0.449999988079071 0.125999987125397
-0.439999997615814 0.124799989163876
-0.430000007152557 0.123599991202354
-0.419999986886978 0.122399985790253
-0.409999996423721 0.121199987828732
-0.400000005960464 0.11999998986721
-0.389999985694885 0.118799984455109
-0.379999995231628 0.117599993944168
-0.370000004768372 0.116399988532066
-0.360000014305115 0.115199990570545
-0.349999994039536 0.113999992609024
-0.340000003576279 0.112799987196922
-0.330000013113022 0.111599996685982
-0.319999992847443 0.11039999127388
-0.310000002384186 0.109199985861778
-0.300000011920929 0.107999995350838
-0.28999999165535 0.106799989938736
-0.280000001192093 0.105599984526634
-0.270000010728836 0.104399994015694
-0.259999990463257 0.103199988603592
-0.25 0.101999990642071
-0.239999994635582 0.10079999268055
-0.230000004172325 0.0995999872684479
-0.219999998807907 0.0983999893069267
-0.209999993443489 0.0971999913454056
-0.200000002980232 0.0959999859333038
-0.189999997615814 0.0947999879717827
-0.180000007152557 0.0935999900102615
-0.170000001788139 0.0923999920487404
-0.159999996423721 0.0911999866366386
-0.150000005960464 0.0899999886751175
-0.140000000596046 0.0887999832630157
-0.129999995231628 0.0875999927520752
-0.119999997317791 0.0863999873399734
-0.109999999403954 0.0851999893784523
-0.100000001490116 0.0839999914169312
-0.0900000035762787 0.08279999345541
-0.0799999982118607 0.0815999880433083
-0.0700000002980232 0.0803999900817871
-0.0599999986588955 0.079199992120266
-0.0500000007450581 0.0779999867081642
-0.0399999991059303 0.0767999887466431
-0.0299999993294477 0.0755999833345413
-0.0199999995529652 0.0743999853730202
-0.00999999977648258 0.073199987411499
1.11022302462516e-16 0.0719999894499779
0.00999999977648258 0.0707999914884567
0.0199999995529652 0.0695999935269356
0.0299999993294477 0.0683999881148338
0.0399999991059303 0.0671999827027321
0.0500000007450581 0.0659999847412109
0.0599999986588955 0.0647999867796898
0.0700000002980232 0.0635999888181686
0.0799999982118607 0.0623999908566475
0.0900000035762787 0.0611999854445457
0.100000001490116 0.0599999874830246
0.109999999403954 0.0587999895215034
0.119999997317791 0.0575999841094017
0.129999995231628 0.0563999861478806
0.140000000596046 0.0551999881863594
0.150000005960464 0.0539999902248383
0.159999996423721 0.0527999922633171
0.170000001788139 0.0515999868512154
0.180000007152557 0.0503999888896942
0.189999997615814 0.0491999909281731
0.200000002980232 0.0479999855160713
0.209999993443489 0.0467999875545502
0.219999998807907 0.0455999821424484
0.230000004172325 0.0443999841809273
0.239999994635582 0.0431999862194061
0.25 0.041999988257885
0.259999990463257 0.0407999902963638
0.270000010728836 0.0395999923348427
0.280000001192093 0.0383999869227409
0.28999999165535 0.0371999889612198
0.300000011920929 0.035999983549118
0.310000002384186 0.0347999855875969
0.319999992847443 0.0335999876260757
0.330000013113022 0.032399982213974
0.340000003576279 0.0311999842524529
0.349999994039536 0.0299999862909317
0.360000014305115 0.0287999883294106
0.370000004768372 0.0275999829173088
0.379999995231628 0.0263999849557877
0.389999985694885 0.0251999869942665
0.400000005960464 0.0239999890327454
0.409999996423721 0.0227999910712242
0.419999986886978 0.0215999931097031
0.430000007152557 0.0203999951481819
0.439999997615814 0.0191999897360802
0.449999988079071 0.017999991774559
0.46000000834465 0.0167999863624573
0.469999998807907 0.0155999809503555
0.479999989271164 0.014399990439415
0.490000009536743 0.0131999850273132
0.5 0.0119999796152115
0.509999990463257 0.0107999891042709
0.519999980926514 0.00959998369216919
0.529999971389771 0.00839999318122864
0.540000021457672 0.0071999728679657
0.550000011920929 0.00599998235702515
0.560000002384186 0.0047999769449234
0.569999992847443 0.00359998643398285
0.579999983310699 0.0023999810218811
0.589999973773956 0.00119997560977936
0.600000023841858 -1.49011611938477e-08
0.610000014305115 -0.00120002031326294
0.620000004768372 -0.00240001082420349
0.629999995231628 -0.00360001623630524
0.639999985694885 -0.00480000674724579
0.649999976158142 -0.00600001215934753
0.660000026226044 -0.00720001757144928
0.670000016689301 -0.00840000808238983
0.680000007152557 -0.00960001349449158
0.689999997615814 -0.0108000040054321
};
\addlegendentry{Linearizations of $f$}
\addplot [line width=0.5pt, color0, forget plot]
table {%
-0.699999988079071 -0.0120000094175339
-0.689999997615814 -0.0108000040054321
-0.680000007152557 -0.00960001349449158
-0.670000016689301 -0.00840000808238983
-0.660000026226044 -0.00720001757144928
-0.649999976158142 -0.00600001215934753
-0.639999985694885 -0.00480000674724579
-0.629999995231628 -0.00360001623630524
-0.620000004768372 -0.00240001082420349
-0.610000014305115 -0.00120002031326294
-0.600000023841858 -1.49011611938477e-08
-0.589999973773956 0.00119997560977936
-0.579999983310699 0.0023999810218811
-0.569999992847443 0.00359998643398285
-0.560000002384186 0.0047999769449234
-0.550000011920929 0.00599998235702515
-0.540000021457672 0.0071999728679657
-0.529999971389771 0.00839999318122864
-0.519999980926514 0.00959998369216919
-0.509999990463257 0.0107999891042709
-0.5 0.0119999796152115
-0.490000009536743 0.0131999850273132
-0.479999989271164 0.014399990439415
-0.469999998807907 0.0155999809503555
-0.46000000834465 0.0167999863624573
-0.449999988079071 0.017999991774559
-0.439999997615814 0.0191999897360802
-0.430000007152557 0.0203999951481819
-0.419999986886978 0.0215999931097031
-0.409999996423721 0.0227999910712242
-0.400000005960464 0.0239999890327454
-0.389999985694885 0.0251999869942665
-0.379999995231628 0.0263999849557877
-0.370000004768372 0.0275999829173088
-0.360000014305115 0.0287999883294106
-0.349999994039536 0.0299999862909317
-0.340000003576279 0.0311999842524529
-0.330000013113022 0.032399982213974
-0.319999992847443 0.0335999876260757
-0.310000002384186 0.0347999855875969
-0.300000011920929 0.035999983549118
-0.28999999165535 0.0371999889612198
-0.280000001192093 0.0383999869227409
-0.270000010728836 0.0395999923348427
-0.259999990463257 0.0407999902963638
-0.25 0.041999988257885
-0.239999994635582 0.0431999862194061
-0.230000004172325 0.0443999841809273
-0.219999998807907 0.0455999821424484
-0.209999993443489 0.0467999875545502
-0.200000002980232 0.0479999855160713
-0.189999997615814 0.0491999909281731
-0.180000007152557 0.0503999888896942
-0.170000001788139 0.0515999868512154
-0.159999996423721 0.0527999922633171
-0.150000005960464 0.0539999902248383
-0.140000000596046 0.0551999881863594
-0.129999995231628 0.0563999861478806
-0.119999997317791 0.0575999841094017
-0.109999999403954 0.0587999895215034
-0.100000001490116 0.0599999874830246
-0.0900000035762787 0.0611999854445457
-0.0799999982118607 0.0623999908566475
-0.0700000002980232 0.0635999888181686
-0.0599999986588955 0.0647999867796898
-0.0500000007450581 0.0659999847412109
-0.0399999991059303 0.0671999827027321
-0.0299999993294477 0.0683999881148338
-0.0199999995529652 0.0695999935269356
-0.00999999977648258 0.0707999914884567
1.11022302462516e-16 0.0719999894499779
0.00999999977648258 0.073199987411499
0.0199999995529652 0.0743999853730202
0.0299999993294477 0.0755999833345413
0.0399999991059303 0.0767999887466431
0.0500000007450581 0.0779999867081642
0.0599999986588955 0.079199992120266
0.0700000002980232 0.0803999900817871
0.0799999982118607 0.0815999880433083
0.0900000035762787 0.08279999345541
0.100000001490116 0.0839999914169312
0.109999999403954 0.0851999893784523
0.119999997317791 0.0863999873399734
0.129999995231628 0.0875999927520752
0.140000000596046 0.0887999832630157
0.150000005960464 0.0899999886751175
0.159999996423721 0.0911999866366386
0.170000001788139 0.0923999920487404
0.180000007152557 0.0935999900102615
0.189999997615814 0.0947999879717827
0.200000002980232 0.0959999859333038
0.209999993443489 0.0971999913454056
0.219999998807907 0.0983999893069267
0.230000004172325 0.0995999872684479
0.239999994635582 0.10079999268055
0.25 0.101999990642071
0.259999990463257 0.103199988603592
0.270000010728836 0.104399994015694
0.280000001192093 0.105599984526634
0.28999999165535 0.106799989938736
0.300000011920929 0.107999995350838
0.310000002384186 0.109199985861778
0.319999992847443 0.11039999127388
0.330000013113022 0.111599996685982
0.340000003576279 0.112799987196922
0.349999994039536 0.113999992609024
0.360000014305115 0.115199990570545
0.370000004768372 0.116399988532066
0.379999995231628 0.117599993944168
0.389999985694885 0.118799984455109
0.400000005960464 0.11999998986721
0.409999996423721 0.121199987828732
0.419999986886978 0.122399985790253
0.430000007152557 0.123599991202354
0.439999997615814 0.124799989163876
0.449999988079071 0.125999987125397
0.46000000834465 0.127199992537498
0.469999998807907 0.1283999979496
0.479999989271164 0.129599988460541
0.490000009536743 0.130799993872643
0.5 0.131999984383583
0.509999990463257 0.133199989795685
0.519999980926514 0.134399995207787
0.529999971389771 0.135599985718727
0.540000021457672 0.136799991130829
0.550000011920929 0.137999996542931
0.560000002384186 0.139199987053871
0.569999992847443 0.140399992465973
0.579999983310699 0.141599982976913
0.589999973773956 0.142799988389015
0.600000023841858 0.143999993801117
0.610000014305115 0.145199999213219
0.620000004768372 0.146399989724159
0.629999995231628 0.147599995136261
0.639999985694885 0.148799985647202
0.649999976158142 0.149999991059303
0.660000026226044 0.151199996471405
0.670000016689301 0.152399986982346
0.680000007152557 0.153599992394447
0.689999997615814 0.154799997806549
};
\end{axis}

\end{tikzpicture}

\caption{\em A simple example where the ALI-G step-size oscillates due to non-convexity. For this problem, ALI-G only converges ifs its maximal learning-rate $\eta$ is less than $10$. By contrast for the same example BORAT with $N>2$ converges for all values of $\eta$. Additionally for $\eta\geq10$ it converges to the optimum in a single update.}
\label{fig:rsi}
\end{figure}
\noindent
Figure \ref{fig:rsi} provides a non-convex motivating example for use of larger values of $N$. Here we demonstrate a 1D symmetric function where ALI-G does not converge for large $\eta$. Instead it oscillates between the two values $w=-3/5$ and $w=3/5$. However, if we were to use a bundle with a $N\geq3$ our model of the loss would include both the linear approximations at $w= -3/5$ and $w= 3/5$ simultaneously and hence when minimising over this model we converge to the optimum for any value of $\eta$. While this is a carefully constructed synthetic example it highlights why we would expect a more accurate model of the loss can help to reduce the dependence on the step size.

\subsection{Selecting Additional Linear Approximations or the Bundle}\label{Bundle_Construction}
When constructing bundles of size larger than two, we are faced with two design decisions regarding how to select additional linear approximations to add to the bundle. First, where in parameter space should we construct the additional linear approximations $\hat{\w}_{t}^n$? And second, should we use the same mini-batch of data when constructing the stochastic linear approximations, or should we sample a new batch to evaluate each linear approximation? 

\paragraph{Selecting $\hat{\w}_{t}^n$.}

Ideally we would select the location of the linear approximations $\hat{\w}_{t}^n$ for $ n\in \{2,...,N-1\}$ in order to maximise the progress made towards $\wstar$ at each step. However, without the knowledge of $\wstar$ a priori, due to the high dimensional and non-convex nature of problem (\ref{eq:main_problem}) this is infeasible. Instead we make use of a a heuristic. Inspired by the work of previous bundle methods for convex problems \citep{Smola2007,asi2019} we select $\hat{\w}_{t}^n$ using the following method:
\begin{align}\label{eq:bundle}
\hat{\w}_{t}^{n} = \argmin_{\w \in \mathbb{R}^d}\left\{ \frac{1}{2\eta}\|\w - \w_{t}\|^2 + \max_{k\in[n-1]}
\left\{\nabla \ell_{z_{t}^k}(\hat{\w}_{t}^k)^\top(\w - \w_{t}) + b_{t}^k\right\} \right\}.
\end{align}
In other words we construct the bundle by recursively adding linear approximations centred at the current optimum. This method of selecting additional linear approximations is appealing as it requires no extra hyperparameters and helps refine the approximation in the region of parameter space that would be explored by the next update.

\paragraph{Re-sampling $z$ for additional linear approximations.}
When constructing additional linear approximations we choose to re-sample $z$. Concretely, we use a new mini-batch of data to construct each stochastic linear approximation. While it is possible to construct all $N-1$ non-zero linear approximations using the same batch of data we find this does not work well in practice. Indeed, such a method behaves similarly to taking multiple consecutive steps of SGD on the same mini-batch, which tends to produce poor optimisation performance.

\paragraph{Summary.}
We now summarise the bundle construction procedure for $N>2$. We construct a bundle around $\w_t$ by first using two linear approximations, one centred at $\w_t$ and the second given by a known lower bound on the loss. We then sequentially add linear approximations until we have $N$. These extra linear approximations are constructed one at a time using new batches of data and centred around the point that is the current minimizer of the bundle. We note that each parameter update of BORAT uses $N-1$ batches of data. Therefore BORAT updates the parameters $N-1$ fewer times than SGD in an epoch (given the same batch-size). Once the bundle is fully constructed we update $\w_t$. At this stage we apply momentum (if enabled) and project back on the feasible set $\Omega$. \\
The construction of the bundle requires solving a minimization problem for each newly added piece when $N\geq2$. Therefore it is critical that such a problem gets solved very efficiently. We next detail how we do this by solving the  corresponding dual problem for $N\geq2$. 

\subsection{Efficient Dual Algorithm to Compute $N\geq2$ Linear Pieces}\label{dual_solution}
In the general case $(N>2)$, the dual has more than 2 degrees of freedom and can not be written as a 1D minimisation. Thus the method derived for the case $N=2$ is no longer applicable. This means it is not possible to obtain a simple closed form update. We must instead run an inner optimisation to solve (\ref{dual}) at each step. Many methods exist for maximising a concave quadratic objective over a simplex. Two algorithms particularly well suited to problems of the form (\ref{dual}) are \citep{Frank1956} or Homotopy Methods \citep{Bach2011}. However, we propose a novel algorithm that exploits the fact that $N$ is small to find the maximum directly. This method decomposes the problem of solving (\ref{dual}) into several sub-problems which provides two computational conveniences. First, the BORAT algorithm repeatedly searches for solutions to a bundle with only one newly added linear piece since the last search. As one might expect, this task shares a great number of sub-problems with the previous solution and allows for much of the computation to be reused. Second, our dual algorithm allows for a parallel implementation, which makes it very fast on the hardware commonly used for deep learning. To illustrate the efficiency of our dual method, the run time of the dual solution, that is finding $\bm{\alpha}_t$ once we have constructed $(\ref{dual})$, takes less than 5\% of the time spent in the call of the optimiser. Note due to the large size of the the networks and the small size of $N$ the majority of the call time is dominated computing $A_{t}^\top A_{t}$ and $A_{t}\bm{\alpha}_t$. \\
We now formally introduce our dual solution algorithm. Our method uses the observation that at the optimal solution in a simplex the partial derivatives will be equal for all nonzero dimensions. This observation can be formally stated as:

\begin{proposition2}[Simplex Optimally Conditions] \label{th:simplex}
Let $F: \mathbb{R}^N \to \mathbb{R}$ be a concave function. 
Let us define $\bm{\alpha}_* = \argmax_{\bm{\alpha}\in\Delta}F(\bm{\alpha})$. 
Then there exists $c \in \mathbb{R}$ such that:
\begin{equation}\label{observation}
\forall n \in [N] \text{ such that } \alpha^{n}_* > 0, \text{ we have: }
   \frac{\partial F(\bm{\alpha})}{\partial \alpha^n}\bigg\rvert_{\bm{\alpha} = \bm{\alpha}_*} =  c.
\end{equation}
In other words, the value of the partial derivative is shared among all coordinates of $\bm{\alpha}_*$ that are non-zero.
\end{proposition2}
This proposition can be easily proved by contradiction. If the partial derivatives are not equal, then moving in the direction of the largest would result in an increase in function value. Likewise moving in the negative direction would produce a decrease. Hence the current point cannot be optimal. Please see Appendix \ref{App:Proof_of_Proposition_1} for a formal proof of Proposition \ref{th:simplex}.\\
In the following paragraphs we explain how this proposition can be used to break up the task of solving problem (\ref{dual}) into $2^N-1$ subproblems. Given a unique subset $I$ of non-zero dimensions of $\bm{\alpha}$ Each of these subproblems involves finding the unconstrained optimum and checking if this point lies within the simplex. We now give an example of a single subproblem.
To simplify notation, let $Q \triangleq \eta A_t^\top A_t$.  
We note that:
\begin{align}
\frac{\partial D(\bm{\alpha})}{\partial \bm{\alpha}}\biggr\rvert_{\bm{\alpha} = \bm{\alpha}_*} =  -Q \bm{\alpha}_* + \bm{b}_t.
\end{align}
If we knew that $\bm{\alpha}_*$ had exclusively non-zero coordinates, then by applying Proposition (\ref{th:simplex}) to the dual objective $D$ we can recover a solution $\bm{\alpha}_*$  and by solving the following linear system:
\begin{equation}
    \begin{bmatrix}
        \bm{\alpha}_*\\
        -c
    \end{bmatrix}
    = \texttt{solve}_{\bm{x} \geq 0} \left(
    \begin{bmatrix}
        Q & \bm{1}\\
        \bm{1}^\top & 0
    \end{bmatrix}
    \bm{x}
    =
    \begin{bmatrix}
    \bm{b}_t
    \\1
    \end{bmatrix} \right),
\end{equation}
The first $N$ rows of the system would enforce that $\bm{\alpha}_*$ satisfies the condition given by Proposition \ref{th:simplex}, and the last row of the linear system would ensure that the coordinates of $\bm{\alpha}_*$ sum to one. In the general case, we do not know which coordinates of $\bm{\alpha}_*$ are non-zero.
However, since typical problems are in low-dimension $N$, we can enumerate all possibilities of subsets of non-zero coordinates for $\bm{\alpha}_*$.\\
We detail this further. 
We consider a non-empty subset $I \subseteq [N]$, for which we define:
\begin{equation}\label{eq:defqibi}
    Q_{[I \times I]} \triangleq \left(Q_{i,j} \right)_{i\in I, j\in I}, \quad
    \bm{b}_{[I]} \triangleq \left(b_{t, i} \right)_{i\in I}.
\end{equation}
We then solve the corresponding linear subsystem:
\begin{equation}\label{eq:phi_eq}
    \begin{bmatrix}\bm{\phi}^{(I)}\\-c\end{bmatrix} \triangleq 
    \texttt{solve}_{\bm{x} \geq 0} \left(
        \begin{bmatrix}
            Q_{[I \times I]} & \bm{1}\\
            \bm{1}^\top & 0
        \end{bmatrix}{\bm{x}} = \begin{bmatrix}\bm{b}_{[I]}\\1\end{bmatrix} \right).
\end{equation}
This $\bm{\phi}^{(I)} \in \mathbb{R}^{|I|}$ can then be lifted to $\mathbb{R}^N$ by setting zeros at coordinates non-contained in $I$.
Formally, we define $\bm{\psi}^{(I)} \in \mathbb{R}^N$ such that:
\begin{equation}\label{eq:psiphi}
    \forall i \in [N], \: \psi^{(I)}_i = \begin{cases}
        \phi^{(I)}_{i} \text{ if } i \in I, \\
        0 \text{ otherwise.}
    \end{cases}
\end{equation}
Therefore, given $I \subseteq [N]$, we can generate a candidate solution $\bm{\psi}^{(I)}$ for problem (\ref{dual}) by solving a linear system in dimension $|I|$.
In the following proposition, we establish that doing so for all possibilities of $I$ guarantees to find the correct solution:


\begin{proposition2}[Problem Equivalence]
We define the set of feasible solutions reached by the different candidates $\bm{\psi}^{(I)}$:
\begin{equation}
\Psi = \left\{ \bm{\psi}^{(I)}: I \subseteq [N], I \neq \varnothing \right\} \cap \Delta_{N}.
\end{equation}
Then we have that:
\begin{equation}
    \argmax_{\bm{\alpha} \in \Delta_N} D(\bm{\alpha}) = \argmax_{\bm{\psi} \in \Psi} D(\bm{\psi}).
\end{equation}
In other words, we can find the optimal solution of (\ref{dual}) by enumerating the members of $\Psi$ and picking the one with highest objective value.
\end{proposition2}
This proposition is trivially true because $\Psi$ is simply the intersection between (i) the original feasible set $\Delta_N$ and (ii) the set of vectors that satisfy the necessary condition of Optimality given by Proposition \ref{th:simplex}.
This insight results in Algorithm \ref{alg:dualsol} which returns a optimal solution to the dual problem. We characterise this claim in the following proposition. 
\begin{proposition2}[Sets of Solutions]
Algorithm \ref{alg:dualsol} returns a solution $\bm{\alpha}^*$  that satisfies $\bm{\alpha}^* \in \argmax_{\bm{\alpha}\in \Delta_N} D(\bm{\alpha})$. This is true even when a the dual does not have a unique solution. Proof given in Appendix \ref{App:Proof_of_Proposition_2}.
\end{proposition2}
Procedurally Algorithm \ref{alg:dualsol} starts by computing $Q=-\eta A_{t}^\top A_{t}$ and $\bm{b}_{t}$ to form a ``master'' system $Q\bm{x}=\bm{b}$. We consider each of the $2^N-1$ subsystems of $Q\bm{x}=\bm{b}_t$ defined by an element of the set set $I$ (lines 1-3), where $I$ represents the set of none-zero dimensions of each subsystem. For each subsystem we get a independent subproblem. To ensure the solution to each subproblem will satisfy $\sum_{n=1}^{N} x^n = 1$ and all partial derivatives have equal value an extra row-column is introduced to each system (line 2). We then compute the point which satisfies the optimality conditions detailed in Proposition \ref{th:simplex} for each subsystem, by solving for $\bm{x}$. We then check if each of these points is feasible, that is, belongs to $\Delta_N$ by examining signs $x^n\geq 0, \forall n \in \{1,...,n\}$ (line 4). Note, we have by construction $\sum_{n=1}^{N} x^n = 1$. Finally, we select $\bm{\alpha}^*$ as the feasible point with maximum dual value (lines 7-8). The optimal $\bm{\alpha}^*$ is then used to define the weight update (\ref{update}). For example, if $\bm{\alpha}^*=[1,0,...,0]^\top$ an SGD step $\vw_{t+1} = \vw_t -\eta\nabla \ell_{z_t}(\vw_t)$ will be taken. \\
The BORAT algorithm can be viewed as automatically picking the best step out of a maximum of $(2^N-1)$ possible options, where each option has a closed form solution. Although the computational complexity of this method is exponential in $N$, this algorithm is still very efficient in practice for two reasons. First, we only consider small $N$. Second, as sub problems can be solved independently, it permits a parallel implementation. With a fully parallel implementation the time complexity of this algorithm reduces to $\mathcal{O}\left(N^3\right)$. Empirically with such an implementation, for $N\leq10$ we observe approximately a linear relationship between $N$ and time taken per training epoch. See Appendix \ref{App:Additional_Results} for a comparison of training epoch time between BORAT and SGD. 


\begin{algorithm}
\caption{Dual Optimisation Algorithm}\label{alg:dualsol}
\begin{algorithmic}[1]
\Require $\eta$, $N$, $\mathcal{P}=\{ S: S\subseteq\{1,2,...,N\}, S\neq  \varnothing\}$, $ Q = \eta A_{t}^\top A_{t}$, $\bm{b}_t$, $d_{max} = 0$.  
\For{$I \in P$} 
\State $\hat{Q} = \begin{bmatrix}
Q_{[I \times I]} & \bm{1}\\
\bm{1}^\top & 0
\end{bmatrix}$, $\hat{\bm{b}} = \begin{bmatrix}\bm{b}_{[I]}\\1\end{bmatrix}$ \Comment{see Equation (\ref{eq:defqibi}) for definitions}
\State $\bm{\phi}^{[I]} = \texttt{solve}_{\bm{x}}\left(\hat{Q}\bm{x}=\hat{\bm{b}}\right)$ \Comment{solve the subsystem, see Equation (\ref{eq:phi_eq})}
\If{$\bm{\phi}^{[I]}_i \geq 0, \hspace{0.1cm} \forall i \in \{1,2,..., |I|\}$} \Comment{check for non negativity of solution}
\State $\bm{\psi}^{(I)} = \texttt{select}(\bm{\phi}^{[I]}, I)$ \Comment{select elements according to Equation (\ref{eq:psiphi})}
\State $d = -\frac{1}{2}\bm{\psi}^{(I)\top} Q\bm{\psi}^{(I)} + \bm{\psi}^{(I)\top} \bm{b}_{t}$ \Comment{compute the dual value}
\If{$d \geq d_{max}$} \Comment{save maximum value}
\State $ d_{max} = d, \bm{\alpha}^* = \bm{\psi}^{(I)}$
\EndIf
\EndIf
\EndFor
\State \textbf{Return} $\bm{\alpha}^*$ \Comment{return optimal value}
\end{algorithmic}
\end{algorithm}

\subsection{Computation considerations}
While the method of adding linear approximations detailed in (\ref{eq:bundle}) requires running Algorithm \ref{alg:dualsol} at each inner loop iteration, when adding an additional element $\alpha^n$ if we keep track of the best dual value we need only compute the $2^{N-1}$ new subproblems that include non-zero $\alpha^n$. Thus we only need to run Algorithm \ref{alg:dualsol} once for each $\w_t$ update, that is once per $N-1$ batches.

\subsection{Summary of Algorithm}
The full BORAT method is outlined in Algorithm 2. The bundle is constructed in lines 4-6. The update is obtained in lines 7-9 using Algorithm 1. Finally, the updated parameters are projected to the feasible region $\Omega$ in line 9.\\
In the majority of our experiments, we accelerate BORAT with Nesterov momentum. We use Nesterov momentum as we find this helps produce strong generalise performance. The update step at line 9 of Algorithm \ref{alg:borat} is then replaced by (i) a velocity update $\vv_{t} = \mu \vv_{t-1} - \eta A_{t}\bm{\alpha}$ and (ii) a parameter update $\vw_{t+1} = \Pi_{\Omega}\left(\vw_t + \mu \vv_{t} \right)$.

\begin{algorithm}[ht]
\caption{The BORAT Algorithm}\label{alg:borat}
\begin{algorithmic}[1]
\Require maximal learning-rate $\eta$, maximum bundle size $N\geq 2$, initial feasible $\w_0 \in \Omega$ .
\State $t = 0$
\While{not converged}
\State Get $\ell_{z_t}(\hat{\w}_t^1)$, $\nabla \ell_{z_t} (\hat{\w}_t^1)$ with $z_t$ drawn i.i.d. 
\For{$n=2,...,N-1$}\Comment{sample additional points}
\State Sample $z_{t,n} \in \Z$, $\ell_{z_{t}^n}(\hat{\w}_{t}^n)$, $\nabla \ell_{z_{t}^n}(\hat{\w}_{t}^n)$ 
\State compute $\hat{\w}_{t}^{n+1}$ according to (\ref{eq:bundle})
\EndFor
\State compute $-\eta A_{t}^\top A_{t}$ and $\bm{b}_{t}$ 
\State $\bm{\alpha}^* = \argmax_{\bm{\alpha} \in \Delta_N}\left\{ -\frac{\eta}{2}\bm{\alpha}^\top A_{t}^\top A_{t}\bm{\alpha} + \bm{\alpha}^\top \bm{b}_{t} \right\}$ \Comment{see Algorithm \ref{alg:dualsol} for and details}
\State $\w_{t+1} = \Pi_\Omega\left(\w_t - \eta A_{t}\bm{\alpha}^*\right)$ \Comment{here $\Pi_\Omega$ is the projection onto $\Omega$}
\State $t=t+1$
\EndWhile\label{euclidendwhile}
\State \textbf{end while} 
\end{algorithmic}
\end{algorithm}

\section{Justification and Analysis}
The interpolation setting gives by definition, $\fstar = 0$. However, more subtly, it also allows the updates to rely on the stochastic estimate $\ell_{z_t}(\vw_t)$ rather than the exact but expensive $f(\vw_t)$. Intuitively, this is possible because in the interpolation setting, we know the global minimum is achieved for each loss function $\ell_{z_t}(\vw_t)$ simultaneously. The following results formalise the convergence guarantee of BORAT in the stochastic setting. Note here we prove these result for BORAT with the minor modification, that is, all linear approximations are formed using the same mini-batch of data, $\ell_{z_{t}^n}$ = $\ell_{z_{t}}$ for all $n\in\{2,...,N-1\}$. First, we consider the standard convex setting, where we additionally assume the interpolation assumption is satisfied and that each $\ell_z$ is Lipschitz continuous. Next we consider an important class of non-convex functions used for analysis in previous works related to interpolation \citep{vaswani2019}.

\begin{utheorem}[Convex and Lipschitz] \label{th:convex}
Let $\Omega$ be a convex set. We assume that for every $z \in \Z$, $\ell_z$ is convex and $C$-Lipschitz. Let $\wstar$ be a solution of (\ref{eq:main_problem}), and assume that we have perfect interpolation: $\forall z \in \Z, \: \ell_z(\wstar) = 0$. Then BORAT for $N\geq2$ applied to $f$ satisfies:
\begin{equation}
f\left(\tfrac{1}{T+ 1} \sum\limits_{t=0}^T \vw_t \right) - \fstar
   \leq  C\sqrt{\frac{\|\vw_0 - \wstar\|^2 }{(T+1)}} + \frac{\|\vw_0 - \wstar\|^2 }{\eta(T+1)}.
\end{equation}
\end{utheorem}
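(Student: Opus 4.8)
The plan is to analyze the ALI-G update (the $N=2$ case), since BORAT for any $N\geq2$ always includes the linearization at $\hat{\w}_t^1 = \w_t$ and the lower bound $\text{B}=0$ in its bundle, so the BORAT bundle value at any point dominates the ALI-G bundle value; hence the BORAT update makes at least as much progress on the proximal objective. More concretely, I would first establish a one-step descent inequality for the distance to $\wstar$. Writing the BORAT update as $\w_{t+1} = \Pi_\Omega(\w_t - \eta A_t \bm{\alpha}_t)$ with $\bm{\alpha}_t \in \Delta_N$, and using that the projection onto the convex set $\Omega$ is nonexpansive (and $\wstar \in \Omega$), I get
\begin{equation}
\|\w_{t+1} - \wstar\|^2 \leq \|\w_t - \wstar\|^2 - 2\eta\, \bm{\alpha}_t^\top A_t (\w_t - \wstar) + \eta^2 \|A_t \bm{\alpha}_t\|^2.
\end{equation}

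Next I would lower-bound the cross term using convexity of each $\ell_z$. Since all linear pieces are built from the same mini-batch $\ell_{z_t}$ (the stated modification), each nonzero row of $A_t$ is $\nabla \ell_{z_t}(\hat{\w}_t^n)$ for some center $\hat{\w}_t^n$, and by convexity $\nabla \ell_{z_t}(\hat{\w}_t^n)^\top(\wstar - \hat{\w}_t^n) \leq \ell_{z_t}(\wstar) - \ell_{z_t}(\hat{\w}_t^n) = -\ell_{z_t}(\hat{\w}_t^n) \leq 0$ using perfect interpolation. Combining with the definition $b_t^n = \ell_{z_t}(\hat{\w}_t^n) - \nabla\ell_{z_t}(\hat{\w}_t^n)^\top(\hat{\w}_t^n - \w_t)$ gives $\nabla\ell_{z_t}(\hat{\w}_t^n)^\top(\w_t - \wstar) \geq b_t^n$ for each $n$ (the zero-row piece has $b_t^N = 0$ and contributes $0$ on both sides), so $\bm{\alpha}_t^\top A_t(\w_t - \wstar) \geq \bm{\alpha}_t^\top \bm{b}_t$. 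Therefore
\begin{equation}
\|\w_{t+1} - \wstar\|^2 \leq \|\w_t - \wstar\|^2 - 2\eta\left( \bm{\alpha}_t^\top \bm{b}_t - \tfrac{\eta}{2}\|A_t\bm{\alpha}_t\|^2 \right) = \|\w_t - \wstar\|^2 - 2\eta\, D(\bm{\alpha}_t),
\end{equation}
where $D$ is the dual objective from (\ref{dual}). By strong duality $D(\bm{\alpha}_t)$ equals the optimal primal value of (\ref{primal}); since the feasible point $\w = \w_t$ gives primal value $\max_n b_t^n = \max\{0, \ell_{z_t}(\w_t)\} = \ell_{z_t}(\w_t)$ (nonnegativity of the loss), and the primal optimum is at most this, I need a lower bound on $D(\bm{\alpha}_t)$. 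Evaluating $D$ at the ALI-G point $\bm{\alpha} = (\alpha^1, 1-\alpha^1)$ with $\alpha^1 = \min\{\ell_{z_t}(\w_t)/(\eta\|\nabla\ell_{z_t}(\w_t)\|^2),\,1\}$ yields the standard Polyak-type bound $D(\bm{\alpha}_t) \geq \min\{\ell_{z_t}(\w_t)/2,\ \ell_{z_t}(\w_t)^2/(2\eta C^2)\}$, using $\|\nabla\ell_{z_t}(\w_t)\| \leq C$. In either branch this is at least $\tfrac{1}{2}\min\{1, 1/(\eta C^2)\}\,\ell_{z_t}(\w_t)^{\,2}/\max\{1,\ell_{z_t}(\w_t)\}$; more simply, $\eta D(\bm{\alpha}_t) \geq \ell_{z_t}(\w_t) - \tfrac{\eta C^2}{2}$ is too lossy, so I would instead keep the two-branch bound and handle it as in the ALI-G convergence analysis: $\eta D(\bm{\alpha}_t) \geq \ell_{z_t}(\w_t) \alpha^1/2 \geq \big(\ell_{z_t}(\w_t) - \tfrac{1}{2}\eta C^2\alpha^1\big)\alpha^1 \cdot(\text{const})$. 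The cleanest route is to use $2\eta D(\bm{\alpha}_t) \geq \ell_{z_t}(\w_t)\alpha^1$ and split on whether $\alpha^1=1$.

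The remaining step is the standard averaging/telescoping argument. Summing the one-step inequality over $t = 0,\dots,T$, telescoping the distance terms, and dividing by $T+1$ gives $\tfrac{1}{T+1}\sum_t 2\eta D(\bm{\alpha}_t) \leq \|\w_0 - \wstar\|^2/(T+1)$. On the event $\alpha^1_t = 1$ we have $\eta\|\nabla\ell_{z_t}(\w_t)\|^2 \leq \ell_{z_t}(\w_t)$ so $2\eta D(\bm{\alpha}_t) \geq \ell_{z_t}(\w_t) - \tfrac{\eta}{2}\|\nabla\ell_{z_t}\|^2 \geq \tfrac12 \ell_{z_t}(\w_t)$; on the complement $2\eta D(\bm{\alpha}_t) = \ell_{z_t}(\w_t)^2/(\eta C^2)$-ish. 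Taking expectations (so that $\mathbb{E}[\ell_{z_t}(\w_t) \mid \w_t] = f(\w_t)$), using Jensen on the convex $f$ to pull the average inside, and then bounding $\sqrt{\cdot}$ via $\sum \ell_{z_t}(\w_t) \leq \sqrt{(T+1)\sum \ell_{z_t}(\w_t)^2}$ together with Cauchy–Schwarz produces the two terms $C\sqrt{\|\w_0-\wstar\|^2/(T+1)}$ and $\|\w_0-\wstar\|^2/(\eta(T+1))$. The main obstacle I anticipate is the bookkeeping around the two regimes of $\alpha^1_t$: combining the "small loss / Polyak" regime (which gives the $1/(T+1)$ term) with the "large loss / capped step" regime (which gives the $1/\sqrt{T+1}$ term) cleanly, and making sure the $\min\{\cdot,\cdot\}$ inside the per-step bound survives Jensen and the square-root step with the right constant $C$. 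I would handle this by defining $S = \{t : \alpha^1_t < 1\}$, bounding the contribution of $S$ and its complement separately, and using $\sqrt{a} + \sqrt{b} \geq \sqrt{a+b}$ only in the direction that is safe.
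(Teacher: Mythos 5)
Your proposal is correct and follows essentially the same route as the paper's proof in Appendix~\ref{App:Convex_Results}: establish the one-step descent $\|\w_{t+1}-\wstar\|^2\le\|\w_t-\wstar\|^2-2\eta D(\bm\alpha_t)$ via projection nonexpansiveness plus convexity and perfect interpolation (the paper's Theorem 5), reduce to the $N=2$ case by dual-value monotonicity (Lemma~\ref{lemma:dual_values}), then telescope and split time steps into the capped $\alpha^1_t=1$ regime and the Polyak $\alpha^1_t<1$ regime, using Cauchy--Schwarz on the latter and Jensen on $f$ to obtain the two terms. Your intermediate arithmetic on the $\alpha^1_t=1$ branch is slightly off ($2\eta D(\bm\alpha_t)=2\eta\ell_{z_t}(\w_t)-\eta^2\|\bm g_{z_t}\|^2\ge\eta\ell_{z_t}(\w_t)$, not $\ell_{z_t}(\w_t)-\tfrac{\eta}{2}\|\bm g_{z_t}\|^2\ge\tfrac12\ell_{z_t}(\w_t)$), but you flag the bookkeeping as needing care and the overall plan is sound.
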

Hence BORAT recovers the same asymptotic rate as SGD without the need to reduce the learning rate $\eta$. In the Appendix \ref{App:Convex_Results} we show that for convex and $\beta$-smooth and the $\alpha$-strongly convex settings BORAT recovers rates of $O(1/T)$ and $O(\exp(\alpha T))$ respectively. Note the earlier version of this work provides convergence results for ALI-G without perfect interpolation.\\
We follow earlier work \citep{vaswani2019} and provide a convergence rate for BORAT applied to non-convex functions that satisfy the Restricted Secant Inequality (RSI). A function is said to satisfy the RSI condition with constant $\mu$ over a the set $\Omega$ if the following holds:
\begin{align}
\forall \w \in \Omega, \langle\nabla \ell_z(\w),\w-\wstar\rangle \leq \mu||\w-\wstar||.
\end{align}
\begin{utheorem}[RSI]\label{th:rsi}
We consider problems of type (\ref{eq:main_problem}). We assume 
$\ell_z$ satisfies the RSI  with constant $\mu$, smoothness constant $\beta$ and perfect interpolation e.g. $l_z(\w^*)=0, \hspace{0.1cm}\forall z \in\mathcal{Z}$. Then if set  $\eta \leq \hat{\eta} = \min\{\frac{1}{4\beta},\frac{1}{4\mu}, \frac{\mu}{2\beta^2}\}$ then in the worst case we have:
\begin{align}
f(\w_{T+1}) - f^*&\leq \text{exp}\left( \left(-\frac{3}{8}\hat{\eta}\mu \right)T\right)||\w_0 - \w^*||^2. 
\end{align}
\end{utheorem}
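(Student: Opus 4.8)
The plan is to prove a geometric contraction of the squared distance $\|\w_t - \wstar\|^2$ to the interpolating minimiser, unroll it, and convert the result into a function-value bound. The interpolation hypothesis $\ell_z(\wstar)=0$ together with $\ell_z\ge 0$ makes $\wstar$ a global minimiser of every $\ell_z$, hence $\nabla\ell_z(\wstar)=0$; combined with $\beta$-smoothness this yields the two inequalities I will use throughout: $\ell_z(\w)\le\tfrac{\beta}{2}\|\w-\wstar\|^2$ and $\|\nabla\ell_z(\w)\|^2\le 2\beta\,\ell_z(\w)$ (the second because $\ell_z\bigl(\w-\tfrac1\beta\nabla\ell_z(\w)\bigr)\ge 0$). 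Since RSI and smoothness are assumed for \emph{every} sample $z$, the per-step estimate holds for every realisation of $z_t$, so no expectation is needed, consistent with the ``worst case'' phrasing.

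\textbf{The case $N=2$ (reduction and one step).} For ALI-G I would first note that under $\eta\le\hat\eta\le\tfrac1{4\beta}$ the closed form (\ref{alig_alpha}) always gives $\alpha^1=1$: indeed $\tfrac{\ell_{z_t}(\w_t)}{\eta\|\nabla\ell_{z_t}(\w_t)\|^2}\ge\tfrac1{2\eta\beta}\ge 2$ by $\|\nabla\ell_{z_t}(\w_t)\|^2\le 2\beta\,\ell_{z_t}(\w_t)$ (and if $\nabla\ell_{z_t}(\w_t)=0$ then RSI forces $\w_t=\wstar$ and we are done). Hence the update is exactly constant-step SGD, $\w_{t+1}=\Pi_\Omega(\w_t-\eta\nabla\ell_{z_t}(\w_t))$. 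Using non-expansiveness of $\Pi_\Omega$ and expanding, RSI bounds the cross term by $-2\eta\mu\|\w_t-\wstar\|^2$ and the quadratic term satisfies $\eta^2\|\nabla\ell_{z_t}(\w_t)\|^2\le\eta^2\beta^2\|\w_t-\wstar\|^2$, so $\|\w_{t+1}-\wstar\|^2\le(1-2\eta\mu+\eta^2\beta^2)\|\w_t-\wstar\|^2$; the bound $\eta\le\tfrac{\mu}{2\beta^2}$ makes $\eta^2\beta^2\le\tfrac{\eta\mu}{2}$, so the factor is at most $1-\tfrac32\eta\mu\le 1-\tfrac38\eta\mu$.

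\textbf{General $N$, and conclusion.} For $N>2$ I would work from the proximal form (\ref{primal}): $\Phi_t(\w)=\tfrac1{2\eta}\|\w-\w_t\|^2+M_t(\w)$ is $\tfrac1\eta$-strongly convex ($M_t$, the $N$-piece bundle, is convex and $\ge 0$), so evaluating the strong-convexity inequality at $\wstar$ gives $\tfrac1{2\eta}\|\wstar-\w_{t+1}\|^2\le\tfrac1{2\eta}\|\wstar-\w_t\|^2+M_t(\wstar)-\Phi_t(\w_{t+1})$. I would lower-bound $\Phi_t(\w_{t+1})\ge\tfrac34\ell_{z_t}(\w_t)\ge\tfrac{3\mu^2}{8\beta}\|\w_t-\wstar\|^2$ (testing the dual at $e_1$, using $\eta\beta\le\tfrac14$, then RSI plus Cauchy--Schwarz, which give $\ell_{z_t}(\w_t)\ge\tfrac{\mu^2}{2\beta}\|\w_t-\wstar\|^2$), and upper-bound $M_t(\wstar)$ piece by piece: each $\ell_{z_t}(\hat{\w}_{t}^{n})+\nabla\ell_{z_t}(\hat{\w}_{t}^{n})^\top(\wstar-\hat{\w}_{t}^{n})\le(\tfrac\beta2-\mu)\|\hat{\w}_{t}^{n}-\wstar\|^2$ by smoothness and RSI, where the recursion (\ref{eq:bundle}) keeps $\|\hat{\w}_{t}^{n}-\w_t\|\le\tfrac43\eta\beta\,\|\w_t-\wstar\|$ so all $\|\hat{\w}_{t}^{n}-\wstar\|$ stay within a fixed constant factor of $\|\w_t-\wstar\|$. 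The three-way minimum in $\hat\eta$ is precisely what lets these terms combine to a factor $\le 1-\tfrac38\eta\mu$. Once the one-step contraction is in hand for all $N\ge 2$, $\eta\le\tfrac1{4\mu}$ keeps $1-\tfrac38\eta\mu\in(0,1)$, so iterating gives $\|\w_{T+1}-\wstar\|^2\le(1-\tfrac38\eta\mu)^{T+1}\|\w_0-\wstar\|^2\le\exp(-\tfrac38\hat\eta\mu\,T)\|\w_0-\wstar\|^2$ via $1-x\le e^{-x}$ and $\eta\le\hat\eta$; averaging $\ell_z(\w_{T+1})\le\tfrac\beta2\|\w_{T+1}-\wstar\|^2$ over $z$ and using $\fstar=0$ then yields the stated bound (up to the absorbed constant).

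\textbf{Main obstacle.} The $N=2$ case is short once one observes that the update is literally constant-step SGD. The difficulty is entirely in $N>2$: because $\ell_z$ is only RSI, the convex-case trick $M_t(\wstar)\le\ell_{z_t}(\wstar)=0$ (available for Theorem~\ref{th:convex}) is gone, so one must bound the bundle model at $\wstar$ through RSI and smoothness, which forces tracking how far the recursively-built auxiliary points $\hat{\w}_{t}^{n}$ can wander from $\w_t$, and then verifying that $\eta\le\tfrac1{4\beta}$, $\eta\le\tfrac1{4\mu}$ and $\eta\le\tfrac{\mu}{2\beta^2}$ are jointly strong enough to dominate the accumulated error. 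This simultaneous bookkeeping — rather than any one inequality — is the crux, and a slightly finer use of the optimality of the bundle-construction step (not merely the crude radius bound on $\hat{\w}_{t}^{n}$) may be needed to land the clean $\tfrac38$ constant.
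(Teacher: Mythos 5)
Your proposal takes a genuinely different route from the paper, but the general-$N$ step has a gap that I do not think the three-way minimum in $\hat\eta$ can repair. The critical move is to upper-bound $M_t(\wstar)$ piece by piece via
\[
\ell_{z_t}(\hat{\w}_t^n) + \nabla\ell_{z_t}(\hat{\w}_t^n)^\top(\wstar - \hat{\w}_t^n) \le \left(\tfrac{\beta}{2}-\mu\right)\|\hat{\w}_t^n-\wstar\|^2,
\]
and then hope this is dominated by $\Phi_t(\w_{t+1}) \ge \tfrac{3\mu^2}{8\beta}\|\w_t-\wstar\|^2$. After multiplying the strong-convexity inequality by $2\eta$, both the error term $2\eta M_t(\wstar)$ and the progress term $-2\eta\Phi_t(\w_{t+1})$ carry the same $\eta$ prefactor, so shrinking $\eta$ rescales but never changes the sign of $M_t(\wstar)-\Phi_t(\w_{t+1})$. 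With $\|\hat{\w}_t^n-\wstar\|^2 \le (1+\tfrac{4}{3}\eta\beta)^2\|\w_t-\wstar\|^2 \le \tfrac{16}{9}\|\w_t-\wstar\|^2$, your bound gives $M_t(\wstar)-\Phi_t(\w_{t+1}) \le \bigl[\tfrac{16}{9}(\tfrac{\beta}{2}-\mu)-\tfrac{3\mu^2}{8\beta}\bigr]\|\w_t-\wstar\|^2$, and for, say, $\mu=\beta/4$ (perfectly admissible under RSI, where $\mu\le\beta$ is automatic) the bracket is roughly $0.44\beta-0.02\beta>0$: no contraction, regardless of $\eta$. The underlying problem is that under RSI alone the bundle is not a lower model of the objective, so $M_t(\wstar)$ can be positive and of order $\beta\|\w_t-\wstar\|^2$, which cannot be beaten by a $\mu^2/\beta$-scaled progress term when $\mu\ll\beta$. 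Your $N=2$ reduction (the max is always attained by the linear piece under $\eta\le\tfrac{1}{4\beta}$, so the step is exactly constant-step SGD, and RSI plus the Lipschitz-gradient bound gives the contraction) is sound; the gap is entirely in your extension to $N>2$.

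It is worth knowing why the paper sidesteps this. The paper's Appendix F proves the statement only for $N=3$ and only under the extra restriction that all linear pieces in a bundle share a single mini-batch $z_t$. It then enumerates the $2^N-1=7$ candidate active sets of the dual simplex problem, shows (Lemma ``Feasible Subproblems'') that ALIG, EALIG and MAX3 are infeasible whenever $\eta\le\tfrac{1}{2\beta}$, and bounds the remaining feasible steps (SGD, ESGD, MAX2, ZERO) by plugging the \emph{explicit closed-form update} into $\|\w_{t+1}-\wstar\|^2$ and expanding directly — it never estimates the bundle's value at $\wstar$. The worst of these is MAX2, which produces the factor $2\eta^3\beta^2\mu+\eta^2\beta^2-\eta\mu\le -\tfrac{3}{8}\eta\mu$ under the stated $\hat\eta$. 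If you wish to rescue a proximal-style argument, you will need a substitute for the convex-case inequality $M_t(\wstar)\le 0$ that leverages the concrete structure of the update (e.g.\ that $\w_{t+1}-\w_t$ is a small, $\eta$-scaled combination of gradients) rather than an abstract bound on $M_t(\wstar)$, since the latter is simply not small under RSI.
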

In this setting BORAT recovers the same asymptotic rate as SGD.

\section{Related work}\label{Related_work}

\paragraph{Bundle Methods.}
Bundle methods have been primarily proposed for the optimisation of non-smooth convex functions \citep{Lemarechal1995,Smola2007,Auslender2009}. However, these works do not treat the stochastic case, as they consider small problems where the full gradient can be cheaply evaluated. To our knowledge our work is the first to propose a bundle method for the optimisation of stochastic non-convex problems. 

\paragraph{Interpolation in Deep Learning.}
The interpolation property of DNNs was utilised by early efforts to analyse the convergence speed of SGD. These works demonstrate that SGD achieves the convergence rates of full-batch gradient descent in the interpolation setting \citep{Ma2018a,Vaswani2019a,Zhou2019}. Such works are complementary to ours in the sense that they provide a convergence analysis of an existing algorithm for deep learning. In a different line of work, \cite{Liu2019} propose to exploit interpolation to prove convergence of a new acceleration method for deep learning. However, their experiments suggest that the method still requires the use of a hand-designed learning-rate schedule.

\paragraph{Adaptive Gradient Methods.}
Similarly to BORAT, most adaptive gradient methods also rely on tuning a single hyperparameter, thereby providing a more pragmatic alternative to SGD that needs a specification of the full learning-rate schedule. While the most popular ones are Adagrad \citep{duchi2011}, RMSPROP \citep{Tieleman2012}, Adam \citep{kingma2014} and AMSGrad \citep{Reddi2018}, there have been many other variants \citep{Zeiler2012, Orabona2015, Defossez2017, Levy2017, Mukkamala2017, Zheng2017, Bernstein2018, Chen2018, Shazeer2018, Zaheer2018, Chen2019, Loshchilov2019, Luo2019}. However, as pointed out in \citep{Wilson2017}, adaptive gradient methods tend to give poor generalization in supervised learning. In our experiments, the results provided by BORAT are significantly better than those obtained by the most popular adaptive gradient methods. Recently, \citet{Liu2019a} have proposed to ``rectify'' Adam with a learning-rate warm up, which partly bridges the gap in generalization performance between Adam and SGD. However, their method still requires a learning-rate schedule, and thus remains difficult to tune on new tasks.

\paragraph{Adaptive Learning-Rate Algorithms.}
\citet{Vaswani2019a} show that one can use line search in a stochastic setting for interpolating models while guaranteeing convergence. This work is complementary to ours, as it provides convergence results with weaker assumptions on the loss function, but is less practically useful as it requires up to four hyperparameters, instead of one for BORAT. Less closely related methods, included second-order ones, adaptively compute the learning-rate without using the minimum \citep{Schaul2013,Martens2015,Tan2016,Zhang2017a,Baydin2018,Wu2018,Li2019,Henriques2019}, but do not demonstrate competitive generalization performance against SGD with a well-tuned hand-designed schedule.

\paragraph{$L_4$ Algorithm.}
The $L_4$ algorithm \citep{Rolinek2019} also uses a modified version of the Polyak step-size. However, the $L_4$ algorithm computes an online estimate of $\fstar$ rather than relying on a fixed value. This requires three hyperparameters, which are in practice sensitive to noise and crucial for empirical convergence of the method. In addition, $L_4$ does not come with convergence guarantees. In contrast, by utilizing the interpolation property and a single learning rate, our method is able to (i) provide reliable and accurate minimization with only a single hyperparameter, and (ii) offer guarantees of convergence in the stochastic convex setting.

\paragraph{Frank-Wolfe Methods.}
The proximal interpretation in Equation (\ref{alig_prox}) allows us to draw additional parallels between ALI-G and existing methods. In particular, the formula of the learning-rate $\alpha^1$ may remind the reader of the Frank-Wolfe algorithm \citep{Frank1956} in some of its variants \citep{Locatello2017}, or other dual methods \citep{Lacoste-Julien2013,Shalev-Shwartz2016}. This is because such methods solve in closed form the dual of problem (\ref{alig_prox}), and problems in the form of (\ref{alig_prox}) naturally appear in dual coordinate ascent methods \citep{Shalev-Shwartz2016}.\\
When no regularization is used, ALI-G and Deep Frank-Wolfe (DFW) \citep{Berrada2019} are procedurally identical algorithms. This is because in such a setting, one iteration of DFW also amounts to solving (\ref{alig_prox}) in closed-form -- more generally, DFW is designed to train deep neural networks by solving proximal linear support vector machine problems approximately. However, we point out the two fundamental advantages of BORAT over DFW: (i) BORAT can handle arbitrary (lower-bounded) loss functions, while DFW can only use convex piece-wise linear loss functions; and (ii) as seen previously, BORAT provides convergence guarantees in the convex setting.

\paragraph{SGD with Polyak's Learning-Rate.}
\citet{Oberman2019} extend the Polyak step-size to rely on a stochastic estimation of the gradient $\nabla \ell_{z_t}(\vw_t)$ only, instead of the expensive deterministic gradient $\nabla f(\vw_t)$. However, they still require evaluating $f(\vw_t)$, the objective function over the entire training data set, in order to compute its learning-rate, which makes the method impractical. In addition, since they do not do exploit the interpolation setting nor the fact that regularization can be expressed as a constraint, they also require the knowledge of the optimal objective function value $\fstar$. We also refer the interested reader to the recent analysis of \cite{Loizou2020}, which provides a set of improved theoretical results.

\paragraph{{\sc aProx} Algorithm.}
Most similar to this work \citep{asi2019} have recently introduced the {\sc aProx} algorithm, a family of proximal stochastic optimisation algorithms for convex problems. The {\sc aProx} ``truncated model''  and the {\sc aProx} ``relatively accurate models'' share aspects to ALI-G and BORAT respectively. However, there are four clear advantages of this work over \citep{asi2019} in the interpolation setting, in particular for training neural networks. First, our work is the first to empirically demonstrate the applicability and usefulness of the algorithm on varied modern non-convex deep learning tasks -- most of our experiments use several orders of magnitude more data and model parameters than the small-scale convex problems of \citep{asi2019}. Second, our analysis and insights allow us to make more aggressive choices of learning rate than \citep{asi2019}. Indeed, \citep{asi2019} assume that the maximal learning-rate is exponentially decaying, even in the interpolating convex setting. In contrast, by avoiding the need for an exponential decay, the learning-rate of BORAT requires only one hyperparameters instead of two for {\sc aProx}. Third, our analysis proves fast convergence in function space rather than iterate space. Fourth, unlike BORAT \cite{asi2019} do not include the global lower bound in their {\sc aProx} ``relatively accurate models'' and does not provide details on how to solve each proximal problem efficiently.

\section{Experiments}\label{sec:experiments}

We split the experimental results into two sections. The first section demonstrates the strong generalisation performance of ALI-G and BORAT on a wide range of tasks. Here we compare ALI-G and BORAT to other single hyperparameter optimisation algorithms. The second section shows for tasks where SGD and ALI-G are sensitive to the learning rate, using larger $N$ increases both the robustness to the learning rate and task regularisation hyperparameter. \\
For experiments we chosen to investigate BORAT with $(N=3)$ and $(N=5)$ and refer to the resulting algorithms as BORAT3 and BORAT5 respectively. For $N\geq 2$ BORAT uses more than one batch of data for each update. In order to give a fair comparison we keep the number of passes through the data constant for all experiments. This has the effect that BORAT3 and BORAT5 respectively make a half and a quarter of the weight updates of SGD and ALIG. The time per epoch of BORAT is very similar to that of all other methods, see Appendix \ref{App:Additional_Results} for more details. Hence all methods approximately have the same time cost per epoch. Consequently, faster convergence in terms of number of epochs translates into faster convergence in terms of wall-clock time. \\
The code to reproduce our results is publicly available\footnote{\url{https://github.com/oval-group/borat}}. For baselines we use the official implementation where available in PyTorch \citep{Paszke2017}. We use our implementation of $L_4$, which we unit-test against the official TensorFlow implementation \citep{Abadi2015}. we employ the official implementation of DFW\footnote{\url{https://github.com/oval-group/dfw}} and we re-use their code for the experiments on SNLI and CIFAR.

\subsection{Effectiveness of ALI-G and BORAT}

We empirically compare ALI-G and BORAT to the optimisation algorithms most commonly used in deep learning using standard loss functions. Where the hyperparameters of each algorithm are cross validated to select a best performing model. We consider a wide range of problems: training a wide residual network on SVHN, training a Bi-LSTM on the Stanford Natural Language Inference data set, training both wide residual networks and densely connected networks on the CIFAR data sets and lastly training a wide residual network on the Tiny Imagenet data set. For these problems, we demonstrate that ALI-G $(N=2)$ and BORAT for $N\in\{3,5\}$ obtains comparable performance to SGD with a hand-tuned learning rate schedule, and typically outperforms adaptive gradient methods. Finally, we empirically assess the performance of BORAT and its competitors in terms of training objective on CIFAR-100 and ImageNet, in order to demonstrate the scalability of BORAT to large-scale settings. Note that the tasks of training wide residual networks on SVHN and CIFAR-100 are part of the DeepOBS benchmark \citep{Schneider2019}, which aims at standardising baselines for deep learning optimisers. In particular, these tasks are among the most difficult ones of the benchmark because the SGD baseline benefits from a manual schedule for the learning rate where as ALI-G and BORAT uses a single fixed value. Despite this, our set of experiments demonstrate that ALI-G and BORAT obtains competitive performance in relation to SGD. In addition, our methods significantly outperforms adaptive gradient methods. All Experiments were performed on a single GPU (SVHN, SNLI, CIFAR) or on up to 4 GPUs (ImageNet). 

\subsubsection{Wide Residual Networks on SVHN}\label{sec:SVHN}

\paragraph{Setting.}
The SVHN data set contains 73k training samples, 26k testing samples and 531k additional easier samples. From the 73k difficult training examples, we select 6k samples for validation; we use all remaining (both difficult and easy) examples for training, for a total of 598k samples. We train a wide residual network 16-4 following \citep{Zagoruyko2016}.

\paragraph{Method.}
For SGD, we use the manual schedule for the learning rate of \citet{Zagoruyko2016}. For $L_4$Adam and $L_4$Mom, we cross-validate the main learning-rate hyperparameter $\alpha$ to be in $\{0.0015, 0.015, 0.15\}$ ($0.15$ is the value recommended by \citep{Rolinek2019}). For other methods, the learning rate hyperparameter is tuned as a power of 10. The $\ell_2$ regularization is cross-validated in $\{0.0001, 0.0005\}$ for all methods but BORAT. For BORAT, the regularization is expressed as a constraint on the $\ell_2$-norm of the parameters, and its maximal value is set to $100$. SGD, BORAT and BPGrad use a Nesterov momentum of 0.9. All methods use a dropout rate of 0.4 and a fixed budget of 160 epochs, following \citep{Zagoruyko2016}. A batch size of 128 is used for all experiments.

\begin{table}[h]
\centering
\begin{tabular}{lcc|lcc}
    \toprule
    \multicolumn{4}{c}{Test Accuracy on SVHN (\%)} \\
    \midrule
    Adagrad & 98.0  & BPGrad & 98.1 \\
    AMSGrad & 97.9  &$L_4$Adam &{\bf 98.2} \\
    DFW & 98.1      &ALI-G & 98.1 \\
    $L_4$Mom & 19.6 &BORAT3 & 98.1  \\
    Adam & 97.9     &  BORAT5 & 98.1 \\
    \cmidrule(lr){1-2} \cmidrule(lr){3-4}
    {\color{red} SGD} &98.3 &{\color{red} SGD$^\dagger$} & 98.4 \\
    \bottomrule
    \end{tabular}
    \caption{\em
    In red, SGD benefits from a hand-designed schedule for its learning-rate.
    In black, adaptive methods, including ALI-G, have a single hyperparameter for their learning-rate.
    $SGD^\dagger$ refers to the performance reported by \citep{Zagoruyko2016}.
}
\label{tab:svhn}
\end{table}

\paragraph{Results.}
A comparison to other methods is presented in Table \ref{tab:svhn}. On this relatively easy task, most methods achieve about 98\% test accuracy. Despite the cross-validation, $L_4$Mom does not converge on this task. However, note that $L_4$Adam achieves accurate results. Even though SGD benefits from a hand-designed schedule, BORAT and other adaptive methods obtain comparable performance on this task.

\subsubsection{Bi-LSTM on SNLI}

\paragraph{Setting.}
We train a Bi-LSTM of 47M parameters on the Stanford Natural Language Inference (SNLI) data set \citep{Bowman2015}. The SNLI data set consists of 570k pairs of sentences, with each pair labeled as entailment, neutral or contradiction. This large scale data set is commonly used as a pre-training corpus for transfer learning to many other natural language tasks where labeled data is scarcer \citep{Conneau2017} -- much like ImageNet is used for pre-training in computer vision. We follow the protocol of \citep{Berrada2019} and we re-use their results for the baselines.

\paragraph{Method.}
For $L_4$Adam and $L_4$Mom, the main hyperparameter $\alpha$ is cross-validated in $\{0.015, 0.15\}$ -- compared to the recommended value of 0.15, this helped convergence and considerably improved performance. The SGD algorithm benefits from a hand-designed schedule, where the learning-rate is decreased by 5 when the validation accuracy does not improve. Other methods use adaptive learning-rates and do not require such a schedule. The value of the main hyperparameter $\eta$ is cross-validated as a power of ten for the BORAT algorithm and for previously reported adaptive methods. Following the implementation by \citep{Conneau2017}, no $\ell_2$ regularization is used. The algorithms are evaluated with the Cross-Entropy (CE) loss and the multi-class hinge loss (SVM), except for DFW which is designed for use with an SVM loss only. For all optimisation algorithms, the model is trained for 20 epochs,  using a batch size of 64, following \citep{Conneau2017}.

\begin{table}[h]
\centering
\begin{tabular}{lcc|lcc}
    \toprule
    \multicolumn{6}{c}{Test Accuracy on SNLI (\%)} \\
    \midrule
    & CE & SVM & & CE & SVM \\
    \cmidrule(lr){2-2} \cmidrule(lr){3-3} \cmidrule(lr){5-5} \cmidrule(lr){6-6}
    Adagrad$^*$ &83.8 &84.6 &Adam$^*$ &84.5 &85.0 \\
    AMSGrad$^*$ &84.2 &85.1 &BPGrad$^*$ &83.6 &84.2 \\
    DFW$^*$ & - &{\bf 85.2} & $L_4$Adam &83.3 &82.5 \\
    $L_4$Mom &83.7 &83.2 & BORAT3 &84.4 &{\bf85.2} \\
    ALI-G & {\bf 84.8} & {\bf 85.2} & BORAT5 & 84.5& {\bf85.2}\\
    \cmidrule(lr){1-3} \cmidrule(lr){4-6}
    {\color{red} SGD$^*$} &84.7 &85.2 &{\color{red} SGD$^\dagger$} &84.5 & - \\
    \bottomrule
    \end{tabular}
\caption{\em
In red, SGD benefits from a hand-designed schedule for its learning-rate. In black, adaptive methods have a single hyperparameter for their learning-rate. With an SVM loss, DFW and ALI-G are procedurally identical algorithms
    -- but in contrast to DFW, ALI-G can also employ the CE loss.
    Methods in the format $X^*$ re-use results from \citep{Berrada2019}.
    $SGD^\dagger$ is the result from \citep{Conneau2017}.
    }
\label{tab:snli}
\end{table}

\paragraph{Results.} Table \ref{tab:snli} compares ALI-G and BORAT against the other optimises. Moreover, ALI-G, which requires a single hyperparameter for the learning-rate, outperforms all other methods for both the SVM and the CE loss functions. BORAT3 and BORAT5 achieve the same results to ALI-G for both losses.

\subsubsection{Wide Residual Networks and Densely Connected Networks on CIFAR}

\paragraph{Setting.}\label{cifar_setting}
We follow the methodology of our previous work~\citep{Berrada2019}. We test two architectures: a Wide Residual Network (WRN) 40-4 \citep{Zagoruyko2016} and a bottleneck DenseNet (DN) 40-40 \citep{Huang2017a}. We use 45k samples for training and 5k for validation. The images are centred and normalized per channel. We apply standard data augmentation with random horizontal flipping and random crops. 

\paragraph{Method.}
We compare ALI-G and BORAT to other common single hyperparameter optimisers. Here we  cross validate the hyperparameters in order to find a best performing network for each method. AMSGrad was selected in \citep{Berrada2019} because it was the best adaptive method on similar tasks, outperforming in particular Adam and Adagrad. In addition to these baselines, we also provide the performance of $L_4$Adam, $L_4$Mom, AdamW \citep{Loshchilov2019} and Yogi \citep{Zaheer2018}. We follow the cross validation scheme of \citep{Berrada2019} restating it here for completeness, All methods. employ the CE loss only, except for the DFW algorithm, which is designed to use an SVM loss.  The batch-size is cross-validated in $\{64, 128, 256\}$ for the DN architecture, and $\{128, 256, 512\}$ for the WRN architecture. For $L_4$Adam and $L_4$Mom, the learning-rate hyperparameter $\alpha$ is cross-validated in $\{0.015, 0.15\}$. For AMSGrad, AdamW, Yogi, DFW, ALI-G and BORAT  the learning-rate hyperparameter $\eta$ is cross-validated as a power of 10 (in practice $\eta \in \{0.1, 1\}$ for BORAT). SGD, DFW and BORAT use a Nesterov momentum of 0.9. For all methods excluding ALI-G, BORAT and AdamW, the $\ell_2$ regularization is cross-validated in $\{0.0001, 0.0005\}$ on the WRN architecture, and is set to $0.0001$ for the DN architecture. For AdamW, the weight-decay is cross-validated as a power of 10. For ALI-G, BORAT, $\ell_2$ regularization is expressed as a constraint on the norm of the vector of parameters; and its value is cross validated in $\{50, 75, 100\}$. For all optimisation algorithms, the WRN model is trained for 200 epochs and the DN model for 300 epochs, following \citep{Zagoruyko2016} and \citep{Huang2017a} respectively.

\paragraph{Results.}
Table \ref{tab:cifar} details the results of the comparison of ALI-G and BORAT against other single hyperparameter optimisers for the CE loss only. In this setting, ALI-G and BORAT outperforms AMSGrad, AdamW, Yogi, $L_4$Adam and $L_4$ Mom and constant step size SGD by a large margin. This is true for all values of $N$. The second best method is DFW which also has the restriction that it can only be used in conjunction with the hinge loss. ALI-G and BORAT produce the test accuracy achievable using SGD with the manual learning rate schedules from \citep{Huang2017a} and \citep{Zagoruyko2016} for half of the model data set combinations considered here. For these tasks BORAT provides state of the art results without the need for the learning rate to be manually adapted through out training. For the remaining two combinations; training a DN on Cifar10 and training a WRN on Cifar100, BORAT lags in performance by approximately $0.2\%$ and $2\%$ respectively. With minor variation depending on which version of BORAT is used. Note SGD with a hand tuned learning rate schedule provides a reasonable upper limit of the generalisation performance achievable due to the amount of time that has been put into improving the schedule.

\begin{table}[h]
\centering
\begin{tabular}{lcccc}
    \toprule
    \multicolumn{5}{c}{Test Accuracy on CIFAR (\%)} \\
    \midrule
    &\multicolumn{2}{c}{CIFAR-10} &\multicolumn{2}{c}{CIFAR-100} \\
    \cmidrule(lr){2-3} \cmidrule(lr){4-5}
    & WRN & DN & WRN & DN \\
    \cmidrule(lr){2-2} \cmidrule(lr){3-3} \cmidrule(lr){4-4} \cmidrule(lr){5-5}
    SGD & 91.2 & 91.5 &  67.8 &  67.5 \\
    AMSGrad & 90.8 & 91.7 &  68.7 &  69.4 \\
    AdamW & 92.1 & 92.6 &  69.6 &  69.5 \\
    Yogi & 91.2 & 92.1 &  68.7 &  69.6 \\
    DFW & 94.2 & 94.6 & 76.0 &  73.2 \\
    $L_4$Adam & 90.5 & 90.8 &  61.7 &  60.5 \\
    $L_4$Mom & 91.6 & 91.9 &  61.4 &  62.6 \\
    ALI-G & {\bf 95.4} & 94.5 & {\bf 76.1} &  76.2 \\
    BORAT3 & {\bf 95.4} & {\bf 94.9} &  76.0 &  {\bf 76.5} \\
    BORAT5 & 95.0 & {\bf 94.9} &  75.8 &   75.7 \\
    \bottomrule
    \end{tabular}
\caption{\em
    Test accuracy of single hyperparameter optimisation methods. Each reported result is an average over three independent runs; the standard deviations and optimal hyperparameters are reported in Appendix \ref{App:Additional_Results} (the standard deviations are at most $0.4$ for ALI-G and BORAT).}
\label{tab:cifar}
\end{table}

\subsubsection{Wide Residual Networks on Tiny ImageNet}

\paragraph{Setting.}\label{tiny_setting}
Tiny ImageNet contains 200 classes for training where each class has 500 images. The validation set contains 10,000 images. All images are RGB with 64x64 pixels. The test-set contains 10,000 images however ground truth labels are not freely available. We again use the Wide Residual Network (WRN) detailed in Section \ref{sec:SVHN}. The images are centred and normalized per channel. We apply standard data augmentation with random horizontal flipping and random crops. 

\paragraph{Method.}
We investigate using SGD (with a constant step-size), ALI-G and BORAT to train a WRN on Tiny ImageNet. We use Adam \citep{Kingma2015} as an indicator of what can be expected from a popular adaptive method applied to the same task. We make use of both the cross entropy (CE) and multi-class hinge (SVM) losses. The learning-rate hyperparameter $\eta$ is cross validated in powers of 10, a batch-size of $128$ and a training time of 200 epochs was used for all experiments. The $\ell_2$ regularisation is cross validated in powers of 10 for ADAM. For constant step-size SGD, ALI-G and BORAT we make use of the constrained base regularisation detailed in Section \ref{sec:preliminaries}, cross validating $r \in \{50, 100, 150, 200, 250\}$. Additionally all methods excluding Adam use a Nesterov momentum of 0.9.

\paragraph{Results.}
The best results for SGD, ALI-G, BORAT and Adam are shown in table \ref{tab:tiny}. For both losses Adam performs worst, only achieving $2.4\%$ validation accuracy when optimising the SVM loss. The best results are achieved by BORAT with $N=5$ and $N=3$ for the CE and SVM losses respectively. These results suggest the added gain of BORAT is more pronounced for larger more challenging data sets.

\begin{table}[ht]
\centering
\begin{tabular}{lcc}
    \toprule
    \multicolumn{3}{c}{Validation Accuracy on Tiny ImageNet (\%)} \\
    \midrule
    &CE Loss & Hinge Loss \\
    \cmidrule(lr){2-2} \cmidrule(lr){3-3}
    Adam & 55.0 & 2.4 \\
    SGD & 59.4 & 23.2 \\
    ALI-G & 61.1 & 24.9 \\
    BORAT3 & 61.1 & {\bf44.1} \\
    BORAT5 & {\bf62.1} & 39.4 \\
    \bottomrule
    \end{tabular}
\caption{\em
    Validation accuracy of single hyperparameter optimisation methods on the Tiny ImageNet data set for cross entropy and hinge loss.
    }
\label{tab:tiny}
\end{table}

\subsubsection{Comparing Training Performance on CIFAR-100}

In this section, we empirically assess the performance of ALI-G and its competitors in terms of training objective on CIFAR-100. In order to have comparable objective functions, the $\ell_2$ regularization is deactivated. The learning-rate is selected as a power of ten for best final objective value, and the batch-size is set to its default value. For clarity, we only display the performance of SGD, Adam, Adagrad and BORAT (DFW does not support the cross-entropy loss). The $L_4$ methods diverge in this setting. Here SGD uses a constant learning-rate to demonstrate the need for adaptivity. Therefore all methods use one hyperparameter for their learning-rate. All methods use a fixed budget of 200 epochs for WRN-CIFAR-100 and 300 epochs for DN-CIFAR-100. As can be seen, ALI-G and BORAT provides better training performance than the baseline algorithms on both tasks. Here BORAT3 and BORAT5 are slightly slower than ALI-G due to the fewer number of parameter updates.

\begin{figure}[h]
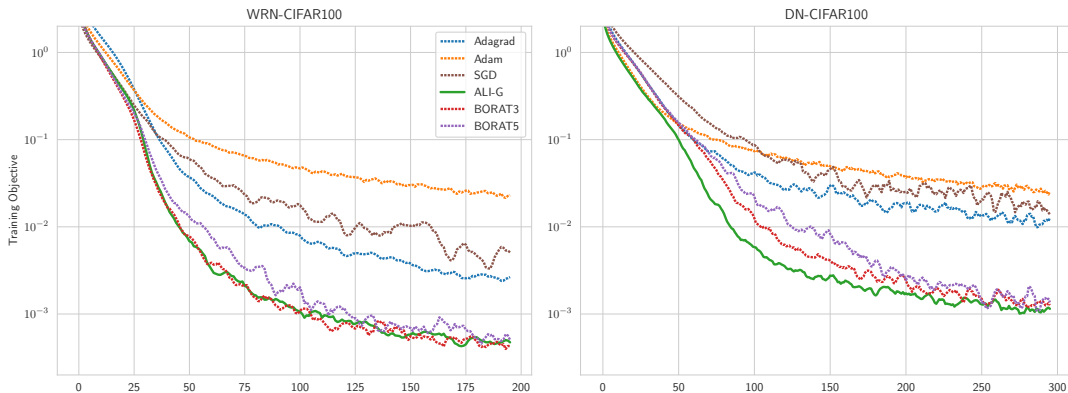

\centering
\footnotesize
\scalebox{0.45}{\input{include/cifar_train_wrn_cifar100.pgf}}
\hspace{-10pt}
\scalebox{0.45}{\input{include/cifar_train_dn_cifar100.pgf}}
\caption{\em
Objective function over the epochs on CIFAR-100 (smoothed with a moving average over 5 epochs).
ALI-G and BORAT reaches a value that is an order of magnitude better than the baselines.
}
\label{fig:cifar100_training}
\end{figure}

\subsubsection{Training at Large Scale}

We demonstrate the scalability of BORAT by training a ResNet18 \citep{He2016} on the ImageNet data set. In order to satisfy the interpolation assumption, we employ a loss function tailored for top-5 classification \citep{Lapin2016}, and we do not use data augmentation. Our focus here is on the training objective and accuracy. ALI-G and BORAT uses the following training setup: a batch-size of 1024 split over 4 GPUs, a $\ell_2$ maximal norm of 400 for $\w$, a maximal learning-rate of 10 and no momentum. As can be seen in figure 5, ALI-G and BORAT reaches 99\% top-5 accuracy in 12 epochs, and accurately minimises the objective function to $2\cdot10^{-4}$ within $90$ epochs.

\vspace{10pt}
\begin{figure}[H]
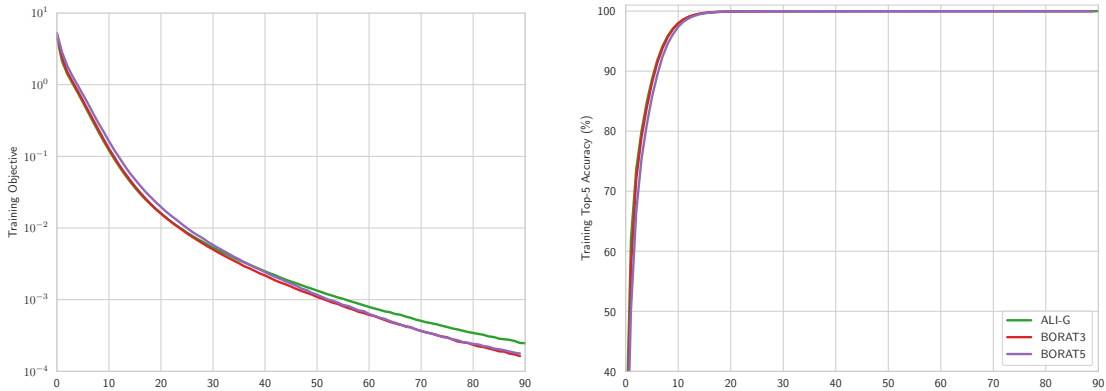

\centering
\footnotesize
\hfill
\scalebox{0.45}{\input{include/imagenet_obj.pgf}}
\hfill
\scalebox{0.45}{\input{include/imagenet_acc.pgf}}
\hfill
\caption{\em
Training performance of a ResNet-18 learning with different versions of BORAT on ImageNet. Note, that all versions converge at similar rates even though for larger values of $N$, BORAT3 and BORAT5 makes far fewer updates.}
\label{fig:imagenet_training}
\end{figure}

\subsection{Robustness of BORAT}

In this section we show that using additional linear approximations increases the robustness of BORAT to its learning rate $\eta$ and the problem regularisation amount $r$. BORAT produces high accuracy models for a wider range of values than SGD and ALI-G. Hence on problems where SGD and ALI-G are sensitive to their learning rate BORAT with $N\geq2$ produces the best performance. These results demonstrate the advantage of using a larger bundle to model the loss in each proximal update.\\
In order to illustrate this increased robustness we assess the stability of constant step size SGD, ALI-G, BORAT3 and BORAT5 to their hyperparameters. This is done by completing a grid search over $\eta$ and $r$ for a number of tasks while holding the batch size and epoch budget constant. This allows us to assess the range of values where these algorithms produce high accuracy models. We perform this grid search for six tasks split over the CIFAR100 and Tiny ImageNet data sets. We choose these data sets as they are challenging yet a model capable of interpolation can still fit on a single GPU. We compare against SGD with a constant learning rate for two reasons. First, SGD effectively uses a bundle of size 1, composed of only the linearization around the current iterate. Second, constant step size SGD has a single learning rate hyperparameter with the same scale as BORAT and also permits the constraint based regularisation described in Section \ref{sec:preliminaries}. 

\subsubsection{Wide Residual Networks on CIFAR100 and Tiny ImageNet}

\paragraph{Setting.}
For a description of the CIFAR100 and Tiny Imagenet data sets please refer to sections \ref{cifar_setting} and \ref{tiny_setting} respectively. 

\paragraph{Method.}
We run four separate experiments on the CIFAR100 data set. The first two of these experiments examine the robustness to hyperparameters of SGD, ALI-G and BORAT when in combination with the cross entropy (CE) and multi-class hinge (SVM) losses. For the second two experiments on CIFAR100 we investigate the same algorithms in the presents of label noise. We limit our self to the CE loss for these two experiments. Label noise is applied by switching the label of the images in the training set with probability $p$ to a random class label in the same super class. We repeat the experiments without label noise on the more challenging Tiny ImageNet data set, resulting in six different settings. For all algorithms we train a wide residual network (WRN) detailed in Section \ref{cifar_setting} and make use of a Nesterov momentum of 0.9 as we found its use produced superior generalisation performance. For all experiments we use a batch size of 128 and perform a grid search over the 20 hyper parameter combinations given by $r \in \{50, 100, 150, 200, 250\}$ and $\eta \in \{0.01, 0.1, 1.0, 10.0\}$. 
\begin{figure}[H]
  \centering
  \includegraphics[width=\textwidth]{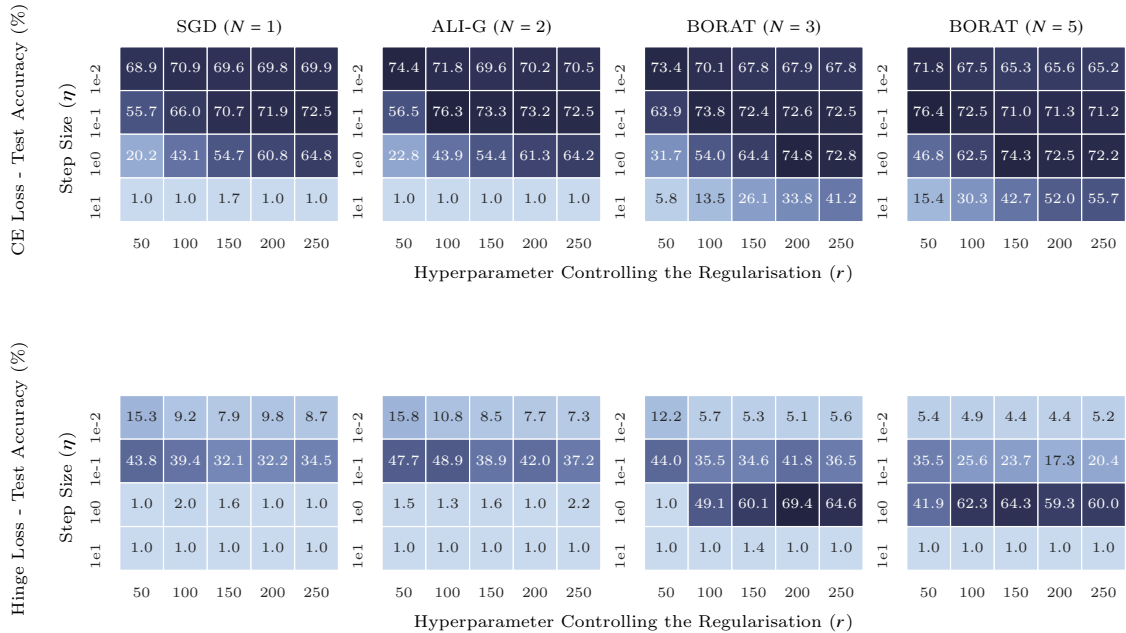}
  \caption{Comparison of SGD and BORAT's robustness to hyperparameters when increasing $N$ for the CE and SVM losses. Experiments are performed on the CIFAR100 data set. Colour represents test performance, where darker colours correspond to higher values. For both losses, increasing $N$ allows for higher learning rates to be used while still producing convergent behaviour. Additionally for the CE loss and $\eta\in\{1.0,10.0\}$ larger $N$ allows for a smaller $r$ or greater levels of regularisation to be used. Consequently increasing $N$ improves the over all robustness to hyperparameters, while not sacrificing generalisation performance.}
  \label{fig:cifar_robust}
\end{figure}

\paragraph{Results.}
Figure \ref{fig:cifar_robust} details the robustness of SGD, ALI-G and BORAT for the CE and SVM losses without label noise. In these two settings ALI-G exhibits similar robustness to SGD, however it outperforms SGD in terms of the performance of the best model trained. On the CE loss SGD and ALI-G are reasonably robust, providing good results for the majority of hyperparameter combinations with $\eta\leq1$. For the SVM loss all algorithms are sensitive to their learning rate $\eta$. SGD and ALI-G only produce an increase in accuracy for small values of $\eta$, hence no model achieves high accuracy in the 200 epoch budget. For both losses, increasing $N$ improves the robustness and produces convergent behaviour for larger values of $\eta$ and smaller values of $r$. This is particularly pronounced for the SVM loss. Here permitting larger $\eta$ allows for a high accuracy network to be trained within the 200 epoch budget. For both losses and $\eta=0.01$ BORAT3 and BORAT5 slightly under perform compared to ALI-G and SGD. This is simply a consequence of these methods making significantly fewer updates. In spite of this the resultant effect of increasing the bundle size is a larger range of hyperparameters that produce good generalisation performance.\\
\noindent
The results from the label noise experiments are shown in Figure \ref{fig:cifar_noisy}. For $p=0.1$ the results closely mirror those where no label noise is used, shown in Figure \ref{fig:cifar_robust}, however here all models achieve roughly $0-5\%$ worse test performance.  When the noise is increased to $p=0.5$ the test error increases drastically by $0-25\%$. In both cases $p=0.1$ and $p=0.5$, increasing $N$ permits to obtain good results for a larger range of values of $\eta$ and $r$. Interestingly, using $N=3,5$ results in the best performance when $p=0.5$. This is somewhat expected since using a larger bundle size means more samples are used to calculate each parameter update and hence helps reduce the effect of the label noise. \\
\begin{figure}[H]
  \centering
  \includegraphics[width=\textwidth]{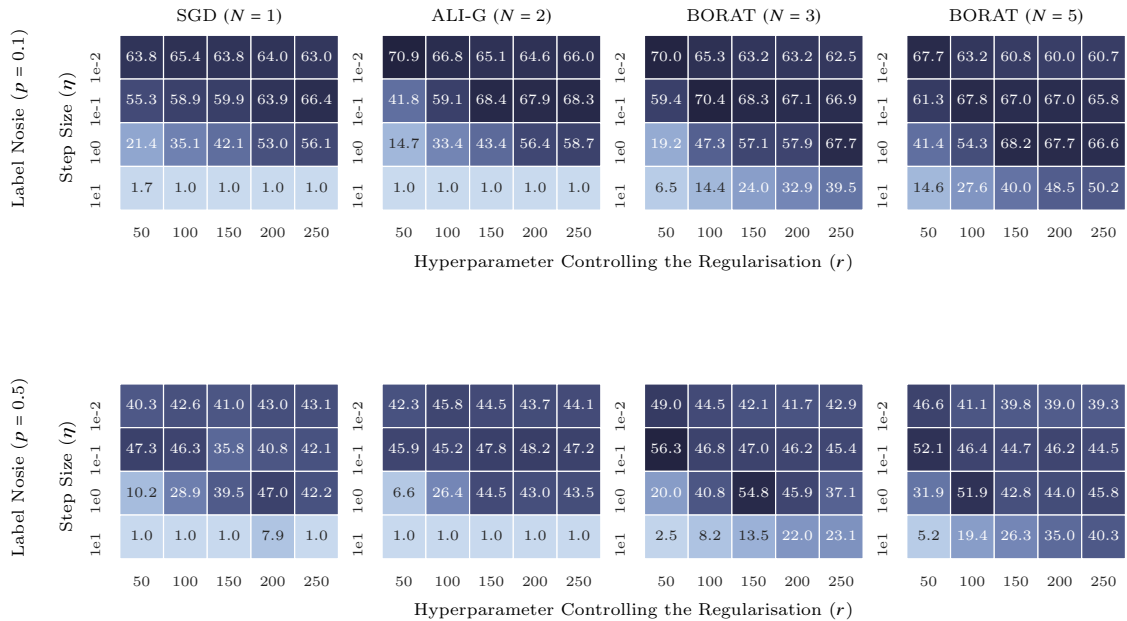}
  \caption{Test accuracy of SGD, ALI-G, BORAT3 and BORAT5 robustness to hyperparameters when trained on CIFAR100 with noisy labels. Here we give results with two different levels of noise $p=0.1$ (upper row) and $p=0.5$ (lower row). For readability purposes, the colour encodes test accuracy, where darker colours correspond to higher values. Increasing $N$ allows for higher learning rates and greater levels of regularisation to be used. Additionally, when the level of noise is high ($p=0.5$, lower row), BORAT3 and BORAT5 significantly outperforms SGD $(N=1)$ and ALI-G $(N=2)$.}
  \label{fig:cifar_noisy}
   \hspace{-0.8cm}
\end{figure}
\noindent
Figure \ref{fig:tiny_robust} details the robustness of SGD, ALI-G and BORAT for the Tiny Imagenet experiments. These results show BORAT offers improved robustness of on multiple data sets as we recover similar performance to CIFAR100. For the CE loss increasing $N$ allows for slightly higher learning rates and greater levels of regularisation to be used for the CE loss. For the SVM loss BORAT produces models with reasonable accuracy with $\eta \in \{0.1,1.0\}$ opposed to SGD and ALI-G that produce poor results for all learning rates excluding $\eta=1.0$. Consequently, when training with the SVM loss for 200 epochs, BORAT produces a drastically better models than SGD and ALI-G. This happens despite BORAT making significantly fewer updates of $\w_t$ in the 200 epochs. 
\begin{figure}[H]
  \centering
  \includegraphics[width=\textwidth]{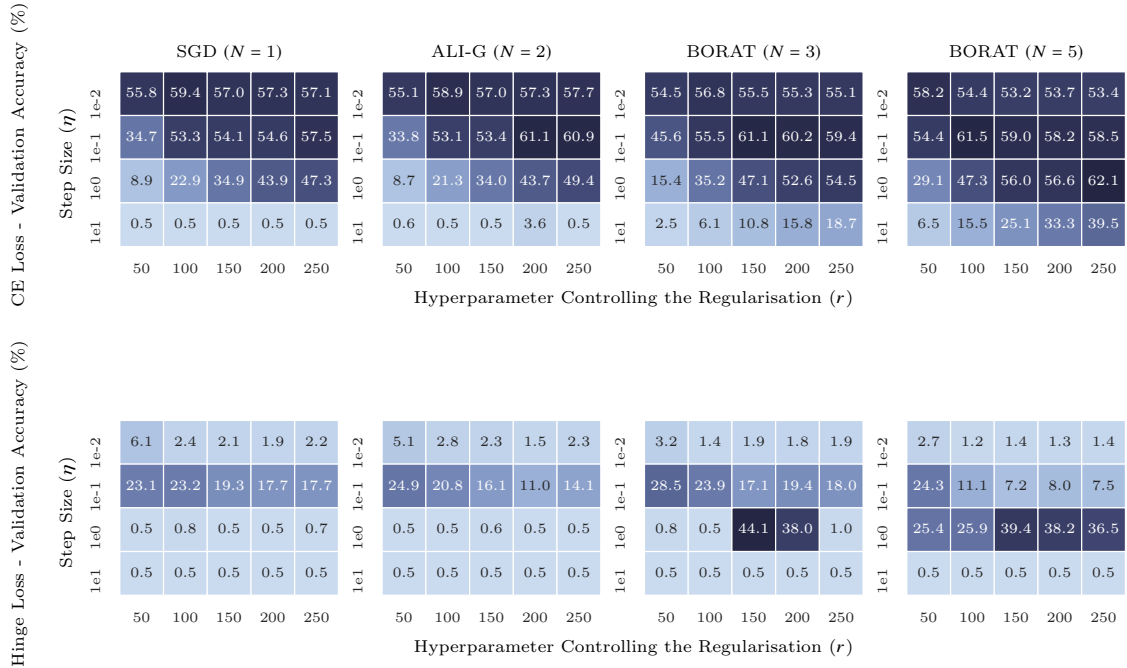}
  \caption{Comparison of SGD, ALI-G and BORAT's robustness to hyperparameters on the Tiny ImageNet data set. Here we investigate the affect of increasing the bundle size when in use with the CE and Hinge losses. Colour represents validation performance, where darker colours correspond to higher values. The the CE loss is used BORAT produces an increase in the range of hyperparameters that result in high accuracy models. For the SVM loss BORAT is the only optimiser to produce accurate results}
  \label{fig:tiny_robust}
\end{figure}
\noindent
Across all six experiments BORAT consistently produces high accuracy models for a larger range of hyperparameter combinations than SGD or ALI-G. This is particularly pronounced for the SVM loss experiments where the decrease sensitivity of BORAT makes all the difference between finding hyperparameters that result in a high accuracy model and not. Consequently, in comparison to its competitors, BORAT is systematically more robust to the choice of learning-rate and regularization hyper-parameter, and also offers better generalization more often than not. Therefore we particularly recommend using BORAT for tasks where hyper-parameter tuning is a highly time consuming task.

\section{Discussion}
In this work have introduced BORAT a bundle method designed for the optimisation of DNNs capable of interpolation. We detailed a special case of BORAT with a minimal bundle size that we name ALI-G. ALI-G produces a algorithm that automatically decays the steps size as the loss of the iterate approaches its minimal value. By only needing to find a good maximal learning rate that is automatically decayed we remove the labour and compute intensive task of finding a good learning rate schedule. Using standard publicly available data sets, we have shown that a ALI-G is highly effective in a number of settings. When tuning a single hyperparameter for the learning-rate, ALI-G achieves state of the art performance while being simple to implement and having minor additional computational cost over SGD. \\
For problems where ALI-G is sensitive to its learning rate its robustness and performance can be enhanced by using BORAT which makes use of a greater bundle size. BORAT produces high accuracy models for a larger range of hyperparameters and is particularly effective when using the multi-class hinge loss or in the presence of label noise. However, BORAT also achieves similar generalisation performance to ALI-G in all settings considered. Our results suggest that BORAT is presently the most robust single hyperparameter optimisation method for DNNs. \\
It may be possible to modify BORAT so that it can make a parameter update after each gradient evaluation by cleverly selecting how linear approximations are added and removed from the bundle. If done correctly this may lead to a further increase in the speed of convergence. However we leave this to future work. When keeping the total number of gradient evaluations constant, a downside of using a larger bundle size for BORAT is that the number of parameter updates decreases.  Despite this, in our experiments, BORAT is able to obtain good results within the same budget of passes through the data. Notably, this also means that BORAT has a time per epoch comparable to that of adaptive gradient methods. Another potential criticism of BORAT is that its memory footprint grows linearly with the bundle size. However in practice, our results show that using a bundle size of three, which corresponds to the same memory cost as ADAM, is often sufficient to obtain improved robustness. Finally, the applicability of BORAT can be limited by the required assumption of interpolation. While we argue that this interpolation property is satisfied in many interesting use cases, it may also not hold true for one or more of the following confounding factors: (i) limited size of the neural network, such as those used in embedded devices; (ii) large size of the training data set, which is becoming increasingly common in machine learning; and (iii) complexity of the loss function, such as in adversarial training. Furthermore, the concept of interpolation itself is ill-defined for unsupervised tasks. Thus, another interesting direction of future work would be to generalise BORAT to non-interpolating settings. This could be achieved by adapting an estimate of the minimum value of the objective function online whilst retaining the desirable quality of minimal hyperparameter tuning.

\newpage
\appendix

\section{Dual Derivation}\label{App:Dual_Derivation}
\begin{lemma}
The dual of the primal problem (\ref{primal}), can be written as follows:
\begin{align*}
\sup_{\bm{\alpha} \in \Delta_N}\left\{ -\frac{\eta}{2}\bm{\alpha}^\top A_{t}^\top A_{t}\bm{\alpha} + \bm{\alpha}^\top \bm{b}_{t}^n \right\}.
\end{align*}
Where $A_{t}$ is defined as a $d \times N$ matrix, whose $n^{th}$ row is $\nabla \ell_{z_t^n}(\hat{\w}_{t}^n)$, $\bm{b}_{t}^n = [b_{t}^1,...,b_{t}^N]^\top$ and $\bm{\alpha} = [\alpha^{1}, \alpha^{2}, \dots, \alpha^{N}]^\top$.
\end{lemma}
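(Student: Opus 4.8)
The plan is to derive the dual of the primal problem
\begin{equation*}
\w_{t+1} = \argmin_{\w \in \Omega}\left\{ \frac{1}{2\eta}\|\w - \w_{t}\|^2 + \max_{n\in[N]}\left\{\nabla \ell_{z_{t}^n}(\hat{\w}_{t}^n)^\top(\w - \w_{t}) + b_{t}^n\right\} \right\}
\end{equation*}
by the standard Lagrangian/epigraph trick. First I would introduce an auxiliary scalar variable $\xi$ to linearise the pointwise maximum, rewriting the problem as minimising $\frac{1}{2\eta}\|\w - \w_t\|^2 + \xi$ subject to the $N$ linear constraints $\nabla \ell_{z_t^n}(\hat{\w}_t^n)^\top(\w - \w_t) + b_t^n \le \xi$ for $n \in [N]$. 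Writing $\s \triangleq \w - \w_t$ and letting $A_t$ be the $N \times d$ matrix whose $n$-th row is $\nabla \ell_{z_t^n}(\hat{\w}_t^n)^\top$, the constraints become $A_t \s + \bb_t \preceq \xi \bm{1}$. For the derivation I would take $\Omega = \mathbb{R}^d$ as in the unconstrained/projected split described after (\ref{eq:proxsgd}) (the projection onto $\Omega$ is applied afterwards), so the only constraints carrying multipliers are the $N$ bundle inequalities.

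Next I would form the Lagrangian with multipliers $\aalpha = [\alpha^1,\dots,\alpha^N]^\top \succeq 0$:
\begin{equation*}
L(\s,\xi,\aalpha) = \frac{1}{2\eta}\|\s\|^2 + \xi + \aalpha^\top\!\left(A_t \s + \bb_t - \xi \bm{1}\right).
\end{equation*}
Minimising over $\xi$ (a free variable with linear coefficient $1 - \aalpha^\top \bm{1}$) forces the constraint $\aalpha^\top \bm{1} = 1$, i.e. $\aalpha \in \Delta_N$, and otherwise drives the infimum to $-\infty$. Minimising over $\s$ is a smooth unconstrained quadratic: setting the gradient $\frac{1}{\eta}\s + A_t^\top \aalpha = 0$ gives $\s_\star = -\eta A_t^\top \aalpha$, which is exactly the update direction in (\ref{update}). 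Substituting $\s_\star$ back into $L$ yields
\begin{equation*}
\inf_{\s} L(\s,\xi,\aalpha) = -\frac{\eta}{2}\,\aalpha^\top A_t^\top A_t \aalpha + \aalpha^\top \bb_t,
\end{equation*}
so the dual problem is $\sup_{\aalpha \in \Delta_N}\{-\frac{\eta}{2}\aalpha^\top A_t^\top A_t \aalpha + \aalpha^\top \bb_t\}$, as claimed. I would also remark that strong duality holds because the primal objective is convex (a convex quadratic plus a pointwise max of affine functions), the feasible set of the epigraph reformulation has a trivially satisfied Slater point (pick any $\s$ and $\xi$ large enough), so there is no duality gap and the KKT stationarity condition $\s_\star = -\eta A_t^\top\aalpha_t$ recovers the primal optimum from the dual optimum.

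The main obstacle is bookkeeping rather than anything deep: one must be careful that eliminating $\xi$ is what produces the simplex constraint $\aalpha \in \Delta_N$ (as opposed to just $\aalpha \succeq 0$), and that the $b_t^n$ terms — which absorb the offset between $\hat{\w}_t^n$ and $\w_t$, as defined just before (\ref{primal}) — are carried through correctly into the linear term $\aalpha^\top \bb_t$. A secondary point worth a sentence is the role of $\Omega$: since the lemma statement omits it, I would note that incorporating a convex $\Omega$ either leaves the derivation unchanged up to replacing the unconstrained $\s$-minimisation by a projection (handled separately as in Section~\ref{Primal_Problem}), or introduces an additional normal-cone term; for the purpose of this lemma the stated form is the dual of the $\Omega = \mathbb{R}^d$ problem, consistent with how (\ref{dual}) and (\ref{update}) are used in the main text.
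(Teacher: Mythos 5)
Your proposal is correct and follows essentially the same route as the paper's own derivation in Appendix~A: introduce an epigraph variable for the pointwise maximum, form the Lagrangian, obtain $\bm{\alpha}^\top\bm{1}=1$ by eliminating the scalar, obtain $\s_\star=-\eta A_t^\top\bm{\alpha}$ from stationarity in the primal variable, and substitute back. Your explicit remarks about strong duality via Slater and about the tacit $\Omega=\mathbb{R}^d$ assumption are sound additions but do not change the substance of the argument.
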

\begin{proof}
We start from the primal problem:
\begin{align}\label{primal_}
\argmin_{\w \in \mathbb{R}^d}\left\{ \frac{1}{2\eta}||\w - \w_{t}||^2 + \max_{n\in[N]}
\left\{\nabla \ell_{z_t^n}(\hat{\w}_{t}^n)^\top(\w - \w_{t}) + b_{t}^n\right\} \right\},
\end{align}
We define $\tilde{\w} = \w - \w_{t}$.
\begin{align*}
&\min_{\w \in \mathbb{R}^d}\left\{ \frac{1}{2\eta}||\tilde{\w}||^2 + \max_{n\in[N]}\left\{
\nabla \ell_{z_t^n}(\hat{\w}_{t}^n)^\top\tilde{\w} + b_{t}^n \right\}\right\},
\\
&\min_{\w \in \mathbb{R}^d, \zeta}\left\{ \frac{1}{2\eta}||\tilde{\w}||^2 + \hspace{0.1cm}\zeta  \right\} \hspace{0.2cm}\text{subject to: }\zeta \geq \nabla \ell_{z_t^n}(\hat{\w}_{t}^n)^\top\tilde{\w} + b_{t}^n,\hspace{0.2cm} \forall n\in[N],
\\\hspace{0.2cm}
&\min_{\tilde{\w} \in \mathbb{R}^d, \zeta}\sup_{\alpha^n\geq 0 ,\forall n}\left\{ \frac{1}{2\eta}||\tilde{\w}||^2 + \hspace{0.1cm}\zeta - \sum_{n=1}^N \alpha^n( \zeta - \nabla \ell_{z_t^n}(\hat{\w}_{t}^n)^\top\tilde{\w} - b_{t}^n) \right\},
\\\hspace{0.2cm}
&\sup_{\alpha^n\geq 0 ,\forall n}\min_{\tilde{\w} \in \mathbb{R}^d, \zeta}\left\{ \frac{1}{2\eta}||\tilde{\w}||^2 + \hspace{0.1cm}\zeta - \sum_{n=1}^N\alpha^n( \zeta - \nabla \ell_{z_t^n}(\hat{\w}_{t}^n)^\top\tilde{\w} - b_{t}^n) \right\},\hspace{0.2cm}\text{(strong duality)}\\
&\sup_{\alpha^n\geq 0 ,\forall n}\min_{\tilde{\w} \in \mathbb{R}^d, \zeta}\left\{ \frac{1}{2\eta}||\tilde{\w}||^2 + \hspace{0.1cm}\zeta + \sum_{n=1}^N\alpha^n(  \nabla \ell_{z_t^n}(\hat{\w}_{t}^n)^\top\tilde{\w} + b_{t}^n -\zeta) \right\}.
\end{align*}
The inner problem is now smooth in $\tilde{\w}$ and $\zeta$. We write the KKT conditions:
\begin{align*}
\frac{\partial \cdot}{\partial \tilde{\w}} &= \frac{\tilde{\w}}{\eta} + \sum_{n=1}^N\alpha^n \nabla \ell_{z_t^n}(\hat{\w}_{t}^n) = 0 \\
\frac{\partial \cdot}{\partial \zeta} &= 1 - \sum_{n=1}^N\alpha^n = 0 
\end{align*}
We define $\Delta_N \triangleq \{ \bm{\alpha}\in \mathbb{R}^N:\sum_{n=1}^N\alpha^n = 1,\alpha^n\geq0, n= 1,...,N \}$ as probability simplex over the elements of $\bm{\alpha}$. Thus when we plug in the KKT conditions we obtain:
\begin{align*}
\hspace{0.2cm}
&\sup_{\bm{\alpha} \in \Delta_N}\left\{ \frac{1}{2\eta}||\eta\sum_{n=1}^N\alpha^n \nabla \ell_{z_t^n}(\hat{\w}_{t}^n)||^2 + \sum_{n=1}^N\alpha^n\left(  -\nabla \ell_{z_t^n}(\hat{\w}_{t}^n)^\top(\eta\sum_{m=1}^N\alpha^m \nabla \ell_{z_t^m}(\hat{\w}_{t}^m)) + b_{t}^n\right) \right\},\\
&\sup_{\bm{\alpha} \in \Delta_N}\left\{ \frac{\eta}{2}||\sum_{n=1}^N\alpha^n \nabla \ell_{z_t^n}(\hat{\w}_{t}^n)||^2 -\eta\sum_{n=1}^N\alpha^n(  \nabla \ell_{z_t^n}(\hat{\w}_{t}^n)^\top\sum_{m=1}^N \alpha^m \nabla \ell_{z_t^m}(\hat{\w}_{t}^m)) + \sum_{n=1}^N\alpha^nb_{t}^n \right\},\\
&\sup_{\bm{\alpha} \in \Delta_N}\left\{ \frac{\eta}{2}||\sum_{n=1}^N\alpha^n \nabla \ell_{z_t^n}(\hat{\w}_{t}^n)||^2 -\eta||\sum_{n=1}^N\alpha^n \nabla \ell_{z_t^n}(\hat{\w}_{t}^n)||^2 + \sum_{n=1}^N\alpha^n b_{t}^n \right\},\\
&\sup_{\bm{\alpha} \in \Delta_N}\left\{ -\frac{\eta}{2}||\sum_{n=1}^N\alpha^n \nabla \ell_{z_t^n}(\hat{\w}_{t}^n)||^2 + \sum_{n=1}^N\alpha^n b_{t}^n \right\}.
\end{align*}
Using the definitions of $A_{t}$, $\bm{b}_{t}^n$ and $\bm{\alpha}$, we can recover the following compact form for the dual problem:
\begin{align*}
\sup_{\bm{\alpha} \in \Delta_N}\left\{ -\frac{\eta}{2}\bm{\alpha}^\top A_{t}^\top A_{t}\bm{\alpha} + \bm{\alpha}^\top \bm{b}_{t}^n \right\}.
\end{align*}
\end{proof}
\setcounter{proposition2}{0}
\section{Proof of Proposition 1}\label{App:Proof_of_Proposition_1}
\begin{proposition2}
Let $F: \mathbb{R}^N \to \mathbb{R}$ be a concave function. 
Let us define $\bm{\alpha}_* = \argmax_{\bm{\alpha}\in\Delta}F(\bm{\alpha})$. 
Then there exists $c \in \mathbb{R}$ such that:
\begin{equation}\label{observation_2}
\forall n \in [N] \text{ such that } \alpha^{n}_* > 0, \text{ we have: }
   \frac{\partial F(\bm{\alpha})}{\partial \alpha^n}\bigg\rvert_{\bm{\alpha} = \bm{\alpha}_*} =  c.
\end{equation}
In other words, the value of the partial derivative is shared among all coordinates of $\bm{\alpha}_*$ that are non-zero.
\end{proposition2}
\begin{proof}
We start form the optimally condition for constrained problems:
\begin{align}\label{eq:opt_con_cov}
\langle\nabla F(\bm{\alpha}_*),\bm{\alpha}_*- \bm{\alpha} \rangle\geq 0, \hspace{0.5cm} \forall \bm{\alpha} \in \Delta
\end{align}
We consider the point $\hat{\bm{\alpha}}$. Without loss of generality we assume $\hat{\bm{\alpha}}$ is equal to $\bm{\alpha}_*$ for all but two dimensions $i$ and $j$. We let $\hat{\alpha}_i = 0$ and $\hat{\alpha}_j = \alpha^*_i + \alpha^*_j$. Hence we know $\hat{\bm{\alpha}} \in \Delta$ as we have $\sum_i\hat{\alpha}_i =\sum_i\alpha^*_i = 1$ and $\hat{\alpha}_i\geq 0, \hspace{0.1cm} \forall i$. Letting $\bm{\alpha}=\hat{\bm{\alpha}}$ in (\ref{eq:opt_con_cov}) give us:
\begin{align}
-\frac{\partial F(\bm{\alpha}_*)}{\partial \alpha_i}\alpha^*_i 
+ \frac{\partial F(\bm{\alpha}_*)}{\partial \alpha_j}\alpha^*_i \geq 0.
\end{align}
Rearranging we have:
\begin{align}
\frac{\partial F(\bm{\alpha}_*)}{\partial \alpha_j}\alpha^*_i \geq \frac{\partial F(\bm{\alpha}_*)}{\partial \alpha_i}\alpha^*_i.
\end{align}
Hence for any $\alpha^*_i\neq0$,  we have:
\begin{align}
\frac{\partial F(\bm{\alpha}_*)}{\partial \alpha_j} \geq \frac{\partial F(\bm{\alpha}_*)}{\partial \alpha_i}.
\end{align}
Noticing, this results holds for any $i$ and $j$ gives us the following and proves the result.
\begin{align}
\frac{\partial F(\bm{\alpha}_*)}{\partial \alpha_j} = \frac{\partial F(\bm{\alpha}_*)}{\partial \alpha_i} = c
\end{align}
\end{proof}

\newpage
\section{Proof of Proposition 3}\label{App:Proof_of_Proposition_2}
\setcounter{proposition2}{2}
\begin{proposition2}\label{eight}
Algorithm \ref{alg:dualsol} returns a solution $\bm{\alpha}^*$  that satisfies $\bm{\alpha}^* \in \argmax_{\bm{\alpha}\in \Delta_N} D(\bm{\alpha})$. This is true even when a the dual does not have a unique solution.
\end{proposition2}
\begin{proof}
let $\bm{x}^* \in \argmax_{\bm{\alpha}\in \Delta_N} D(\bm{\alpha})$ such that $\bm{x}^*$ has a maximal number of zero coordinates. $x^*$ exists as we know the solution set is empty. Let $I$ be the set of non-zero coordinates of $x^*$. We denote by $S^{(I)}$ the set of soultions to the linear system associated with $I$:
\begin{equation}\label{eq:s_def}
    \mathcal{S}^{(I)}\triangleq \left\{ \bm{x} \in \mathbb{R}^{|I|}:
        \tilde{Q}{\bm{x}} = \tilde{\bm{b}}, \bm{x} \geq 0\right\}, \text{where}, \tilde{Q} \triangleq \begin{bmatrix}
            Q_{[I \times I]} & \bm{1}\\
            \bm{1}^\top & 0
        \end{bmatrix} \text{and},  \tilde{\bm{b}}= \begin{bmatrix}\bm{b}_{[I]}\\1\end{bmatrix}.
\end{equation}
$\mathcal{S}^{(I)}$ is a polytope as the intersection intersection between the probability simplex and a linear sub-space. There for it admits and vertex representation:
\begin{equation}\label{eq:v_def}
    \mathcal{S}^{(I)} = \text{Conv}(\mathcal{V}^{(I)})
\end{equation}
such that Conv$(\cdot)$ denotes the convex hull operation, and $\mathcal{V}^{(I)}$ is a finite set. Since $\mathcal{S}^{(I)}$ contains $\bm{x}^*[I]$, $\mathcal{S}^{(I)}$ is non-empty and neither is $\mathcal{V}^{(I)}$. let $v$ be and element of $\mathcal{V}^{(I)}$. Note, that if $v$ had one or more zero coordinates, it would be a solution after lifting to $\mathbb{R}^N$ it would also be a optimal solution to the dual as good as $\bm{x}^*$ while having more zero coordinates than x*, this is impossible by the definition of $\bm{x}^*$. Thus we can conclude $v$ has exclusively non-zero coordinates.\\
Since $v$ is a external point of $S^{(I)}$, see section 2.1 of \citet{Bertsekas2009} for a definition. It follows from proposition 2.1.4b of \citep{Bertsekas2009} the columns of $\tilde{Q}$  are independent. Therefore the linear system admin a unique solution $\bm{x}^*$, which is found by Algorithm \ref{alg:dualsol}.
\end{proof}

\vspace{-0.3cm}
\section{Convex Results}\label{App:Convex_Results}

\begin{lemma}\label{lemma:dual_values}
Adding additional linear approximations to the bundle of BORAT can never result in a lower maximal dual value. Formally:
\begin{align}
 D^m(\bm{\alpha}_*) \geq D^l(\bm{\alpha}_*) \hspace{0.5cm} \forall m > l. 
\end{align}
\end{lemma}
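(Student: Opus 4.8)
The plan is to show that the size-$l$ bundle is contained, as a set of affine lower approximations, in the size-$m$ bundle, and then to transfer the inequality to the dual by a coordinate embedding (or, equivalently, by strong duality through the primal).

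First I would establish the nesting of the bundles. By the construction of Section~\ref{Bundle_Construction}, every BORAT bundle sets $\hat{\w}_t^1 = \w_t$ and reserves its last slot for the lower bound $0$ (zero gradient, zero offset); the intermediate centres $\hat{\w}_t^2,\dots$ are produced recursively by~(\ref{eq:bundle}), and the key point is that the recursion defining $\hat{\w}_t^n$ uses only the first $n-1$ pieces and not the target bundle size $N$. Under the assumption in force for the convergence analysis (all linear pieces use the same mini-batch), a short induction on $n$ then shows that for $n \le l-1$ the centre $\hat{\w}_t^n$, and hence the pair $(\nabla \ell_{z_t^n}(\hat{\w}_t^n), b_t^n)$, is the same whether one builds a bundle of size $l$ or of size $m > l$. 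Since the lower-bound piece occurs in both bundles as well (at slot $l$ for size $l$, at slot $m$ for size $m$), we conclude that the $l$ affine functions of the size-$l$ bundle all appear among the $m$ affine functions of the size-$m$ bundle.

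Next I would exploit this nesting directly in the dual~(\ref{dual}), where $D^N(\bm{\alpha}) = -\tfrac{\eta}{2}\bm{\alpha}^\top A_t^\top A_t \bm{\alpha} + \bm{\alpha}^\top \bm{b}_t$ with the rows of $A_t$ the gradients and the entries of $\bm{b}_t$ the offsets of the bundle. Let $\iota: \mathbb{R}^l \to \mathbb{R}^m$ be the coordinate injection sending the $l$ coordinates of the size-$l$ bundle to the matching coordinates of the size-$m$ bundle (in particular the lower-bound slot to the lower-bound slot) and setting the remaining $m-l$ coordinates to $0$. Then $\iota(\Delta_l) \subseteq \Delta_m$, and because the relevant rows of $A_t$ and entries of $\bm{b}_t$ coincide while all other coordinates are multiplied by zero, one checks that $D^m(\iota(\bm{\alpha})) = D^l(\bm{\alpha})$ for every $\bm{\alpha} \in \Delta_l$. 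Hence
\begin{equation}
D^m(\bm{\alpha}_*) = \max_{\bm{\alpha} \in \Delta_m} D^m(\bm{\alpha}) \ \ge\ \max_{\bm{\alpha} \in \Delta_l} D^m(\iota(\bm{\alpha})) \ =\ \max_{\bm{\alpha} \in \Delta_l} D^l(\bm{\alpha}) \ =\ D^l(\bm{\alpha}_*),
\end{equation}
which is the claim for all $m > l$. Alternatively one argues in the primal: the size-$m$ model is a pointwise maximum over a superset of affine functions, hence dominates the size-$l$ model everywhere, so its proximal minimum in~(\ref{primal}) is no smaller; strong duality, established in Appendix~\ref{App:Dual_Derivation}, then carries the inequality to the dual optima.

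The main obstacle is the first step: verifying that the bundles are genuinely nested. This requires noting that~(\ref{eq:bundle}) is independent of the final bundle size so that the induction closes, and doing the index bookkeeping correctly so that the lower-bound slot is matched to the lower-bound slot under $\iota$ (i.e.\ $\iota$ is a permutation-with-zero-padding, not a bare append). Once that is in place, both the dual-embedding argument and the primal/strong-duality argument are immediate.
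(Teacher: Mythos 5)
Your proposal is correct and follows the same core idea as the paper: embed $\Delta_l$ into $\Delta_m$, observe that the dual objective is preserved under the embedding, and conclude by noting the maximum over a larger feasible set is at least as large. The paper compresses this to a single sentence ("lift by appending $m-l$ zeros"), and you have usefully tightened two points it glosses over: first, that the coordinate injection must match the lower-bound slot of the size-$l$ bundle to the lower-bound slot of the size-$m$ bundle (a bare append would misalign slot $l$, which carries the zero-gradient piece in the $l$-bundle but a nonzero-gradient piece in the $m$-bundle, and would \emph{not} preserve $D$), and second, that the nesting of the bundle contents themselves needs to be checked via the fact that the recursion~(\ref{eq:bundle}) is agnostic to the final bundle size. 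Your alternative primal/strong-duality argument (the size-$m$ model pointwise dominates the size-$l$ model) is also valid and is not given in the paper, though under the hood it is the same monotonicity.
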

\begin{proof}
Any vector of $\Delta_l$ can be lifted to $\Delta_{m}$ by appending $m-l$ zeros to it, which does not impact the value of the objective function.
The lifted set $\Delta_l$ is then a subset of $\Delta_{m}$, hence the result (maximisation performed performed over a larger space).
\end{proof}

\begin{utheorem}[Convex] \label{th:convex_lipschitz}
We assume that $\Omega$ is a convex set, and that for every $z \in \Z$, $\ell_z$ is convex and. Let $\wstar$ be a solution of (\ref{eq:main_problem}), and assume that we have perfect interpolation: $\forall z \in \Z, \: \ell_z(\wstar) = 0$. Then BORAT for $N\geq2$ applied to $f$ satisfies:
\vspace{-0.2cm}
\begin{equation}
\|\w_{t+1} - \w^*\|^2   \leq \|\w_{t} - \w^*\|^2-2\eta \max_{\bm{\alpha} \in \Delta_N}D(\bm{\alpha}),
\end{equation}
where $D$ is defined in (\ref{dual}).
\end{utheorem}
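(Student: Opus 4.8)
The plan is to prove the one-step contraction inequality by combining three ingredients: the explicit form of the BORAT update, the dual characterization of the update, and convexity together with perfect interpolation. First I would expand $\|\w_{t+1} - \w^*\|^2$ using the update rule $\w_{t+1} = \w_t - \eta A_t \bm{\alpha}_t$ (Equation \eqref{update}), where $\bm{\alpha}_t$ is the maximizer of the dual $D$ over $\Delta_N$. This gives
\begin{equation*}
\|\w_{t+1} - \w^*\|^2 = \|\w_t - \w^*\|^2 - 2\eta \langle A_t\bm{\alpha}_t, \w_t - \w^*\rangle + \eta^2 \|A_t\bm{\alpha}_t\|^2.
\end{equation*}
The goal is then to show that $-2\eta\langle A_t\bm{\alpha}_t, \w_t - \w^*\rangle + \eta^2\|A_t\bm{\alpha}_t\|^2 \leq -2\eta \max_{\bm{\alpha}\in\Delta_N} D(\bm{\alpha}) = -2\eta D(\bm{\alpha}_t)$.

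The key step is to relate the inner product $\langle A_t\bm{\alpha}_t, \w_t - \w^*\rangle$ to the quantities appearing in $D(\bm{\alpha}_t) = -\tfrac{\eta}{2}\bm{\alpha}_t^\top A_t^\top A_t \bm{\alpha}_t + \bm{\alpha}_t^\top \bm{b}_t$. Since each nonzero row $n$ of $A_t$ is $\nabla\ell_{z_t}(\hat{\w}_t^n)$ (recall the modification in the theorem: all linear approximations use the same mini-batch $\ell_{z_t}$) and $\bm{\alpha}_t$ lives on the simplex, I would write $A_t\bm{\alpha}_t = \sum_n \alpha_t^n \nabla\ell_{z_t}(\hat{\w}_t^n)$ and use convexity of $\ell_{z_t}$ at each center $\hat{\w}_t^n$ evaluated at $\w^*$: $\nabla\ell_{z_t}(\hat{\w}_t^n)^\top(\hat{\w}_t^n - \w^*) \geq \ell_{z_t}(\hat{\w}_t^n) - \ell_{z_t}(\w^*) = \ell_{z_t}(\hat{\w}_t^n)$ by perfect interpolation. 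Splitting $\hat{\w}_t^n - \w^* = (\hat{\w}_t^n - \w_t) + (\w_t - \w^*)$ and recalling the definition $b_t^n = \ell_{z_t}(\hat{\w}_t^n) - \nabla\ell_{z_t}(\hat{\w}_t^n)^\top(\hat{\w}_t^n - \w_t)$, this rearranges to $\nabla\ell_{z_t}(\hat{\w}_t^n)^\top(\w_t - \w^*) \geq b_t^n$ for every $n$ (the zero-gradient row with $b_t^N = 0$ satisfies this trivially). Taking the convex combination with weights $\alpha_t^n$ yields $\langle A_t\bm{\alpha}_t, \w_t - \w^*\rangle \geq \bm{\alpha}_t^\top \bm{b}_t$.

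Plugging this bound in gives
\begin{equation*}
\|\w_{t+1}-\w^*\|^2 \leq \|\w_t - \w^*\|^2 - 2\eta\, \bm{\alpha}_t^\top\bm{b}_t + \eta^2\|A_t\bm{\alpha}_t\|^2 = \|\w_t - \w^*\|^2 - 2\eta\left(\bm{\alpha}_t^\top\bm{b}_t - \tfrac{\eta}{2}\bm{\alpha}_t^\top A_t^\top A_t\bm{\alpha}_t\right),
\end{equation*}
and the parenthesized term is exactly $D(\bm{\alpha}_t) = \max_{\bm{\alpha}\in\Delta_N} D(\bm{\alpha})$, which is the claim. I expect the main obstacle to be the bookkeeping in the convexity step — specifically, carefully checking that the constant term $b_t^n$ (which mixes the loss value, the gradient, and the offset $\hat{\w}_t^n - \w_t$) is precisely what drops out when one applies the convexity inequality at $\w^*$ and subtracts the first-order term at $\w_t$. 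One should also verify that the bound $\langle A_t\bm{\alpha}_t,\w_t-\w^*\rangle \geq \bm{\alpha}_t^\top\bm{b}_t$ holds without needing a sign condition on $\bm{b}_t$, using only $\bm{\alpha}_t \in \Delta_N$ and the per-coordinate inequalities; this is where the simplex constraint (as opposed to just nonnegativity) is used implicitly through the fact that $A_t\bm{\alpha}_t$ is a genuine convex combination of the $\nabla\ell_{z_t}(\hat{\w}_t^n)$. No smoothness or Lipschitz assumption is needed for this particular one-step inequality; the Lipschitz constant $C$ only enters when summing over $t$ to get the rate in Theorem \ref{th:convex}.
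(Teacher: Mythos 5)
Your proposal is correct and follows essentially the same route as the paper's proof: expand $\|\w_{t+1}-\w^*\|^2$, apply convexity of $\ell_{z_t}$ at each linearization point $\hat{\w}_t^n$ together with perfect interpolation to obtain $\nabla\ell_{z_t}(\hat{\w}_t^n)^\top(\w_t - \w^*) \geq b_t^n$, take the nonnegative combination with weights $\alpha_t^n$, and recognize the dual objective $D(\bm{\alpha}_t)$. The only cosmetic difference is that you apply interpolation inside the per-$n$ inequality before summing, whereas the paper carries a $\sum_n \alpha_t^n \ell_{z_t}(\w^*)$ term through to the final line and kills it there using $\sum_n \alpha_t^n \leq 1$; both arrangements are valid and equivalent.
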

\begin{proof}
\begin{align}
\|\w_{t+1} - \w^*\|^2
&\leq \|\Pi_\Omega(\w_{t} - \eta A_{t}\bm{\alpha}_t )- \w^*\|^2,&\\
&\leq \|\w_{t} - \eta A_{t}\bm{\alpha}_t - \w^*\|^2, &( \Pi_\Omega \text{ projection})  \\
&\leq \|\w_{t} - \w^*\|^2 + \eta^2 \|A_{t}\bm{\alpha}_t\|^2 - 2\eta \langle A_{t}\bm{\alpha}_t,\w_{t} - \w^*\rangle, &(\text{expanding})
\end{align}
\begin{align}
\|\w_{t+1} - \w^*\|^2 - \|\w_{t} - \w^*\|^2 \leq  \eta^2 \|A_{t}\bm{\alpha}_t\|^2 - 2\eta \langle A_{t}\bm{\alpha}_t,\w_{t} - \w^*\rangle, \hspace{0.1cm}(\text{rearranging})
\end{align}
\begin{align}
&= \eta^2 \|A_{t}\bm{\alpha}_t\|^2 - 2\eta \langle A_{t}\bm{\alpha}_t,\w_{t} - \w^*\rangle\\
&=  \eta^2 \|A_{t}\bm{\alpha}_t \|^2
- 2\eta \langle A_{t}\bm{\alpha}_t,\w_{t} - \hat{\w}_{t}^n\rangle - 2\eta \langle A_{t}\bm{\alpha}_t,\hat{\w}_{t}^n - \w^*\rangle , \\
&=  \eta^2 \|A_{t}\bm{\alpha}_t \|^2
- 2\eta \langle A_{t}\bm{\alpha}_t,\w_{t} - \hat{\w}_{t}^n\rangle- 2\eta \sum_{n=1}^{N-1}\alpha^n \nabla \ell_{z_t}(\hat{\w}_{t}^n)^\top (\hat{\w}_{t}^n- \w^*),(A_{t}\alpha_t^N=0)\\
&\leq  \eta^2 \|A_{t}\bm{\alpha}_t \|^2
- 2\eta \langle A_{t}\bm{\alpha}_t,\w_{t} - \hat{\w}_{t}^n\rangle - 2\eta \sum_{n=1}^{N-1} \alpha_{t}^n(\ell_{z_t}(\hat{\w}_{t}^n) - \ell_{z_t}(\w^*)), \hspace{0.5cm}(\text{convexity})\\
&\leq \eta^2 \|A_{t}\bm{\alpha}_t \|^2
- 2\eta \sum_{n=1}^{N-1}\alpha^n \nabla \ell_{z_t}(\hat{\w}_{t}^n)^\top (\w_t -\hat{\w}_{t}^n)- 2\eta \sum_{n=1}^{N-1} \alpha_{t}^n(\ell_{z_t}(\hat{\w}_{t}^n) - \ell_{z_t}(\w^*)),\\
&\leq  \eta^2 \|A_{t}\bm{\alpha}_t \|^2
- 2\eta \sum_{n=1}^{N-1}\alpha^n [\ell_{z_t}(\hat{\w}_{t}^n) - \nabla \ell_{z_t}(\hat{\w}_{t}^n)^\top (\hat{\w}_{t}^n -\w_t)] + 2\eta \sum_{n=1}^{N-1} \alpha_{t}^n \ell_{z_t}(\w^*),\\
&\leq \eta^2 \|A_{t}\bm{\alpha}_t \|^2 - 2\eta \bm{\alpha}_{t} \bm{b}_{t} - 2\eta \sum_{n=1}^{N-1} \alpha_{t}^n\ell_{z_t}(\w^*),\hspace{1cm}(\bm{b}_{t}\text{ definition})\\
&\leq -2\eta D(\bm{\alpha}_t)  + 2\eta \sum_{n=1}^{N-1} \alpha_{t}^n \ell_{z_t}(\w^*),\hspace{1cm}(D\text{ definition})\\
&\leq -2\eta D(\bm{\alpha}_t)  + 2\eta(1 - \alpha_{t}^N) \ell_{z_t}(\w^*),\hspace{1cm}(\sum_{n=1}^{N} \alpha_{t}^n = 1)\\
&\leq -2\eta D(\bm{\alpha}_t), \hspace{1cm}(\ell_{z_t}(\w^*)=0,  \text{ perfect interpolation} )\\
&\leq -2\eta \max_{\bm{\alpha} \in \Delta_N}D(\bm{\alpha})\hspace{1cm}(\bm{\alpha}_t \text{definition}) \label{prog_dual}
\end{align}
\end{proof}
A consequence of Lemma \ref{lemma:dual_values} is that the convergence rate given by Theorem 5 improves as $N$ increases.

\subsection{Convex and C-Lipshits}
\setcounter{utheorem}{0}
\begin{utheorem}[Convex and Lipschitz] \label{th:convex_lipschitz_2}
We assume that $\Omega$ is a convex set, and that for every $z \in \Z$, $\ell_z$ is convex and $C$-Lipschitz. Let $\wstar$ be a solution of (\ref{eq:main_problem}), and assume that we have perfect interpolation: $\forall z \in \Z, \: \ell_z(\wstar) = 0$. Then BORAT for $N\geq2$ applied to $f$ satisfies:
\begin{equation}
f\left(\tfrac{1}{T+ 1} \sum\limits_{t=0}^T \vw_t \right) - \fstar
   \leq  C\sqrt{\frac{\|\vw_0 - \wstar\|^2 }{(T+1)}} + \frac{\|\vw_0 - \wstar\|^2 }{\eta(T+1)}.
\end{equation}
\end{utheorem}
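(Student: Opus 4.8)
The plan is to combine the per-step descent inequality of Theorem~\ref{th:convex_lipschitz} (the ``Convex'' theorem proved just above) with the $C$-Lipschitz assumption, and then telescope. First I would invoke Theorem~\ref{th:convex_lipschitz}, which gives, for every $t$,
\begin{equation}\label{eq:plan-step}
\|\w_{t+1}-\wstar\|^2 \leq \|\w_t-\wstar\|^2 - 2\eta\, D(\bm{\alpha}_t),
\end{equation}
where $D(\bm{\alpha}_t)=\max_{\bm{\alpha}\in\Delta_N}D(\bm{\alpha})$ is the optimal dual value at step $t$. The key is to lower bound $D(\bm{\alpha}_t)$ by something involving the suboptimality gap $f(\w_t)-\fstar$. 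Since ALI-G ($N=2$) is always a feasible sub-bundle of BORAT's bundle for $N\geq 2$, by Lemma~\ref{lemma:dual_values} we have $D(\bm{\alpha}_t)\geq D^{2}(\bm{\alpha}_*)$, the ALI-G dual value. Evaluating the ALI-G dual at $\alpha^1 = \min\{\ell_{z_t}(\w_t)/(\eta\|\nabla\ell_{z_t}(\w_t)\|^2),\,1\}$ and using $C$-Lipschitzness ($\|\nabla\ell_{z_t}(\w_t)\|\leq C$) yields a bound of the form $D(\bm{\alpha}_t)\geq \min\left\{\frac{\ell_{z_t}(\w_t)^2}{2\eta C^2},\ \ell_{z_t}(\w_t) - \frac{\eta C^2}{2}\right\}$. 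In the regime where the second term is active one simply gets $D(\bm{\alpha}_t)\geq \ell_{z_t}(\w_t)-\frac{\eta C^2}{2}$; in either case a clean lower bound $D(\bm{\alpha}_t)\geq \ell_{z_t}(\w_t)-\frac{\eta C^2}{2}$ holds since $\frac{x^2}{2\eta C^2}\geq x - \frac{\eta C^2}{2}$ for all $x$.

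Next I would substitute this into \eqref{eq:plan-step} to obtain $\|\w_{t+1}-\wstar\|^2 \leq \|\w_t-\wstar\|^2 - 2\eta\,\ell_{z_t}(\w_t) + \eta^2 C^2$, rearrange to isolate $\ell_{z_t}(\w_t)$, and sum over $t=0,\dots,T$:
\begin{equation}
\sum_{t=0}^T \ell_{z_t}(\w_t) \leq \frac{\|\w_0-\wstar\|^2 - \|\w_{T+1}-\wstar\|^2}{2\eta} + \frac{(T+1)\eta C^2}{2} \leq \frac{\|\w_0-\wstar\|^2}{2\eta} + \frac{(T+1)\eta C^2}{2}.
\end{equation}
Here I use the (deterministic) identity $\ell_{z_t}(\w_t) = f(\w_t)$ that the authors assume for the analysis (all linear approximations use the same mini-batch, and more importantly the statement is proved in the full-information / deterministic reading signalled by ``in the worst case''); strictly, since the excerpt notes the analysis uses $\ell_{z_t^n}=\ell_{z_t}$ for all $n$, the relevant quantity is $f$ itself. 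Then by convexity of $f$ (Jensen), $f\big(\tfrac{1}{T+1}\sum_{t=0}^T\w_t\big) - \fstar \leq \tfrac{1}{T+1}\sum_{t=0}^T\big(f(\w_t)-\fstar\big)$, and since $\fstar=0$ under perfect interpolation this is exactly $\tfrac{1}{T+1}\sum_{t=0}^T \ell_{z_t}(\w_t)$, giving
\begin{equation}
f\Big(\tfrac{1}{T+1}\textstyle\sum_{t=0}^T\w_t\Big) - \fstar \leq \frac{\|\w_0-\wstar\|^2}{2\eta(T+1)} + \frac{\eta C^2}{2}.
\end{equation}

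Finally, to match the stated bound I would optimise over the ``effective'' step: the right-hand side above has the form $\frac{A}{\eta} + B\eta$ with $A = \|\w_0-\wstar\|^2/(2(T+1))$ and $B=C^2/2$, but $\eta$ is fixed, not tunable. Instead the cleaner route is to not apply the $x\geq x-\tfrac{\eta C^2}{2}$ trick uniformly but rather to keep the sharper $\min$ and split the telescoped sum; alternatively, note that the claimed bound $C\sqrt{\|\w_0-\wstar\|^2/(T+1)} + \|\w_0-\wstar\|^2/(\eta(T+1))$ is exactly what one gets by bounding each $\ell_{z_t}(\w_t)$ in two complementary ways (small-loss: quadratic term; large-loss: the $\min$ picks $\ell_{z_t}(\w_t)/(\eta C^2)$ factor) and using $\sqrt{ab}\leq\frac12(a/\lambda + \lambda b)$ with the free parameter $\lambda$ chosen after summation. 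I expect the main obstacle to be precisely this last bookkeeping step: correctly handling the two cases of the $\min$ in the ALI-G dual value so that the telescoping produces the $\sqrt{\cdot}$ term with constant $C$ and the separate $1/\eta$ term, rather than a single $A/\eta + B\eta$ expression — i.e., reconstructing why the bound is stated as a sum of a $C$-dependent term and an $\eta$-dependent term rather than their AM-GM combination. Everything else (the descent inequality, Lemma~\ref{lemma:dual_values}, Jensen, $\fstar=0$) is either quoted directly or routine.
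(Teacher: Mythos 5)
You correctly set up the pieces — the per-step descent inequality from Theorem~1 (``Convex''), the reduction to $N=2$ via Lemma~\ref{lemma:dual_values}, the use of $C$-Lipschitzness, Jensen, and $\fstar=0$ — but the proposal does not close the proof, and you say so yourself. Two concrete points.

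First, your uniform lower bound $D(\bm{\alpha}_t) \geq \ell_{z_t}(\w_t) - \tfrac{\eta C^2}{2}$ is a dead end: it telescopes to $\tfrac{1}{T+1}\sum_t \ell_{z_t}(\w_t) \leq \tfrac{\|\w_0-\wstar\|^2}{2\eta(T+1)} + \tfrac{\eta C^2}{2}$, and the second term does not vanish as $T\to\infty$ with $\eta$ fixed. No choice of free parameter ``after summation'' will rescue this, because the whole point of the theorem is that $\eta$ is a constant; an AM-GM-style optimisation over an $\tfrac{A}{\eta}+B\eta$ expression is exactly what the theorem is designed to avoid. Your invocation of $\sqrt{ab}\leq\tfrac12(a/\lambda+\lambda b)$ is the wrong tool — that goes product-to-sum, which is the opposite of what is needed.

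Second, the idea you gesture at (``bounding each $\ell_{z_t}(\w_t)$ in two complementary ways'') is indeed the paper's route, but you never execute the step you yourself flag as the obstacle. The missing mechanism is to split $\{0,\dots,T\}$ by which branch of the $N=2$ dual is active — $\mathcal{J}$ where $\eta\|\bm{g}_{z_t}\|^2 \leq \ell_{z_t}(\w_t)$ (dual $=\ell_{z_t}(\w_t)-\tfrac{\eta}{2}\|\bm{g}_{z_t}\|^2 \geq \tfrac12\ell_{z_t}(\w_t)$) and $\mathcal{I}$ where the reverse holds (dual $=\tfrac{\ell_{z_t}(\w_t)^2}{2\eta\|\bm{g}_{z_t}\|^2}$) — and keep the two contributions separate through the telescope. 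Because both contributions are non-negative, the single telescoped inequality
\begin{equation*}
\eta\sum_{t\in\mathcal{J}}\ell_{z_t}(\w_t) + \sum_{t\in\mathcal{I}}\frac{\ell_{z_t}(\w_t)^2}{\|\bm{g}_{z_t}\|^2} \;\leq\; \|\w_0-\wstar\|^2
\end{equation*}
yields two separate bounds. The first gives $\sum_{\mathcal{J}}\ell_{z_t}(\w_t)\leq\|\w_0-\wstar\|^2/\eta$ directly. The second, using $\|\bm{g}_{z_t}\|\leq C$ and then the Cauchy--Schwarz inequality $\bigl(\sum_{t\in\mathcal{I}}\ell_{z_t}(\w_t)\bigr)^2\leq|\mathcal{I}|\sum_{t\in\mathcal{I}}\ell_{z_t}(\w_t)^2$ with $|\mathcal{I}|\leq T+1$, gives $\sum_{\mathcal{I}}\ell_{z_t}(\w_t)\leq C\sqrt{(T+1)\|\w_0-\wstar\|^2}$. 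Adding and dividing by $T+1$ produces exactly the two separate terms in the statement. This case-split plus Cauchy--Schwarz step is not ``bookkeeping'' — it is the actual content of the proof — and a proposal that stops short of it has not proved the theorem.
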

\begin{proof}
We start from (\ref{prog_dual}), hence we have:
\begin{align}
\| \w_{t+1} - \wstar \|^2 &\leq  \| \w_{t} - \wstar \|^2 -2\eta D(\bm{\alpha}_t)\label{start}
\end{align}
From Lemma \ref{lemma:dual_values} we additionally have that $D^2(\bm{\alpha}_*)\leq D^{N>2}(\bm{\alpha}_*)$ hence consider the $N=2$ as this provides the worse rate.
\begin{align}
\| \w_{t+1} - \wstar \|^2 &\leq  \| \w_{t} - \wstar \|^2 -2\eta D^2(\bm{\alpha}_t)\label{start}
\end{align}
For $N=2$ we have exactly two sub problems, and hence can write the dual in following compact form:
\begin{equation}\label{dualvaluen2}
 D^2(\bm{\alpha}_t) = \begin{cases}
-\frac{\eta}{2}\|\bm{g}_{z_t}\|^2 + \ell_{z_t}(\w_t),         &\text{if $ \eta\|\bm{g}_{z_t}\|^2 \leq  \ell_{z_t}(\w_t)$} \\
 \frac{1}{2\eta}\frac{\ell_{z_t}(\w_t)^2}{\|\bm{g}_{z_t}\|^2} &\text{if $ \eta\|\bm{g}_{z_t}\|^2 \geq  \ell_{z_t}(\w_t)$}
 \end{cases}
\end{equation}
We now introduce $\mathcal{I}_T$ and $\mathcal{J}_T$ as follows:
\begin{equation*}
\begin{split}
    \mathcal{I}_T &\triangleq \left\{ t \in \{0, ..., T\} : \eta\|\bm{g}_{z_t}\|^2 \geq  \ell_{z_t}(\w_t) \right\} \\
    \mathcal{J}_T &\triangleq \{0, ..., T\} \ \backslash \ \mathcal{I}_T
\end{split}
\end{equation*}
Defining $\mathds{1}$ to be the indicator function we can write:
\begin{align}
\| \w_{t+1} - \wstar \|^2 &\leq  \| \w_{t} - \wstar \|^2  +\mathds{1}{(t \in \mathcal{I}_T)} \eta \left(\eta\|\bm{g}_{z_t}\|^2 - 2\ell_{z_t}(\w_t)\right) -\mathds{1}{(t \in \mathcal{J}_T)} \left(  \frac{\ell_{z_t}(\w_t)^2}{\|\bm{g}_{z_t}\|^2} \right).
\end{align}
From our definition of $\mathcal{I}_T$ for all $t \in \mathcal{I}_T$ we have $\eta\|\bm{g}_{z_t}\|^2 \geq  \ell_{z_t}(\w_t)$ hence we can write:
\begin{align}
\| \w_{t+1} - \wstar \|^2 &\leq  \| \w_{t} - \wstar \|^2  -\mathds{1}{(t \in \mathcal{I}_T)} \eta \ell_{z_t}(\w_t) -\mathds{1}{(t \in \mathcal{J}_T)} \left(  \frac{\ell_{z_t}(\w_t)^2}{\|\bm{g}_{z_t}\|^2} \right).\label{start_10}
\end{align}
Summing over $T$:
\begin{align}
\| \w_{T+1} - \wstar \|^2
    &\leq \| \w_{0} - \wstar \|^2 - \eta\sum\limits_{t \in \mathcal{I}_T}  \ell_{z_t}(\w_t) - \sum\limits_{t \in \mathcal{J}_T} \left(  \frac{\ell_{z_t}(\w_t)^2}{\|\bm{g}_{z_t}\|^2} \right)\label{staring_point_2}
\end{align}
Using $\| \w_{T+1} - \wstar \|^2\geq 0$, we obtain:
\begin{align}\label{staring_point}
 \eta\sum\limits_{t \in \mathcal{I}_T} \ell_{z_t}(\w_t) + \sum\limits_{t \in \mathcal{J}_T} \left(  \frac{\ell_{z_t}(\w_t)^2}{\|\bm{g}_{z_t}\|^2} \right) &\leq \| \w_{0} - \wstar \|^2.
\end{align}
From $\left(\frac{\ell_{z_t}(\w_t)^2}{\|\bm{g}_{z_t}\|^2}\right)\geq0$, we get:
\begin{align}\label{convex_lip_sgd_steps}
\sum\limits_{t \in \mathcal{I}_T} \ell_{z_t}(\w_t) &\leq \frac{1}{\eta}\| \w_{0} - \wstar \|^2.
\end{align}
Likewise, using the observation that $\ell_{z}\geq 0$, we can write: 
\begin{align}
\sum\limits_{t \in \mathcal{J}_T}  \frac{\ell_{z_t}(\w_t)^2}{C^2}  \leq \sum\limits_{t \in \mathcal{J}_T}  \frac{\ell_{z_t}(\w_t)^2}{\|\bm{g}_{z_t}\|^2}  &\leq \| \w_{0} - \wstar \|^2.
\end{align}
Diving by $C^2$:
\begin{align}
\sum\limits_{t \in \mathcal{J}_T}  \ell_{z_t}(\w_t)^2  \leq C^2\| \w_{0} - \wstar \|^2.
\end{align}
Using the Cauchy-Schwarz inequality, we can further write:
\begin{align}
\left(\sum\limits_{t \in \mathcal{J}_T}  \ell_{z_t}(\w_t)\right)^2 \leq |\mathcal{J}_T| \sum\limits_{t \in \mathcal{J}_T}  \ell_{z_t}(\w_t)^2.
\end{align}
Therefore we have:
\begin{align}\label{convex_lip_alig_steps}
\sum\limits_{t \in \mathcal{J}_T}  \ell_{z_t}(\w_t) \leq C\sqrt{|\mathcal{J}_T| \| \w_{0} - \wstar \|^2}.
\end{align}
We can now put together inequalities (\ref{convex_lip_sgd_steps}) and (\ref{convex_lip_alig_steps}) by writing:
\begin{align}
\sum\limits_{t = 0}^T  \ell_{z_t}(\w_t) &= \sum\limits_{t \in \mathcal{J}_T}  \ell_{z_t}(\w_t) + \sum\limits_{t \in \mathcal{I}_T} \ell_{z_t}(\w_t) \\
\sum\limits_{t = 0}^T  \ell_{z_t}(\w_t) &\leq \frac{1}{\eta}\| \w_{0} - \wstar \|^2 + C\sqrt{|\mathcal{J}_T| \| \w_{0} - \wstar \|^2} \\
\sum\limits_{t = 0}^T  \ell_{z_t}(\w_t) &\leq \frac{1}{\eta}\| \w_{0} - \wstar \|^2 + C\sqrt{(T+1) \| \w_{0} - \wstar \|^2}
\end{align}
Dividing by $T + 1$ and taking the expectation over $z_t$, we finally get:
\begin{align}
 f\left(\frac{1}{T+1}\sum_{t=0}^T\w_t\right) - f^*&\leq \frac{1}{T+1}\sum_{t=0}^T f(\w_t) - f^*, \hspace{0.5cm}(f \text{ is convex}) \\
 f\left(\frac{1}{T+1}\sum_{t=0}^T\w_t\right) - f^*&\leq  C\sqrt{\frac{\|\vw_0 - \wstar\|^2 }{(T+1)}} + \frac{\|\vw_0 - \wstar\|^2 }{\eta(T+1)}. 
\end{align}
\end{proof}

\subsection{Convex and Smooth}

\begin{lemma}\label{lemma:smooth_bound} Let $z \in \mathcal{Z}$. Assume that $\ell_{z}$ is $\beta$-smooth and non-negative on $\mathbb{R}^d$. Then we have:
\begin{align*}
    \forall (\w) \in \mathbb{R}^d,\hspace{0.2cm} \ell_{z}(\w) \geq \frac{1}{2\beta}\|\nabla \ell_{z}(\w)\|^2 
\end{align*}
Note that we do not assume that $\ell_{z}$ is convex.
\end{lemma}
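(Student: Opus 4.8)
\textbf{Proof plan for Lemma~\ref{lemma:smooth_bound}.} The statement to prove is that for a $\beta$-smooth, non-negative function $\ell_z$ on $\mathbb{R}^d$, one has $\ell_z(\w) \geq \frac{1}{2\beta}\|\nabla \ell_z(\w)\|^2$ for all $\w$. This is a standard consequence of the descent lemma, and the plan is to exploit the fact that $\ell_z$ is bounded below by zero to turn the usual quadratic upper bound into the desired gradient inequality.

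\textbf{Main steps.} First I would invoke the descent lemma (a direct consequence of $\beta$-smoothness, i.e. $\nabla \ell_z$ being $\beta$-Lipschitz): for any $\w$ and any direction vector $\dd$,
\begin{equation*}
\ell_z(\w + \dd) \leq \ell_z(\w) + \nabla \ell_z(\w)^\top \dd + \frac{\beta}{2}\|\dd\|^2.
\end{equation*}
Next I would make the specific choice $\dd = -\frac{1}{\beta}\nabla \ell_z(\w)$, which is the minimiser of the right-hand side as a function of $\dd$. Substituting gives
\begin{equation*}
\ell_z\!\left(\w - \tfrac{1}{\beta}\nabla \ell_z(\w)\right) \leq \ell_z(\w) - \tfrac{1}{\beta}\|\nabla \ell_z(\w)\|^2 + \tfrac{1}{2\beta}\|\nabla \ell_z(\w)\|^2 = \ell_z(\w) - \tfrac{1}{2\beta}\|\nabla \ell_z(\w)\|^2.
\end{equation*}
Finally I would use the non-negativity assumption, namely $\ell_z\!\left(\w - \frac{1}{\beta}\nabla \ell_z(\w)\right) \geq 0$, to drop the left-hand side and rearrange, obtaining $\ell_z(\w) \geq \frac{1}{2\beta}\|\nabla \ell_z(\w)\|^2$, which is exactly the claim.

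\textbf{Anticipated obstacle.} There is essentially no serious obstacle here; the only point requiring a modicum of care is to emphasise that convexity of $\ell_z$ is \emph{not} needed — the descent lemma holds for any $\beta$-smooth function, and the argument only uses the global lower bound of $0$ evaluated at the shifted point $\w - \frac{1}{\beta}\nabla \ell_z(\w)$. I would state explicitly in the write-up that this lower bound is applied at that particular point (not at $\w$ itself), since that is the one place a careless reader might stumble. The remaining manipulations are routine algebra.
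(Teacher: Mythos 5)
Your proposal is correct and takes essentially the same approach as the paper: invoke the descent lemma (the quadratic upper bound implied by $\beta$-smoothness), substitute the minimising step $-\frac{1}{\beta}\nabla\ell_z(\w)$, and apply non-negativity at the shifted point $\w - \frac{1}{\beta}\nabla\ell_z(\w)$ to rearrange into the stated bound. Your write-up is in fact a touch cleaner than the paper's appendix version, which contains a couple of small typographical slips in the choice of $\bm{u}$ and the final coefficient that your parameterisation by $\dd = \bm{u} - \w$ sidesteps.
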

\begin{proof}
Let $\w\in\mathbb{R}^d$. By Lemma 3.4 of \citet{Bubeck2015}, we have:
\begin{align*}
    \forall \hspace{0.1cm} \bm{u} \in\mathbb{R}^d,\hspace{0.2cm} |\ell_{z}(\bm{u}) - \ell_{z}(\w) - \nabla \ell_{z}(\w)^\intercal(\bm{u}-\w)| \leq \frac{\beta}{2}\|\bm{u}-\w\|^2.
\end{align*}
Therefore we can write: 
\begin{align*}
    \forall \hspace{0.1cm} \bm{u} \in\mathbb{R}^d,\hspace{0.2cm}  \ell_{z}(\bm{u}) \leq \ell_{z}(\w) + \nabla \ell_{z}(\w)^\intercal(\bm{u}-\w)|  +\frac{\beta}{2}\|\bm{u}-\w\|^2.
\end{align*}
And since $\forall \hspace{0.1cm} \bm{u}, l_z(\bm{u})\geq 0$, we have:
\begin{align*}
    \forall \hspace{0.1cm} \bm{u} \in\mathbb{R}^d,\hspace{0.2cm} 0 \leq \ell_{z}(\w) + \nabla \ell_{z}(\w)^\intercal(\bm{u}-\w)|  +\frac{\beta}{2}\|\bm{u}-\w\|^2.
\end{align*}
We now choose $\bm{u} = -\frac{1}{\beta}\nabla l_z(\w)$, which yeilds:
\begin{align*}
    \forall \hspace{0.1cm} \bm{u} \in\mathbb{R}^d,\hspace{0.2cm} 0 \leq \ell_{z}(\w) - \frac{1}{\beta}\|\nabla \ell_{z}(\w)\|^2  +\frac{\beta}{2}\|\nabla \ell_{z}(\w)\|^2,
\end{align*}
which gives the desired result.
\end{proof}

\setcounter{utheorem}{2}
\begin{utheorem}[Convex and Smooth - Large Eta] \label{th:convex_smooth_large_eta}
We assume that $\Omega$ is a convex set, and that for every $z \in \Z$, $\ell_z$ is convex and $\beta$-smooth. Let $\wstar$ be a solution of (\ref{eq:main_problem}), and assume that we have prefect interpolation: $\forall z \in \Z, \: \ell_z(\wstar) = 0$. Then BORAT for $N\geq2$ applied to $f$ with $\eta\geq\frac{1}{2\beta}$ satisfies:
\begin{equation}
f\left(\tfrac{1}{T+ 1} \sum\limits_{t=0}^T \vw_t \right) - \fstar
   \leq  2\beta\frac{\| \w_{0} - \wstar \|^2}{T+1}.
\end{equation}
\end{utheorem}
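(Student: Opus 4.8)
The plan is to follow the same template as the proof of Theorem~\ref{th:convex_lipschitz_2}, but to exploit smoothness instead of Lipschitzness to gain a faster rate. First I would start from inequality (\ref{prog_dual}) established in the proof of Theorem~\ref{th:convex_lipschitz}, namely $\|\w_{t+1}-\w^*\|^2 \leq \|\w_t - \w^*\|^2 - 2\eta D(\bm{\alpha}_t)$, and then invoke Lemma~\ref{lemma:dual_values} so that it suffices to lower-bound $D^2(\bm{\alpha}_t)$, i.e.\ to treat the worst case $N=2$. Using the explicit two-piece dual value in (\ref{dualvaluen2}), I would split the iterations into $\mathcal{I}_T$ (the ``ALI-G'' iterations, where $\eta\|\bm{g}_{z_t}\|^2 \geq \ell_{z_t}(\w_t)$) and $\mathcal{J}_T$ (the ``SGD'' iterations, where $\eta\|\bm{g}_{z_t}\|^2 \leq \ell_{z_t}(\w_t)$), exactly as in the Lipschitz proof.

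The key new ingredient is Lemma~\ref{lemma:smooth_bound}: since each $\ell_z$ is $\beta$-smooth and non-negative, $\ell_{z_t}(\w_t) \geq \frac{1}{2\beta}\|\bm{g}_{z_t}\|^2$, equivalently $\|\bm{g}_{z_t}\|^2 \leq 2\beta\, \ell_{z_t}(\w_t)$. The assumption $\eta \geq \frac{1}{2\beta}$ is what makes this useful. For $t\in\mathcal{I}_T$, the dual value is $-\frac{\eta}{2}\|\bm{g}_{z_t}\|^2 + \ell_{z_t}(\w_t)$; combining with $\|\bm{g}_{z_t}\|^2 \leq 2\beta\,\ell_{z_t}(\w_t)$ and $\eta\geq\frac{1}{2\beta}$ I would show this is at least $\bigl(1 - \eta\beta\bigr)\ell_{z_t}(\w_t)$, which may be negative, so more care is needed here --- actually the cleaner route is to note that when $\eta\geq\frac{1}{2\beta}$ we have $\eta\|\bm{g}_{z_t}\|^2 \geq \frac{1}{2\beta}\|\bm{g}_{z_t}\|^2$, and on $\mathcal{I}_T$ we have $\eta\|\bm{g}_{z_t}\|^2\geq\ell_{z_t}(\w_t)$ anyway, so in fact for $t\in\mathcal{I}_T$ the condition $\eta\|\bm{g}_{z_t}\|^2\geq\ell_{z_t}(\w_t)$ together with $\|\bm{g}_{z_t}\|^2\leq 2\beta\ell_{z_t}(\w_t)$ forces $\ell_{z_t}(\w_t)\leq\eta\|\bm{g}_{z_t}\|^2\leq 2\eta\beta\,\ell_{z_t}(\w_t)$, consistent since $2\eta\beta\geq 1$. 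For $t\in\mathcal{J}_T$, the dual value is $\frac{1}{2\eta}\frac{\ell_{z_t}(\w_t)^2}{\|\bm{g}_{z_t}\|^2}$, and using $\|\bm{g}_{z_t}\|^2\leq 2\beta\,\ell_{z_t}(\w_t)$ in the denominator gives $D^2(\bm{\alpha}_t)\geq \frac{1}{2\eta}\frac{\ell_{z_t}(\w_t)^2}{2\beta\,\ell_{z_t}(\w_t)} = \frac{\ell_{z_t}(\w_t)}{4\eta\beta}$. So in both cases I expect to extract $2\eta D^2(\bm{\alpha}_t) \geq \frac{1}{2\beta}\ell_{z_t}(\w_t)$ (for $\mathcal{J}_T$ directly; for $\mathcal{I}_T$ I would use that $2\eta D^2(\bm{\alpha}_t) = 2\eta\ell_{z_t}(\w_t) - \eta^2\|\bm{g}_{z_t}\|^2 \geq 2\eta\ell_{z_t}(\w_t) - 2\eta^2\beta\ell_{z_t}(\w_t)$, which needs $\eta\leq\frac{1}{\beta}$ to be at least $\frac{1}{2\beta}\ell_{z_t}(\w_t)$ --- this upper restriction on $\eta$ may be missing from the statement, so I would instead bound the $\mathcal{I}_T$ term more carefully, perhaps using that on $\mathcal{I}_T$ the first-branch value is automatically $\geq\frac{1}{2}\ell_{z_t}(\w_t)\cdot\frac{1}{2\eta\beta}$ type estimate by re-examining the boundary case).

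Once I have a per-step bound of the form $\|\w_{t+1}-\w^*\|^2 \leq \|\w_t-\w^*\|^2 - c\,\ell_{z_t}(\w_t)$ with $c = \frac{1}{2\beta}$, I would telescope over $t=0,\dots,T$, use $\|\w_{T+1}-\w^*\|^2\geq 0$ to get $\sum_{t=0}^T \ell_{z_t}(\w_t) \leq 2\beta\|\w_0-\w^*\|^2$, divide by $T+1$, take expectations over the $z_t$, and invoke convexity of $f$ (Jensen) to conclude $f\bigl(\frac{1}{T+1}\sum_{t=0}^T\w_t\bigr) - f^* \leq \frac{2\beta\|\w_0-\w^*\|^2}{T+1}$.

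The main obstacle I anticipate is getting a clean uniform lower bound $2\eta D^2(\bm{\alpha}_t) \geq \frac{1}{2\beta}\ell_{z_t}(\w_t)$ that holds on both $\mathcal{I}_T$ and $\mathcal{J}_T$ using only $\eta\geq\frac{1}{2\beta}$: the $\mathcal{J}_T$ (small-step / Polyak) branch is straightforward via Lemma~\ref{lemma:smooth_bound}, but the $\mathcal{I}_T$ (large-step / truncated) branch naively seems to want an upper bound on $\eta$ as well. I expect the resolution is that on $\mathcal{I}_T$ the defining inequality $\eta\|\bm{g}_{z_t}\|^2 \geq \ell_{z_t}(\w_t)$ combined with $\eta\|\bm{g}_{z_t}\|^2 \leq 2\eta\beta\,\ell_{z_t}(\w_t)$ pins $\ell_{z_t}(\w_t)$ and $\|\bm{g}_{z_t}\|^2$ tightly enough that $D^2(\bm{\alpha}_t) = \ell_{z_t}(\w_t) - \tfrac{\eta}{2}\|\bm{g}_{z_t}\|^2 \geq \ell_{z_t}(\w_t) - \tfrac{\eta}{2}\cdot\tfrac{1}{\eta}\ell_{z_t}(\w_t)\cdot(\text{something})$; I would carefully track the boundary case $\eta\|\bm{g}_{z_t}\|^2 = \ell_{z_t}(\w_t)$ where both branches agree and give value $\tfrac{1}{2}\ell_{z_t}(\w_t) \geq \tfrac{1}{4\eta\beta}\ell_{z_t}(\w_t)$, and argue monotonicity of the branch value in $\|\bm{g}_{z_t}\|^2$ to cover the rest, which should close the gap without needing an upper bound on $\eta$.
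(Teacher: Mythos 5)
Your proposal follows essentially the same route as the paper: reduce to $N=2$ via Lemma~\ref{lemma:dual_values}, split iterations into the two branches of (\ref{dualvaluen2}), apply Lemma~\ref{lemma:smooth_bound}, telescope, and finish with Jensen. The outline is correct and would reach the stated bound.

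However, the ``main obstacle'' you anticipate on the truncated/SGD branch (where $\eta\|\bm{g}_{z_t}\|^2 \leq \ell_{z_t}(\w_t)$ and $D^2(\bm{\alpha}_t) = \ell_{z_t}(\w_t) - \tfrac{\eta}{2}\|\bm{g}_{z_t}\|^2$) is a false alarm, and your attempted workaround via a boundary-case and monotonicity argument is needlessly circuitous. You weaken the bound by invoking smoothness ($\|\bm{g}_{z_t}\|^2 \leq 2\beta\,\ell_{z_t}(\w_t)$) when you should simply use the branch's own defining inequality. Since $\eta\|\bm{g}_{z_t}\|^2 \leq \ell_{z_t}(\w_t)$ on this set,
\[
D^2(\bm{\alpha}_t) = \ell_{z_t}(\w_t) - \tfrac{\eta}{2}\|\bm{g}_{z_t}\|^2 \;\geq\; \ell_{z_t}(\w_t) - \tfrac{1}{2}\ell_{z_t}(\w_t) = \tfrac{1}{2}\ell_{z_t}(\w_t),
\]
hence $2\eta\,D^2(\bm{\alpha}_t) \geq \eta\,\ell_{z_t}(\w_t) \geq \tfrac{1}{2\beta}\ell_{z_t}(\w_t)$ using only $\eta \geq \tfrac{1}{2\beta}$, with no upper bound on $\eta$ required. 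This is exactly the bound the paper has already baked into (\ref{start_10}), which is the starting point of the proof: the SGD-branch term there is $-\eta\,\ell_{z_t}(\w_t)$, obtained precisely this way. On the Polyak branch, Lemma~\ref{lemma:smooth_bound} gives $\tfrac{\ell_{z_t}(\w_t)^2}{\|\bm{g}_{z_t}\|^2} \geq \tfrac{1}{2\beta}\ell_{z_t}(\w_t)$ as you note. With the uniform per-step decrease of $\tfrac{1}{2\beta}\ell_{z_t}(\w_t)$ in hand, the telescoping, expectation, and Jensen steps you outline close the argument. One stylistic caution: you flip which of $\mathcal{I}_T$, $\mathcal{J}_T$ carries which dual branch partway through; this is cosmetic and does not affect the mathematics, but make the labeling consistent in the write-up.
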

\begin{proof}
We start from Equation (\ref{staring_point_2}) we have:
\begin{align}
\| \w_{T+1} - \wstar \|^2
    &\leq \| \w_{0} - \wstar \|^2 -\eta\sum\limits_{t \in \mathcal{I}_T}   \ell_{z_t}(\w_t) - \sum\limits_{t \in \mathcal{J}_T} \left(  \frac{\ell_{z_t}(\w_t)^2}{\|\bm{g}_{z_t}\|^2} \right).
\end{align}
Now using Lemma \ref{lemma:smooth_bound} we can write:
\begin{align}
\| \w_{T+1} - \wstar \|^2
    &\leq \| \w_{0} - \wstar \|^2  -\eta\sum\limits_{t \in \mathcal{I}_T}   \ell_{z_t}(\w_t)- \frac{1}{2\beta}\sum\limits_{t \in \mathcal{J}_T}  \ell_{z_t}(\w_t).\label{start_3}
\end{align}
From our assumption on $\eta \geq \frac{1}{2\beta}$ we can write:
\begin{align}
    \| \w_{T+1} - \wstar \|^2
    &\leq \| \w_{0} - \wstar \|^2  -\frac{1}{2\beta}\sum\limits_{t \in \mathcal{I}_T}   \ell_{z_t}(\w_t)- \frac{1}{2\beta}\sum\limits_{t \in \mathcal{J}_T}  \ell_{z_t}(\w_t),\\
        \| \w_{T+1} - \wstar \|^2
    &\leq \| \w_{0} - \wstar \|^2  -\frac{1}{2\beta}\sum\limits_{t = 0}^T   \ell_{z_t}(\w_t).\label{start_large_eta}
\end{align}
using $\| \w_{T+1} - \wstar \|^2\geq 0$, we obtain:
\begin{align}
  \sum\limits_{t = 0}^T   \ell_{z_t}(\w_t)  &\leq 2\beta\| \w_{0} - \wstar \|^2.
\end{align}
Dividing by $T + 1$ and taking the expectation over $z_t$, we finally get:
\begin{align*}
 f\left(\frac{1}{T+1}\sum_{t=0}^T\w_t\right) - f^*&\leq \frac{1}{T+1}\sum_{t=0}^T f(\w_t) - f^*, \hspace{0.5cm}(f \text{ is convex}) \\
 f\left(\frac{1}{T+1}\sum_{t=0}^T\w_t\right) - f^*&\leq 2\beta\frac{\| \w_{0} - \wstar \|^2}{T+1}. \\
\end{align*}
\end{proof}

\begin{utheorem}[Convex and Smooth - Small Eta] \label{th:convex_smooth_small_eta}
We assume that $\Omega$ is a convex set, and that for every $z \in \Z$, $\ell_z$ is convex and $\beta$-smooth. Let $\wstar$ be a solution of (\ref{eq:main_problem}), and assume that we have prefect interpolation: $\forall z \in \Z, \: \ell_z(\wstar) = 0$. Then BORAT for $N\geq2$ applied to $f$ with $\eta\leq\frac{1}{2\beta}$ satisfies:
\begin{equation}
f\left(\tfrac{1}{T+ 1} \sum\limits_{t=0}^T \vw_t \right) - \fstar
   \leq  2\beta\frac{\| \w_{0} - \wstar \|^2}{T+1}.
\end{equation}
\end{utheorem}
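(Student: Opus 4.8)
The plan is to mirror the proof of Theorem~\ref{th:convex_smooth_large_eta}, exploiting the complementary condition $\eta\le\frac1{2\beta}$ in place of $\eta\ge\frac1{2\beta}$. As there, Lemma~\ref{lemma:dual_values} lets me reduce to the $N=2$ case (which is the worst case), so the starting point is the telescoped distance inequality~(\ref{staring_point_2}),
\[
\|\w_{T+1}-\wstar\|^2 \;\le\; \|\w_0-\wstar\|^2 \;-\; \eta\!\!\sum_{t\in\mathcal{I}_T}\!\ell_{z_t}(\w_t) \;-\; \sum_{t\in\mathcal{J}_T}\frac{\ell_{z_t}(\w_t)^2}{\|\bm{g}_{z_t}\|^2},
\]
together with the smoothness estimate of Lemma~\ref{lemma:smooth_bound}, $\|\bm{g}_{z_t}\|^2\le 2\beta\,\ell_{z_t}(\w_t)$ for every $z_t$ and every $\w_t$.

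The first step I would carry out is to observe that, under $\eta\le\frac1{2\beta}$, smoothness pins every iterate onto the ``clipped'' branch of the dual: since $\eta\|\bm{g}_{z_t}\|^2\le 2\beta\eta\,\ell_{z_t}(\w_t)\le\ell_{z_t}(\w_t)$, the dual optimum has $\alpha^1_t=1$ and BORAT performs a plain projected gradient step $\w_{t+1}=\Pi_\Omega(\w_t-\eta\bm{g}_{z_t})$. In the bookkeeping of~(\ref{staring_point_2}) this means $\mathcal{J}_T=\varnothing$, so the recursion becomes $\|\w_{t+1}-\wstar\|^2\le\|\w_t-\wstar\|^2 - 2\eta\,\ell_{z_t}(\w_t)+\eta^2\|\bm{g}_{z_t}\|^2$ (this is also obtained directly: drop the projection, expand the square, and bound the inner product using convexity of $\ell_{z_t}$ and $\ell_{z_t}(\wstar)=0$). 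Feeding Lemma~\ref{lemma:smooth_bound} into the quadratic term gives $\eta^2\|\bm{g}_{z_t}\|^2\le 2\beta\eta^2\ell_{z_t}(\w_t)$, hence $\|\w_{t+1}-\wstar\|^2\le\|\w_t-\wstar\|^2 - 2\eta(1-\beta\eta)\,\ell_{z_t}(\w_t)$, and $1-\beta\eta\ge\frac12$ because $\eta\le\frac1{2\beta}$.

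From here the remaining steps are routine and follow the closing lines of Theorems~\ref{th:convex_lipschitz_2} and~\ref{th:convex_smooth_large_eta}: telescope over $t=0,\dots,T$, use $\|\w_{T+1}-\wstar\|^2\ge0$ to bound $\sum_{t=0}^{T}\ell_{z_t}(\w_t)$ by a constant multiple of $\|\w_0-\wstar\|^2$, divide by $T+1$, take the expectation over the $z_t$, and apply Jensen's inequality to the convex $f$ to pull the averaging inside, obtaining an $O(1/T)$ bound on $f\!\big(\tfrac1{T+1}\sum_{t=0}^{T}\w_t\big)-\fstar$.

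The step I expect to be the main obstacle is the very last one: extracting the $\eta$-independent constant $2\beta$, rather than the $\tfrac1\eta$ that a crude telescoping of $2\eta(1-\beta\eta)\ell_{z_t}(\w_t)\ge\eta\,\ell_{z_t}(\w_t)$ would give. Achieving the clean constant requires not discarding the $\eta^2\|\bm{g}_{z_t}\|^2$ term but instead keeping the full per-step decrease $2\eta\ell_{z_t}(\w_t)-\eta^2\|\bm{g}_{z_t}\|^2$ and squeezing it more carefully against Lemma~\ref{lemma:smooth_bound} (e.g. writing it as $\eta\|\bm{g}_{z_t}\|^2\big(\tfrac{2\ell_{z_t}(\w_t)}{\|\bm{g}_{z_t}\|^2}-\eta\big)$ and using $\|\bm{g}_{z_t}\|^2\le2\beta\ell_{z_t}(\w_t)$ with $\eta\le\tfrac1{2\beta}$); turning that estimate back into a statement about the averaged function value with constant exactly $2\beta$ is the delicate bookkeeping, and is where I would concentrate the effort.
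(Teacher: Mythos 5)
Your route is essentially the paper's: start from the telescoped recursion, note that for $\eta\le\frac{1}{2\beta}$ smoothness (Lemma~\ref{lemma:smooth_bound}) forces $\eta\|\bm{g}_{z_t}\|^2\le\ell_{z_t}(\w_t)$ so the dual pins $\alpha^1_t=1$ and every step is a plain projected-gradient step, then bound $\eta^2\|\bm{g}_{z_t}\|^2\le 2\beta\eta^2\ell_{z_t}(\w_t)$, telescope, drop the nonnegative distance, divide by $T+1$, and apply Jensen. (Minor point: the paper's definition of $\mathcal{I}_T$ and $\mathcal{J}_T$ reads as the opposite of how they are actually used in the subsequent lines, so the ``clipped'' set you call $\mathcal{J}_T$ corresponds to what the text labels $\mathcal{I}_T$; your interpretation is the one that makes the later inequalities hold.)

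Your worry about the final constant is exactly right, and I should not let you spend effort on the ``delicate bookkeeping'': the $\eta$-free constant $2\beta$ cannot be extracted. Your own recursion yields a per-step decrease of $2\eta(1-\beta\eta)\,\ell_{z_t}(\w_t)$, and the coefficient $2\eta(1-\beta\eta)$ is a concave parabola in $\eta$ whose maximum over $\eta\le\frac{1}{2\beta}$ is $\frac{1}{2\beta}$, attained only at $\eta=\frac{1}{2\beta}$; for smaller $\eta$ it strictly decreases, so the best $\eta$-uniform bound on $\sum_t\ell_{z_t}(\w_t)$ must blow up like $1/\eta$, and correspondingly $f(\bar\w)-\fstar$ cannot be $O(\beta/T)$ for arbitrarily small $\eta$. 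The paper's own proof reaches the same conclusion: its final displayed line is $f\bigl(\tfrac{1}{T+1}\sum_{t=0}^T\w_t\bigr)-\fstar\le\frac{\|\w_0-\wstar\|^2}{\eta(T+1)}$, not the $\frac{2\beta\|\w_0-\wstar\|^2}{T+1}$ in the theorem statement. The statement appears to have been copied from the large-$\eta$ theorem; the bound you can actually prove (and that the paper proves) is $\frac{\|\w_0-\wstar\|^2}{\eta(T+1)}$, which equals $\frac{2\beta\|\w_0-\wstar\|^2}{T+1}$ only when $\eta=\frac{1}{2\beta}$. So stop at the $1/\eta$ bound; that is the correct target.
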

\begin{proof}
now considering the $\eta \leq \frac{1}{2\beta}$ starting from (\ref{start_3})
\begin{align}
\| \w_{T+1} - \wstar \|^2
    &\leq \| \w_{0} - \wstar \|^2  -\eta\sum\limits_{t \in \mathcal{I}_T}   \ell_{z_t}(\w_t)- \frac{1}{2\beta}\sum\limits_{t \in \mathcal{J}_T}  \ell_{z_t}(\w_t),\label{start_3}\\
    \| \w_{T+1} - \wstar \|^2
    &\leq \| \w_{0} - \wstar \|^2  -\eta\sum\limits_{t \in \mathcal{I}_T}   \ell_{z_t}(\w_t)- \eta\sum\limits_{t \in \mathcal{J}_T}  \ell_{z_t}(\w_t),\\
        \| \w_{T+1} - \wstar \|^2
    &\leq \| \w_{0} - \wstar \|^2  -\eta\sum\limits_{t = 0}^T   \ell_{z_t}(\w_t).
\end{align}
using $\| \w_{T+1} - \wstar \|^2\geq 0$, we obtain:
\begin{align}
  \sum\limits_{t = 0}^T   \ell_{z_t}(\w_t)  &\leq \frac{1}{\eta}\| \w_{0} - \wstar \|^2.
\end{align}
Dividing by $T + 1$ and taking the expectation over $z_t$, we finally get:
\begin{align*}
 f\left(\frac{1}{T+1}\sum_{t=0}^T\w_t\right) - f^*&\leq \frac{1}{T+1}\sum_{t=0}^T f(\w_t) - f^*, \hspace{0.5cm}(f \text{ is convex}) \\
 f\left(\frac{1}{T+1}\sum_{t=0}^T\w_t\right) - f^*&\leq \frac{\| \w_{0} - \wstar \|^2}{\eta(T+1)}. \\
\end{align*}
\end{proof}

\subsection{Strongly Convex}
Finally, we consider the $\alpha$-strongly convex and $\beta$-smooth case.

\begin{lemma} \label{lemma:strglycvx_gamma_bound}
    Let $z \in \Z$.
    Assume that $\ell_{z}$ is $\alpha$-strongly convex, non-negative on $\mathbb{R}^d$, and such that $ \inf \ell_{z} = 0$.
    Then we have:
    \begin{equation}
        \forall \: \w \in \mathbb{R}^d, \ \dfrac{\ell_{z}(\w)}{\| \nabla \ell_{z}(\w) \|^2} \leq \dfrac{1}{2 \alpha}.
    \end{equation}
\end{lemma}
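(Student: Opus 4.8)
<br>

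The claim in Lemma~\ref{lemma:strglycvx_gamma_bound} is the strongly convex analogue of Lemma~\ref{lemma:smooth_bound}, but with the roles of curvature and gradient reversed: here we want a \emph{lower} bound on $\|\nabla \ell_z(\w)\|^2$ in terms of $\ell_z(\w)$, equivalently an upper bound on the ratio $\ell_z(\w)/\|\nabla \ell_z(\w)\|^2$. The plan is to use the Polyak--{\L}ojasiewicz (PL) inequality, which is a well-known consequence of strong convexity: if $\ell_z$ is $\alpha$-strongly convex then for all $\w$,
\begin{equation*}
\tfrac{1}{2}\|\nabla \ell_z(\w)\|^2 \geq \alpha \left( \ell_z(\w) - \inf \ell_z \right).
\end{equation*}
Since we are assuming $\inf \ell_z = 0$ and $\ell_z \geq 0$, this reads $\|\nabla \ell_z(\w)\|^2 \geq 2\alpha\, \ell_z(\w)$, and dividing through by $\|\nabla \ell_z(\w)\|^2$ (which is strictly positive away from the minimiser; at the minimiser both sides of the original claim are understood as a limit, or one simply notes $\ell_z = 0$ there) gives exactly the stated bound.

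First I would recall or re-derive the PL inequality from strong convexity. The cleanest route is: strong convexity gives, for any $\w, \uu$,
\begin{equation*}
\ell_z(\uu) \geq \ell_z(\w) + \nabla \ell_z(\w)^\top(\uu - \w) + \tfrac{\alpha}{2}\|\uu - \w\|^2.
\end{equation*}
Minimising the right-hand side over $\uu$ (the minimiser is $\uu = \w - \tfrac{1}{\alpha}\nabla \ell_z(\w)$) yields
\begin{equation*}
\inf_{\uu} \ell_z(\uu) \geq \ell_z(\w) - \tfrac{1}{2\alpha}\|\nabla \ell_z(\w)\|^2,
\end{equation*}
and since the left side equals $0$ by assumption, rearranging gives $\ell_z(\w) \leq \tfrac{1}{2\alpha}\|\nabla \ell_z(\w)\|^2$, hence $\ell_z(\w)/\|\nabla \ell_z(\w)\|^2 \leq \tfrac{1}{2\alpha}$. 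This mirrors almost verbatim the structure of the proof of Lemma~\ref{lemma:smooth_bound}, just with the strong convexity lower bound instead of the smoothness upper bound, and with the substitution $\uu = \w - \tfrac{1}{\alpha}\nabla \ell_z(\w)$ rather than $\uu = -\tfrac{1}{\beta}\nabla \ell_z(\w)$.

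I do not expect any serious obstacle here --- the argument is a textbook computation. The only minor care-points are: (i) handling the edge case $\nabla \ell_z(\w) = 0$, where $\w$ is the global minimiser so $\ell_z(\w) = 0$ and the inequality holds trivially (or is vacuous as a ratio); and (ii) making sure the assumptions actually supply $\inf \ell_z = 0$, which they do by hypothesis. One could alternatively cite a standard reference for PL-from-strong-convexity, but since the excerpt's style (cf.\ Lemma~\ref{lemma:smooth_bound}) is to give the short self-contained derivation via the quadratic bound, I would follow that pattern for consistency. The resulting bound is exactly what is needed downstream to control the ALI-G/BORAT step in the strongly convex setting, paralleling how Lemma~\ref{lemma:smooth_bound} was used in the smooth case.
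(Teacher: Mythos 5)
Your proof is correct and is essentially identical to the paper's: both obtain the Polyak--{\L}ojasiewicz inequality by minimizing the strong-convexity lower bound over the free variable (your $\uu$, the paper's $\hat{\w}$), then use $\inf \ell_z = 0$ and rearrange. The paper explicitly introduces a minimizer $\wbar$ and substitutes it, whereas you take $\inf_{\uu}$ directly, but this is a cosmetic difference.
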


\begin{proof} \
Let \(\w \in \mathbb{R}^d\) and suppose that $\ell_{z}$ reaches its minimum at $\wbar \in \mathbb{R}^d$ (this minimum exists because of strong convexity).
By definition of strong convexity, we have that:
\begin{equation}
    \forall \ \hat{\w} \in \mathbb{R}^d, \ \ell_{z}(\hat{\w}) \geq \ell_{z}(\w) + \nabla \ell_{z}(\w)^\top (\hat{\w} - \w) + \dfrac{\alpha}{2} \| \hat{\w} - \w \|^2
\end{equation}
We minimize the right hand-side over $\hat{\w}$, which gives:
\begin{equation}
\begin{split}
\forall \hat{\w} \in \mathbb{R}^d, \ \ell_{z}(\hat{\w})
    &\geq \ell_{z}(\w) + \nabla \ell_{z}(\w)^\top (\hat{\w} - \w) + \dfrac{\alpha}{2} \| \hat{\w} - \w \|^2 \\
    &\geq \ell_{z}(\w)  - \dfrac{1}{2 \alpha} \| \nabla \ell_{z}(\w) \|^2
\end{split}
\end{equation}
Thus by choosing $\hat{\w} = \wbar$ and re-ordering, we obtain the following result (a.k.a. the Polyak-Lojasiewicz inequality):
\begin{equation}
    \ell_{z}(\w) - \ell_{z}(\wbar) \leq \dfrac{1}{2 \alpha} \| \nabla \ell_{z}(\w) \|^2
\end{equation}
However we have $\ell_{z}(\wbar)$ = 0 which concludes the proof.
\end{proof}

\begin{lemma} \label{lemma:parallelogram_inequality}
For any $a, b \in \mathbb{R}^d$, we have that:
\begin{equation}
    \|a \|^2 + \|b \|^2 \geq \dfrac{1}{2} \| a - b\|^2.
\end{equation}
\end{lemma}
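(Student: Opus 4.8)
The plan is to reduce the claimed inequality to the non-negativity of a squared norm, via a one-line expansion. First I would expand the right-hand side using the identity $\|a-b\|^2 = \|a\|^2 - 2\langle a, b\rangle + \|b\|^2$, so that $\tfrac{1}{2}\|a-b\|^2 = \tfrac{1}{2}\|a\|^2 - \langle a,b\rangle + \tfrac{1}{2}\|b\|^2$. Subtracting this from $\|a\|^2 + \|b\|^2$ and regrouping gives
\[
\|a\|^2 + \|b\|^2 - \tfrac{1}{2}\|a-b\|^2 = \tfrac{1}{2}\|a\|^2 + \langle a,b\rangle + \tfrac{1}{2}\|b\|^2 = \tfrac{1}{2}\|a+b\|^2 \geq 0,
\]
which is exactly the desired inequality after rearranging.

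An equivalent route, which I might present instead for cleanliness, is to invoke the parallelogram law $\|a-b\|^2 + \|a+b\|^2 = 2\|a\|^2 + 2\|b\|^2$, then simply discard the non-negative term $\|a+b\|^2$ and divide by $2$. Either way the argument is a few lines of elementary inner-product algebra.

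There is essentially no obstacle here: the only thing to be careful about is correct bookkeeping of the cross term $\langle a,b\rangle$ and the factor $\tfrac{1}{2}$, and noting that the final quantity $\tfrac{1}{2}\|a+b\|^2$ is manifestly non-negative. (This lemma is presumably used together with Lemma~\ref{lemma:strglycvx_gamma_bound} in the strongly convex analysis, with $a$ and $b$ chosen as suitably scaled gradient and displacement vectors, but the statement itself stands alone.)
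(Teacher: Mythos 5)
Your proof is correct and is essentially identical to the paper's: both expand $\tfrac12\|a-b\|^2$, regroup, and recognise the remainder as $\tfrac12\|a+b\|^2 \geq 0$. The alternative via the parallelogram law that you mention is also fine, but the paper uses the direct expansion.
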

\vspace{0.2cm}
\begin{proof}
This is a simple application of the parallelogram law, but we give the proof here for completeness.
\begin{align*}
\|a \|^2 + \|b \|^2 - \dfrac{1}{2} \| a - b\|^2
    &= \|a \|^2 + \|b \|^2 - \dfrac{1}{2} \| a\|^2 -\dfrac{1}{2} \| b\|^2 + a^\top b \\
    &= \dfrac{1}{2} \| a\|^2 + \dfrac{1}{2} \| b\|^2 + a^\top b \\
    &= \dfrac{1}{2} \| a + b \|^2 \\
    &\geq 0 \\
\end{align*}
\end{proof}

\begin{lemma} \label{lemma:strglycvx_fun_bound}
    Let $z \in \Z$.
    Assume that $\ell_{z}$ is $\alpha$-strongly convex and achieves its (possibly constrained) minimum at $\wstar \in \Omega$.
    Then we have:
    \begin{equation}
        \forall \: \w \in \Omega, \ \ell_{z}(\w)  - \ell_{z}(\wstar) \geq \dfrac{\alpha}{2} \| \w - \wstar \|^2
    \end{equation}
\end{lemma}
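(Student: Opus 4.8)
The plan is to combine the definition of strong convexity with the first-order optimality condition for constrained minimization over $\Omega$, exactly as in the proof of Proposition 1 where the variational inequality $\langle \nabla F(\bm{\alpha}_*), \bm{\alpha}_* - \bm{\alpha}\rangle \geq 0$ was used. First I would write down the strong convexity inequality anchored at the minimizer $\wstar$: for every $\w \in \mathbb{R}^d$,
\begin{equation*}
\ell_z(\w) \geq \ell_z(\wstar) + \nabla \ell_z(\wstar)^\top (\w - \wstar) + \frac{\alpha}{2}\|\w - \wstar\|^2.
\end{equation*}

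Next I would invoke the optimality of $\wstar$ over the convex set $\Omega$. Since $\wstar \in \argmin_{\uu \in \Omega} \ell_z(\uu)$ and $\Omega$ is convex, the standard first-order optimality condition for constrained problems gives $\nabla \ell_z(\wstar)^\top (\w - \wstar) \geq 0$ for all $\w \in \Omega$ (this is the same condition recalled as Equation (\ref{eq:opt_con_cov}) in the proof of Proposition 1). Substituting this into the strong convexity inequality — restricted now to $\w \in \Omega$ — the linear term drops out and we immediately obtain
\begin{equation*}
\ell_z(\w) - \ell_z(\wstar) \geq \frac{\alpha}{2}\|\w - \wstar\|^2, \qquad \forall\, \w \in \Omega,
\end{equation*}
which is the claimed bound.

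I do not anticipate a genuine obstacle here: both ingredients are elementary and already in circulation in the appendix. The only point requiring a sentence of care is justifying that the first-order optimality condition applies — namely that $\ell_z$ is differentiable (implicit throughout the paper, since gradients $\nabla \ell_z$ are used everywhere) and that $\Omega$ is convex (assumed), so that the convex subdifferential reduces to the gradient and the variational inequality characterization of the constrained minimum holds. If one wished to avoid even citing that characterization, an alternative is to take any $\w \in \Omega$, consider the segment $\w_\theta = \wstar + \theta(\w - \wstar) \in \Omega$ for $\theta \in [0,1]$, use $\ell_z(\w_\theta) \geq \ell_z(\wstar)$, divide by $\theta$, and let $\theta \to 0^+$ to recover $\nabla \ell_z(\wstar)^\top(\w - \wstar) \geq 0$; then proceed as above. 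Either route is short, so I would present the direct one and keep the whole proof to a few lines.
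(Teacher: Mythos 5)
Your proof is correct and follows exactly the paper's own argument: anchor the strong-convexity inequality at $\wstar$ and eliminate the linear term using the first-order optimality condition $\nabla \ell_z(\wstar)^\top (\w - \wstar) \geq 0$ over the convex set $\Omega$. The extra directional-derivative justification you offer is a nice touch but not part of the paper's (equally terse) proof.
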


\begin{proof}
By definition of strong-convexity \cite{Bubeck2015}, we have:
\begin{equation}
\forall \: \w \in \Omega, \: \ell_z(\w) - \ell_z(\wstar) - \nabla \ell_z(\wstar)^\top (\w - \wstar) \geq \dfrac{\alpha}{2} \| \w - \wstar \|^2.
\end{equation}
In addition, since $\wstar$ minimizes $\ell_z$, then necessarily:
\begin{equation}
    \forall \: \w \in \Omega, \: \nabla \ell_z(\wstar)^\top (\w - \wstar) \geq 0.
\end{equation}
Combining the two equations gives the desired result.
\end{proof}
Finally, we consider the $\alpha$-strongly convex and $\beta$-smooth case. Again, our proof yields a natural separation between$\eta\geq\frac{1}{2\beta}$ and $\eta\leq\frac{1}{2\beta}$.

\begin{utheorem}[Strongly Convex - Large Eta] \label{th:alig_cvx}
We assume that $\Omega$ is a convex set, and that for every $z \in \Z$, $\ell_z$ is $\alpha$-strongly convex and $\beta$-smooth. Let $\wstar$ be a solution of (\ref{eq:main_problem}), and assume that we have prefect interpolation: $\forall z \in \Z, \: \ell_z(\wstar) = 0$. Then BORAT for $N\geq2$ and applied to $f$ with a $\eta\geq\frac{1}{2\beta}$satisfies:
\begin{equation}
\mathbb{E}[f(\w_{t+1})] - f(\wstar) \leq \frac{\beta}{2} \exp\left(-\frac{\alpha t}{4\beta}\right)\| \w_{0} - \wstar \|^2.
\end{equation}
\end{utheorem}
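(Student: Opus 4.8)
The plan is to mimic the two-stage template of Theorems~\ref{th:convex_smooth_large_eta} and~\ref{th:convex_smooth_small_eta}: first contract the squared distance to $\wstar$ geometrically at each iteration, then turn that iterate bound into a function-value bound via smoothness. The only genuinely new ingredient beyond the convex-and-smooth analysis is the injection of strong convexity through Lemma~\ref{lemma:strglycvx_fun_bound}.

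First I would extract a per-step version of the recursion behind~(\ref{start_large_eta}). By Lemma~\ref{lemma:dual_values} the dual value of the size-$N$ bundle dominates that of the size-$2$ bundle, so it suffices to lower-bound the progress using the closed form~(\ref{dualvaluen2}). Combining~(\ref{dualvaluen2}) with the descent inequality~(\ref{prog_dual}) gives, for each $t$, one of the two estimates
\[
\|\w_{t+1}-\wstar\|^2 \le \|\w_t-\wstar\|^2 - \eta\,\ell_{z_t}(\w_t)
\quad\text{or}\quad
\|\w_{t+1}-\wstar\|^2 \le \|\w_t-\wstar\|^2 - \frac{\ell_{z_t}(\w_t)^2}{\|\nabla\ell_{z_t}(\w_t)\|^2},
\]
according to whether $\eta\|\nabla\ell_{z_t}(\w_t)\|^2 \le \ell_{z_t}(\w_t)$ or not. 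In the first branch $\eta\ge\tfrac1{2\beta}$ already yields $-\eta\,\ell_{z_t}(\w_t)\le -\tfrac1{2\beta}\ell_{z_t}(\w_t)$; in the second branch Lemma~\ref{lemma:smooth_bound} gives $\|\nabla\ell_{z_t}(\w_t)\|^2\le 2\beta\,\ell_{z_t}(\w_t)$, hence again a decrement of at most $-\tfrac1{2\beta}\ell_{z_t}(\w_t)$. So uniformly $\|\w_{t+1}-\wstar\|^2 \le \|\w_t-\wstar\|^2 - \tfrac1{2\beta}\ell_{z_t}(\w_t)$.

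Next I would use strong convexity. Since every BORAT iterate lies in $\Omega$ (Algorithm~\ref{alg:borat} applies $\Pi_\Omega$ and $\w_0\in\Omega$), and perfect interpolation makes $\wstar$ a global — hence constrained — minimiser of each $\ell_z$ with $\ell_z(\wstar)=0$, Lemma~\ref{lemma:strglycvx_fun_bound} gives $\ell_{z_t}(\w_t)\ge\tfrac{\alpha}{2}\|\w_t-\wstar\|^2$. Substituting produces the contraction $\|\w_{t+1}-\wstar\|^2\le(1-\tfrac{\alpha}{4\beta})\|\w_t-\wstar\|^2$, which iterates to $\|\w_{t+1}-\wstar\|^2\le(1-\tfrac{\alpha}{4\beta})^{t+1}\|\w_0-\wstar\|^2\le\exp(-\tfrac{\alpha t}{4\beta})\|\w_0-\wstar\|^2$, using $1-x\le e^{-x}$ and monotonicity of $\exp$ to discard the extra factor. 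Finally, Lemma~\ref{lemma:smooth_bound} evaluated at $\wstar$ forces $\nabla\ell_z(\wstar)=0$ for all $z$, so $\nabla f(\wstar)=0$, and the descent lemma (Lemma~3.4 of~\citet{Bubeck2015}) gives $f(\w_{t+1})-f(\wstar)\le\tfrac{\beta}{2}\|\w_{t+1}-\wstar\|^2$; combining with the contraction yields the claim, and since every step is path-wise deterministic the expectation is immediate.

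The main obstacle is really the bookkeeping of the first paragraph: one must check that~(\ref{dualvaluen2}) together with Lemma~\ref{lemma:dual_values} legitimately covers all $N\ge2$, and that the case split combined with $\eta\ge\tfrac1{2\beta}$ delivers the \emph{same} $\tfrac1{2\beta}\ell_{z_t}(\w_t)$ decrement in both branches — this clean cancellation is exactly what makes the large-$\eta$ regime behave like pure Polyak/ALI-G steps. Everything downstream (the geometric iteration, the $\exp$ relaxation, and the smoothness transfer to function values) is routine.
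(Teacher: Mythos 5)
Your proposal is correct and follows essentially the same route as the paper: reduce to the $N=2$ dual value via Lemma~\ref{lemma:dual_values}, show that for $\eta\geq\tfrac{1}{2\beta}$ both branches of~(\ref{dualvaluen2}) deliver a decrement of at least $\tfrac{1}{2\beta}\ell_{z_t}(\w_t)$, inject strong convexity through Lemma~\ref{lemma:strglycvx_fun_bound} to obtain the geometric contraction, and convert to function values by smoothness. The one place you deviate is in the handling of expectations: the paper takes $\mathbb{E}_{z_t}$ first (turning $\ell_{z_t}(\w_t)$ into $f(\w_t)$) and then applies the strong-convexity lower bound to $f$, whereas you apply Lemma~\ref{lemma:strglycvx_fun_bound} to each $\ell_{z_t}$ pathwise — valid because interpolation makes $\wstar$ a global minimiser of every $\ell_{z_t}$ — which yields a deterministic per-step contraction before any expectation is taken; this is marginally tidier since it sidesteps the conditional-expectation bookkeeping that the paper's iterated $\mathbb{E}[\cdot]$ glosses over, but the substance is identical.
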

\begin{proof}
We start from (\ref{start_10}):
\begin{align}
\| \w_{t+1} - \wstar \|^2 &\leq  \| \w_{t} - \wstar \|^2  -\mathds{1}{(t \in \mathcal{I}_T)} \eta \ell_{z_t}(\w_t) -\mathds{1}{(t \in \mathcal{J}_T)} \left(  \frac{\ell_{z_t}(\w_t)^2}{\|\bm{g}_{z_t}\|^2} \right).
\end{align}
We next use Lemma \ref{lemma:smooth_bound} write:
\begin{align}
\| \w_{t+1} - \wstar \|^2 &\leq  \| \w_{t} - \wstar \|^2  -\mathds{1}{(t \in \mathcal{I}_T)} \eta \ell_{z_t}(\w_t) -\mathds{1}{(t \in \mathcal{J}_T)} \frac{1}{2\beta}\\ell_{z_t}(\w_t).
\end{align}
Now we use our assumption on $\eta$ to give:
\begin{align}
\| \w_{t+1} - \wstar \|^2 &\leq  \| \w_{t} - \wstar \|^2  - \frac{1}{2\beta} \ell_{z_t}(\w_t) 
\end{align}
taking expectations: 
\begin{align}
\mathbb{E}[\| \w_{t+1} - \wstar \|^2] &\leq  \| \w_{t} - \wstar \|^2  - \frac{1}{2\beta} f(\w_t) 
\end{align}
Using Lemma \ref{th:alig_cvx}
\begin{align}
\mathbb{E}[\| \w_{t+1} - \wstar \|^2] &\leq  \| \w_{t} - \wstar \|^2  - \frac{\alpha}{4\beta} \| \w_{t} - \wstar \|^2
\end{align}
We use a trivial induction over $t$ and write:
\begin{align}
\mathbb{E}[\| \w_{t+1} - \wstar \|^2]  &\leq  \left(1-\frac{\alpha}{4\beta}\right)\| \w_{t} - \wstar \|^2,\\
\mathbb{E}[\| \w_{t+1} - \wstar \|^2]  &\leq  \left(1-\frac{\alpha}{4\beta}\right)^t\| \w_{0} - \wstar \|^2,\\  
\end{align}
Given an arbitrary $\w \in \mathbb{R}^d$, we now wish to relate the distance $\| \w - \wstar \|^2$ to the function values $f(\w) - f(\wstar)$. From smoothness, we have:
\begin{align}
f(\w{t+1}) - f(\wstar) \leq \nabla f(\wstar)^\top (\w{t+1} -\wstar) + \frac{\beta}{2}\|\w{t+1} -\wstar\|^2.
\end{align}
However we know by definition $ \nabla f(\wstar) = 0$ hence: 
\begin{align}
f(\w_{t+1}) - f(\wstar) \leq \frac{\beta}{2}\|\w_{t+1} -\wstar\|^2.
\end{align}
Taking expectations: 
\begin{align}
\mathbb{E}[f(\w_{t+1})] - f(\wstar) \leq \frac{\beta}{2}\mathbb{E}[\|\w_{t+1} -\wstar\|^2].
\end{align}
Hence we recover the final rate:
\begin{align}
\mathbb{E}[f(\w_{t+1})] - f(\wstar) \leq \frac{\beta}{2} \left(1-\frac{\alpha}{4\beta}\right)^t\| \w_{0} - \wstar \|^2,\\
\mathbb{E}[f(\w_{t+1})] - f(\wstar) \leq \frac{\beta}{2} \exp\left(-\frac{\alpha t}{4\beta}\right)\| \w_{0} - \wstar \|^2.
\end{align}
\end{proof}

\begin{utheorem}[Strongly Convex - Small Eta] \label{th:alig_cvx}
We assume that $\Omega$ is a convex set, and that for every $z \in \Z$, $\ell_z$ is $\alpha$-strongly convex and $\beta$-smooth. Let $\wstar$ be a solution of (\ref{eq:main_problem}), and assume that we have prefect interpolation: $\forall z \in \Z, \: \ell_z(\wstar) = 0$. Then BORAT for $N\geq2$ and applied to $f$ with a $\eta\leq\frac{1}{2\beta}$satisfies:
\begin{equation}
\mathbb{E}[f(\w_{t+1})] - f(\wstar) \leq \frac{\beta}{2} \exp\left(-\frac{\alpha \eta t}{2}\right)\| \w_{0} - \wstar \|^2.
\end{equation}
\end{utheorem}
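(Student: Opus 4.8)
The argument mirrors the large-$\eta$ strongly convex proof almost verbatim and reuses the same ingredients, so the plan is as follows. First I would use Lemma~\ref{lemma:dual_values} to reduce to the case $N=2$: for any $N\geq 2$ the per-iterate decrease is at least $2\eta\max_{\bm{\alpha}\in\Delta_2}D^2(\bm{\alpha})$, so the two-piece bundle is the worst case and it suffices to establish the bound there. This lets me start from the per-step inequality already derived in~(\ref{start_10}),
\[
\| \w_{t+1} - \wstar \|^2 \;\leq\; \| \w_t - \wstar \|^2 \;-\; \mathds{1}(t \in \mathcal{I}_T)\,\eta\,\ell_{z_t}(\w_t) \;-\; \mathds{1}(t \in \mathcal{J}_T)\,\frac{\ell_{z_t}(\w_t)^2}{\|\bm{g}_{z_t}\|^2},
\]
where $\mathcal{I}_T$ indexes the clipped (``ALI-G'') steps and $\mathcal{J}_T$ the unclipped (``SGD'') steps.

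The only genuine difference from the large-$\eta$ analysis is which of the two progress terms dominates. On the SGD steps I would apply Lemma~\ref{lemma:smooth_bound} to get $\ell_{z_t}(\w_t)^2/\|\bm{g}_{z_t}\|^2 \geq \tfrac{1}{2\beta}\ell_{z_t}(\w_t)$, and then use the standing hypothesis $\eta \leq \tfrac{1}{2\beta}$ to weaken this to the lower bound $\eta\,\ell_{z_t}(\w_t)$; on the clipped steps the decrease is $\eta\,\ell_{z_t}(\w_t)$ by construction. Combining the two cases removes the indicator split and yields the uniform recursion $\| \w_{t+1} - \wstar \|^2 \leq \| \w_t - \wstar \|^2 - \eta\,\ell_{z_t}(\w_t)$, valid for every $t$.

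I would then take the conditional expectation over $z_t$ to replace $\ell_{z_t}(\w_t)$ by $f(\w_t)$, and use strong convexity together with interpolation: since $\ell_z(\wstar)=0=\inf\ell_z$, the point $\wstar$ minimizes each $\ell_z$ over $\mathbb{R}^d$, so Lemma~\ref{lemma:strglycvx_fun_bound} applies and, after averaging and using $f(\wstar)=0$, gives $f(\w_t)\geq \tfrac{\alpha}{2}\| \w_t - \wstar \|^2$. This converts the recursion into the contraction $\mathbb{E}\big[\| \w_{t+1} - \wstar \|^2\big] \leq \big(1-\tfrac{\alpha\eta}{2}\big)\mathbb{E}\big[\| \w_t - \wstar \|^2\big]$, whose factor lies in $[0,1)$ because $\alpha\leq\beta$ and $\eta\leq\tfrac{1}{2\beta}$ give $\tfrac{\alpha\eta}{2}\leq\tfrac14$. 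A trivial induction and the inequality $(1-x)^t\leq e^{-xt}$ then give $\mathbb{E}\big[\| \w_{t+1} - \wstar \|^2\big] \leq \exp\!\big(-\tfrac{\alpha\eta t}{2}\big)\| \w_0 - \wstar \|^2$.

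Finally I would convert iterate distances to function gaps by $\beta$-smoothness: $f(\w_{t+1}) - f(\wstar) \leq \nabla f(\wstar)^\top(\w_{t+1}-\wstar) + \tfrac{\beta}{2}\| \w_{t+1} - \wstar \|^2$, and since $\nabla \ell_z(\wstar)=0$ for all $z$ by interpolation we have $\nabla f(\wstar)=0$, so the linear term vanishes; taking expectations and inserting the contraction bound gives the claimed rate. I do not expect any real obstacle — the proof is routine once the reduction to $N=2$ is in place — but the one step that requires care is the direction of the SGD-step inequality: in the small-$\eta$ regime one must verify that $\tfrac{1}{2\beta}\ell_{z_t}(\w_t)$ is the larger of the two candidate progress values so that replacing it by $\eta\,\ell_{z_t}(\w_t)$ is a legitimate (loosening) lower bound and the two indicator cases can be merged into a single clean recursion.
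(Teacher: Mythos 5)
Your proposal is correct and matches the paper's proof step for step: reduce to the $N=2$ bundle via Lemma~\ref{lemma:dual_values}, start from (\ref{start_10}), merge the two indicator cases using Lemma~\ref{lemma:smooth_bound} together with $\eta\le\tfrac{1}{2\beta}$, invoke the strong-convexity lower bound to obtain the contraction $1-\tfrac{\alpha\eta}{2}$, and finish with smoothness and $\nabla f(\wstar)=0$. The small extra care you take — checking that the contraction factor lies in $[0,1)$ — is sound and, if anything, tidier than the paper's presentation.
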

\begin{proof}
We start from (\ref{start_10}):
\begin{align}
\| \w_{t+1} - \wstar \|^2 &\leq  \| \w_{t} - \wstar \|^2  -\mathds{1}{(t \in \mathcal{I}_T)} \eta \ell_{z_t}(\w_t) -\mathds{1}{(t \in \mathcal{J}_T)} \left(  \frac{\ell_{z_t}(\w_t)^2}{\|\bm{g}_{z_t}\|^2} \right).
\end{align}
We next use Lemma \ref{lemma:smooth_bound} write:
\begin{align}
\| \w_{t+1} - \wstar \|^2 &\leq  \| \w_{t} - \wstar \|^2  -\mathds{1}{(t \in \mathcal{I}_T)} \eta \ell_{z_t}(\w_t) -\mathds{1}{(t \in \mathcal{J}_T)} \frac{1}{2\beta}\ell_{z_t}(\w_t).
\end{align}
Now we use our assumption on $\eta$ to give:
\begin{align}
\| \w_{t+1} - \wstar \|^2 &\leq  \| \w_{t} - \wstar \|^2  - \eta \ell_{z_t}(\w_t) 
\end{align}
taking expectations: 
\begin{align}
\mathbb{E}[\| \w_{t+1} - \wstar \|^2] &\leq  \| \w_{t} - \wstar \|^2  - \eta \ell_{z_t}(\w_t) 
\end{align}
Using Lemma \ref{th:alig_cvx}
\begin{align}
\mathbb{E}[\| \w_{t+1} - \wstar \|^2] &\leq  \| \w_{t} - \wstar \|^2  - \frac{\alpha\eta}{2} \| \w_{t} - \wstar \|^2
\end{align}
We use a trivial induction over $t$ and write:
\begin{align}
\mathbb{E}[\| \w_{t+1} - \wstar \|^2]  &\leq  \left(1-\frac{\alpha\eta}{2}\right)\| \w_{t} - \wstar \|^2,\\
\mathbb{E}[\| \w_{t+1} - \wstar \|^2]  &\leq  \left(1-\frac{\alpha\eta}{2}\right)^t\| \w_{0} - \wstar \|^2,\\  
\end{align}
Given an arbitrary $\w \in \mathbb{R}^d$, we now wish to relate the distance $\| \w - \wstar \|^2$ to the function values $f(\w) - f(\wstar)$. From smoothness, we have:
\begin{align}
f(\w{t+1}) - f(\wstar) \leq \nabla f(\wstar)^\top (\w{t+1} -\wstar) + \frac{\beta}{2}\|\w{t+1} -\wstar\|^2.
\end{align}
However we know by definition $ \nabla f(\wstar) = 0$ hence: 
\begin{align}
f(\w_{t+1}) - f(\wstar) \leq \frac{\beta}{2}\|\w_{t+1} -\wstar\|^2.
\end{align}
Taking expectations: 
\begin{align}
\mathbb{E}[f(\w_{t+1})] - f(\wstar) \leq \frac{\beta}{2}\mathbb{E}[\|\w_{t+1} -\wstar\|^2].
\end{align}
Hence we recover the final rate:
\begin{align}
\mathbb{E}[f(\w_{t+1})] - f(\wstar) \leq \frac{\beta}{2} \left(1-\frac{\alpha\eta}{2}\right)^t\| \w_{0} - \wstar \|^2,\\
\mathbb{E}[f(\w_{t+1})] - f(\wstar) \leq \frac{\beta}{2} \text{exp}\left(\frac{-\alpha \eta t}{2}\right)\| \w_{0} - \wstar \|^2.
\end{align}
\end{proof}

\section{None Convex Results}\label{App:None_Convex_Results}

Here we provide the proof of theorem \ref{th:rsi}, which we restate for clarity. To simplify our analysis, we consider the BORAT algorithm with $N=3$. We prove these result for BORAT with the minor modification, that is, all linear approximations are formed using the same mini-batch of data, $\ell_{z_{t}^n}$ = $\ell_{z_{t}}$ for all $n\in\{2,...,N-1\}$.
\setcounter{utheorem}{1}
\begin{utheorem}[RSI]\label{th:rsi}
We consider problems of type (1), see main paper. We assume $l_z$ satisfies RSI with constant $\mu$, smoothness constant $\beta$ and perfect interpolation e.g. $l_z(\w^*)=0, \hspace{0.1cm}\forall z \in\mathcal{Z}$. Then if set  $\eta \leq \hat{\eta} = \min\{\frac{1}{4\beta},\frac{1}{4\mu}, \frac{\mu}{2\beta^2}\}$ then in the worst case we have:
\begin{align}
f(\w_{T+1}) - f^*&\leq \text{exp}\left( \left(-\frac{3}{8}\hat{\eta}\mu \right)T\right)||\w_0 - \w^*||^2. 
\end{align}
\end{utheorem}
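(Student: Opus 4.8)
The plan is to mimic the structure of the convex smooth proofs in Appendix \ref{App:Convex_Results}, but replace the use of convexity with the RSI condition, and carry a one-step contraction on $\|\w_t - \w^*\|^2$ rather than an averaged bound on function values. First I would establish a one-step inequality of the form
\begin{equation}
\|\w_{t+1} - \w^*\|^2 \leq \|\w_t - \w^*\|^2 - 2\eta D(\bm{\alpha}_t),
\end{equation}
which follows exactly as in the first displayed chain of the proof of Theorem \ref{th:convex_lipschitz} up to the line $(A_t\bm{\alpha}_t^N = 0)$, since that portion uses only the projection, the update form (\ref{update}), the definitions of $\bm{b}_t$, and perfect interpolation $\ell_z(\w^*) = 0$ --- none of which require convexity. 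This reduces everything to lower-bounding $D(\bm{\alpha}_t)$ using RSI and $\beta$-smoothness. As in the convex-smooth case I would restrict to $N=3$ (Lemma \ref{lemma:dual_values} guarantees larger $N$ only helps) and split into the ``SGD regime'' $\mathcal{I}_t$ where $\eta\|\bm{g}_{z_t}\|^2 \geq \ell_{z_t}(\w_t)$ and the ``ALI-G regime'' $\mathcal{J}_t$, using the explicit dual value (\ref{dualvaluen2}).

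Next I would bound the progress in each regime. Using Lemma \ref{lemma:smooth_bound} (which needs only $\beta$-smoothness and non-negativity, not convexity), in the ALI-G regime $2\eta D^2(\bm{\alpha}_t) = \eta \ell_{z_t}(\w_t)^2/\|\bm{g}_{z_t}\|^2 \geq \eta \ell_{z_t}(\w_t)^2 / (2\beta \ell_{z_t}(\w_t)) = \tfrac{\eta}{2\beta}\cdot\tfrac{?}{}$ --- more precisely I would combine $\|\bm{g}_{z_t}\|^2 \leq 2\beta \ell_{z_t}(\w_t)$ with $\ell_{z_t}(\w_t) \geq \tfrac{1}{2\beta}\|\bm{g}_{z_t}\|^2$ to get a lower bound proportional to $\ell_{z_t}(\w_t)$; in the SGD regime $2\eta D^2(\bm{\alpha}_t) \geq \eta \ell_{z_t}(\w_t)$ directly from (\ref{dualvaluen2}) and the defining inequality of $\mathcal{I}_t$. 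In both cases, with $\eta \leq \tfrac{1}{4\beta}$ one gets $2\eta D(\bm{\alpha}_t) \geq c\,\eta\, \ell_{z_t}(\w_t)$ for an absolute constant $c$ (the $\tfrac{3}{8}$ in the statement suggests the bookkeeping produces exactly $c = \tfrac{3}{4}$ after the various factors are tracked). Then I would take conditional expectation over $z_t$ to replace $\ell_{z_t}(\w_t)$ by $f(\w_t)$, and finally invoke the RSI condition together with $\beta$-smoothness to relate $f(\w_t) - f^* = f(\w_t)$ to $\|\w_t - \w^*\|^2$: RSI gives $\langle \nabla \ell_z(\w), \w - \w^*\rangle \leq \mu\|\w - \w^*\|$ \emph{wait} --- actually RSI as stated lower-bounds nothing by itself, so I would instead use that RSI plus $\beta$-smoothness implies a Polyak--Łojasiewicz-type inequality $f(\w) \geq \tfrac{\mu^2}{2\beta^2}\|\w - \w^*\|^2$ or directly $\|\w-\w^*\|^2 \le \tfrac{1}{\mu}\langle\nabla f(\w),\w-\w^*\rangle$; combined with smoothness $f(\w) \le \tfrac{\beta}{2}\|\w-\w^*\|^2$ this sandwiches $f(\w_t)$ between multiples of $\|\w_t - \w^*\|^2$. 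This yields a contraction $\mathbb{E}\|\w_{t+1}-\w^*\|^2 \leq (1 - \tfrac{3}{8}\hat\eta\mu)\|\w_t - \w^*\|^2$, and a trivial induction over $t$ followed by one application of $\beta$-smoothness at $\w^*$ (where $\nabla f(\w^*) = 0$) to convert back to function values gives the claimed $\exp(-\tfrac{3}{8}\hat\eta\mu\, T)$ rate.

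The main obstacle I expect is getting the RSI condition to do the job that convexity did in the appendix proofs: RSI as written, $\langle\nabla \ell_z(\w), \w - \w^*\rangle \leq \mu\|\w - \w^*\|$, is an upper bound and is dimensionally odd (linear, not quadratic, in the distance), so I would need to check whether the paper intends $\langle\nabla \ell_z(\w),\w-\w^*\rangle \geq \mu\|\w-\w^*\|^2$ (the standard ``restricted secant inequality'') and use that lower bound instead --- the whole argument hinges on having \emph{both} a lower bound on $\langle\nabla \ell_z(\w),\w-\w^*\rangle$ (from RSI, to control the cross term $-2\eta\langle A_t\bm{\alpha}_t, \hat\w_t^n - \w^*\rangle$ when convexity is unavailable) and the smoothness bounds from Lemma \ref{lemma:smooth_bound}. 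The second delicate point is the role of the condition $\eta \leq \tfrac{\mu}{2\beta^2}$, which is presumably exactly what is needed to absorb the leftover $\eta^2\|A_t\bm{\alpha}_t\|^2$ curvature term after replacing convexity by RSI in the cross term --- tracking that cancellation carefully, rather than the regime split, is where the constants $\tfrac{1}{4\beta}$, $\tfrac{1}{4\mu}$, $\tfrac{\mu}{2\beta^2}$ and ultimately $\tfrac{3}{8}$ will be pinned down, and it is the step most likely to require care.
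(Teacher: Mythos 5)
Your plan has a structural gap that the paper resolves by abandoning the dual-value framework entirely, and you also miss a key feasibility observation. You propose to keep the one-step inequality $\|\w_{t+1}-\w^*\|^2 \le \|\w_t-\w^*\|^2 - 2\eta D(\bm\alpha_t)$ and then lower-bound $D(\bm\alpha_t)$; but that inequality is precisely the conclusion of the convexity step. The chain in the convex proof produces the term $-2\eta\sum_n\alpha^n\nabla\ell_{z_t}(\hat\w_t^n)^\top(\hat\w_t^n - \w^*)$, and convexity converts this to $-2\eta\sum_n\alpha^n\ell_{z_t}(\hat\w_t^n)$, which is what recombines with $\eta^2\|A_t\bm\alpha_t\|^2$ and the $\bm b_t$ definition to yield $-2\eta D(\bm\alpha_t)$. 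If you substitute RSI you instead get $-2\eta\mu\sum_n\alpha^n\|\hat\w_t^n-\w^*\|^2$, which does not reconstruct $D(\bm\alpha_t)$ --- so your ``reduction to lower-bounding $D(\bm\alpha_t)$'' has no valid starting point. You acknowledge in your last paragraph that RSI must replace convexity in the cross term, but the rest of the plan is still built on the dual inequality that this replacement breaks. Relatedly, invoking Lemma \ref{lemma:dual_values} to argue ``larger $N$ only helps'' is not available here either, since that comparison is only useful when $D$ actually appears in your one-step bound.

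You also miss the structural fact that makes the proof tractable: under $\eta \le \hat\eta \le \tfrac{1}{4\beta} < \tfrac{1}{2\beta}$, Lemma \ref{lemma:smooth_bound} forces $\eta\|\bm g_{z_t}\|^2 \le \ell_{z_t}(\w_t)$ at every step, so $\gamma_t = 1$ always (Lemma \ref{lemma:gamma_t}) and the ``ALI-G regime'' $\mathcal{I}_T$ you want to split on is empty --- the regime split is vacuous. The paper then goes much further (Lemma \ref{max_3_step}): for $\eta \le \tfrac{1}{2\beta}$ the ALIG, EALIG, and MAX3 dual subproblems of the $N=3$ bundle are \emph{always infeasible}, so the algorithm can only ever select the SGD, ESGD, MAX2, or ZERO step. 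The actual proof is then an enumeration: for each of SGD, ESGD, and MAX2, plug the explicit $\bm\alpha$ from Table \ref{steptypes3} into the update, apply RSI directly to the resulting inner products $\langle\bm g_{z_t},\w_t-\w^*\rangle$ and $\langle\bm g_{z_t}',\w_t'-\w^*\rangle$, absorb the quadratic $\eta^2\|\bm g_{z_t}\|^2$-type remainders with $\|\bm g_{z_t}\|^2 \le \beta^2\|\w_t-\w^*\|^2$ (this is where $\eta \le \tfrac{\mu}{2\beta^2}$ enters), and get a per-subproblem contraction factor; the worst of the three, from MAX2, is $(1-\tfrac{3}{8}\hat\eta\mu)$. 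Without the feasibility lemma and this case-by-case direct argument --- neither of which appears in your plan --- the rate cannot be established.
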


In order to derive the proof for Theorem \ref{th:rsi} we first give a brief overview of BORAT with $N=3$. We detail the $(2^N-1)$ possible subproblems (7 in this case), and the resulting values of $\bm{\alpha}_t$ for each. We show for $\eta \leq \frac{1}{2\beta}$, only a sub-set of the subproblems result in valid solutions with optimal points within the simplex. Finally we derive a rate assuming that for the each of the remaining subproblems is optimal for all $t$. Lastly by taking the minimum of these rates we construct the worst case rate.

\subsubsection{BORAT $(N=3)$}\label{N=3}
With $(N=3)$ the bundle is made of three linear approximations. These are, the lower bound on the loss and linear approximations made at the current point and at the optimum of the bundle of size two. Hence this third linear approximation is made at the location one would reach after taking a ALI-G step. Note, here we use $\gamma_t$ to denote the optimal value of $\alpha^1$ as given by \ref{alig_alpha}. As some of this proofs get quite involved, we make use of the following abbreviations to save space:
\begin{align*}
\hat{\w}_{t}^1 &= \w_t, \hspace{1cm}\bm{g}_{z_t} = \nabla \ell_{z_t^n}(\w_t), \hspace{1cm}b_{t}^1 = \ell_{z_t^n}(\w_t), \\
\hat{\w}_{t}^2 &= \w_t', \hspace{1cm}\bm{g}_{z_t}' = \nabla \ell_{z_t^n}(\w_t'), \hspace{1cm}b_{t}^2 = \ell_{z_t^n}(\w_t') + \eta \gamma_t \langle \bm{g}_{z_t}, \bm{g}_{z_t}' \rangle.
\end{align*}
Where $\w_{t}'$ is defined as $\w_{t}' = \w_t - \eta \gamma_t \nabla \ell_{z_t^n}(\w_t)$, where $\gamma_t = \min\{1, \frac{\ell_{z_t^n}(\w_t)}{\eta\|\nabla \ell_{z_t^n}(\w_t)\|^2}\eta$. With this notation defined, the BORAT primal problem for this special case can be simplified to:
\begin{align}\label{n=3primal}
\w_{t+1} = \argmin_{\w \in \mathbb{R}^d}\left\{ \frac{1}{2\eta}||\w - \w_{t}||^2 + \max\left\{
\langle\bm{g}_{z_t},\w - \w_{t}\rangle + b_{t}^1,   
\langle \bm{g}_{z_t}',\w - \w_{t} \rangle + b_{t}^2, 
0  \right\}\right\}.
\end{align}
The dual of (\ref{n=3primal}) be written an:
\begin{align}\label{n=3dual}
\bm{\alpha}_{t} = \argmax_{\bm{\alpha} \in \Delta_3}\left\{ -\frac{\eta}{2} ||\alpha^1 \bm{g}_{z_t} + \alpha^{2t} \bm{g}_{z_t}' ||^2 + \alpha^1 \ell_{z_t^n}(\w_t) + \alpha^{2}\ell_{z_t^n}(\w_t') + \alpha^{2} \eta \gamma_t\langle \bm{g}_{z_t},\bm{g}_{z_t}'\rangle\right\}.
\end{align}
For $N=3$ we have the following parameter update:
\begin{align}\label{n=3update}
 \w_{t+1} = \w_t - \alpha^{1t} \eta  \nabla \ell_{z_t^n}(\w_t) - \alpha^{2t} \eta  \nabla \ell_{z_t^n}(\w_t'),
\end{align}
\subsubsection{subproblems}
Algorithm \ref{alg:dualsol} solves $(2^N-1)$ sub problem. Each of these linear systems corresponds to a loci of the simplex, defining the feasible set. We refer to each of these $(2^N-1)$ options as subproblems. However, each subproblem can also be interpreted as a sub-system of $Q\bm{\alpha}=\bm{b}$, (see line 2 of Algorithm \ref{alg:dualsol} for definitions). For ($N=3$) the form of $Q\bm{\alpha}=\bm{b}$ is detailed in (\ref{Qx=b}). 
\begin{gather}\label{Qx=b}
\begin{bmatrix}
   \eta||\bm{g}_{z_t}||^2 &  \eta\langle\bm{g}_{z_t},\bm{g}_{z_t}'\rangle & 0 & 1\\
    \eta\langle\bm{g}_{z_t},\bm{g}_{z_t}'\rangle &  \eta||\bm{g}_{z_t}'||^2 &  0 & 1\\
   0     & 0   &  0 &  1     \\
   1          & 1          &  1 &  0          \\ 
   \end{bmatrix}
 \begin{bmatrix} \alpha^{1} \\ \alpha^{2} \\ \alpha^{3} \\ c \end{bmatrix}
 =
 \begin{bmatrix} b_1 \\ b_2 \\ 0 \\ 1 \end{bmatrix},
\end{gather}
where $c$ is defined in Theorem \ref{th:simplex}. Each subproblem is defined by setting a unique subset of the dual variables $\alpha^n$ to zero, before solving for the remaining variables. For ($N=3$) we have exactly seven subproblems, which we each give a unique label, see table \ref{steptypes3}. For clarity, we detail a few specific subproblems. The SGD subproblem corresponds to setting $\alpha_2,\alpha_3=0$, and recovers the SGD update. Likewise, the ESGD step corresponds to setting $\alpha^1,\alpha_3=0$ and recovers an update similar to the extra gradient method. Finally, by setting $\alpha_2=0$ before solving the system we recover the a ALI-G like step, hence we label this subproblem as the ALI-G step. If a subproblem results in a $\bm{\alpha}\in\Delta_3$ we refer to that subproblem as feasible, conversely, if it results in a $\bm{\alpha}\notin\Delta_3$ we refer to that subproblem as infeasible. Algorithm 2 computes the dual value for all feasible subproblems and selects the largest. This subproblem's $\bm{\alpha}$ is then used in (\ref{n=3update}) to update the parameters. The closed form solutions for $\bm{\alpha}$ for each of the 7 subproblems are listed in table \ref{steptypes3}. We use a subscript to show that $\bm{\alpha}$ belongs to a certain subproblem. For example $\bm{\alpha}_{SGD} = [1,0,0]$.
\begin{table}[H]
  \caption{subproblems for $N=3$.}
  \label{steptypes3}
  \centering
 \begin{tabular}{c c c c c} 
 
 $\alpha^1$ & $\alpha^{2}$ & $\alpha^3$& label\\ [0.5ex] 
 \hline
  \hline
 1 & 0 & 0 &  SGD\\ 
 \hline
 0 & 1 & 0 &  SEGD\\
 \hline
 0 & 0 & 1  &ZERO\\
 \hline
  0 & $\frac{b_2}{\eta||\bm{g}_{z_t}'||^2}$  & $1- \frac{b_2}{\eta||\bm{g}_{z_t}'||^2}$  &EALIG \\
 \hline
 $\frac{b_1}{\eta||\bm{g}_{z_t}||^2}$ & 0 & $1- \frac{b_1}{\eta||\bm{g}_{z_t}||^2}$ & ALIG  \\
 \hline
 $\frac{\eta||\bm{g}_{z_t}'||^2 - 2\eta \langle\bm{g}_{z_t}, \bm{g}_{z_t}'\rangle + b_1 - b_2}{\eta||\bm{g}_{z_t} - \bm{g}_{z_t}'||^2}$ & $\frac{\eta||\bm{g}_{z_t}||^2 + b_2 - b_1}{\eta||\bm{g}_{z_t}-\bm{g}_{z_t}'||^2}$ & 0 &MAX2 \\
 \hline
  $\frac{b_1||\bm{g}_{z_t}'||^2 - b_2 \bm{g}_{z_t}^\top \bm{g}_{z_t}'}{\eta||\bm{g}_{z_t}||^2||\bm{g}_{z_t}'||^2-\eta||\bm{g}_{z_t}\bm{g}_{z_t}'||^2}$ & $\frac{b_2||\bm{g}_{z_t}||^2 - b_1 \bm{g}_{z_t}^\top a_2}{\eta||\bm{g}_{z_t}||^2||\bm{g}_{z_t}'||^2-\eta||\bm{g}_{z_t}\bm{g}_{z_t}'||^2}$ &$ 1 - \alpha^1 - \alpha^{2} $& MAX3 \\
 \hline
\end{tabular}
\end{table}

\subsubsection{Dual Values}\label{dual_values}
The corresponding expressions for the dual values for the seven different subproblems are detailed below:
\begin{align*}
D_{ZERO}(\bm{\alpha}) &= 0,\\
D_{SGD}(\bm{\alpha}) &= -\frac{\eta}{2}||\bm{g}_{z_t}||^2 + \ell_{z_t^n}(\w_t),\\
D_{ESGD}(\bm{\alpha}) &= -\frac{\eta}{2}||\bm{g}_{z_t}'||^2 + \ell_{z_t^n}(\w_t')+ \eta\gamma_t\langle \bm{g}_{z_t}, \bm{g}_{z_t}' \rangle,\\
D_{ALIG}(\bm{\alpha}) &= \frac{1}{2\eta}\frac{\ell_{z_t^n}(\w_t)^2}{||\bm{g}_{z_t}||^2},\\
D_{EALIG}(\bm{\alpha}) &= \frac{1}{2\eta}\frac{\left(\ell_{z_t^n}(\w_t')+ \eta\gamma_t\langle \bm{g}_{z_t}, \bm{g}_{z_t}' \rangle\right)^2}{||\bm{g}_{z_t}'||^2} ,\\
D_{MAX2}(\bm{\alpha}) &= \frac{1}{2\eta||\bm{g}_{z_t}-\bm{g}_{z_t}'||^2}
\bigg((\ell_{z_t^n}(\w_t')-\ell_{z_t^n}(\w_t))^2+2\eta\left( \ell_{z_t^n}(\w_t')|| \bm{g}_{z_t}||^2 
+ \ell_{z_t^n}(\w_t)|| \bm{g}_{z_t}'||^2\right)
\\
&- 4\eta \ell_{z_t^n}(\w_t)\langle \bm{g}_{z_t}, \bm{g}_{z_t}' \rangle 
+ 2\eta^2|| \bm{g}_{z_t}||^2\langle \bm{g}_{z_t}, \bm{g}_{z_t}' \rangle
-\eta^2|| \bm{g}_{z_t}||^2|| \bm{g}_{z_t}'||^2\bigg)
,\\
D_{MAX3}(\bm{\alpha}) &= \frac{1}{2}\frac{ \left(\ell_{z_t^n}(\w_t')\bm{g}_{z_t} + \eta\gamma_t\langle \bm{g}_{z_t}, \bm{g}_{z_t}'\rangle \bm{g}_{z_t}  - \ell_{z_t^n}(\w_t)\bm{g}_{z_t}'\right)^2}{\eta|| \bm{g}_{z_t}||^2|| \bm{g}_{z_t}'||^2 - \eta\langle \bm{g}_{z_t}, \bm{g}_{z_t}'\rangle^2}.\\
\end{align*}

Due to spatial constraints we state these results without proof. The dual value for each subproblem is simply derived by inserting the closed form expression for $\bm{\alpha}$ for each subproblem detailed in table \ref{steptypes3} into (\ref{dual}). These dual values observe a tree like hierarchy,
\begin{align*}
    D_{MAX3} &\geq D_{MAX2}, D_{ALIG}, D_{EALIG},\\
    D_{MAX2} &\geq D_{SGD}, D_{ESGD},\\
    D_{ALIG} &\geq D_{SGD}, D_{ZERO},\\
    D_{EALIG}&\geq D_{ESGD}, D_{ZERO}.
\end{align*}
This hierarchy is a consequence of the subproblems maximising the dual over progressively larger spaces, $\Delta^{n-1} \subset \Delta^n$.

\subsubsection{Feasible Subproblems}
To give a worst case rate on the convergence of BORAT with $N=3$, we first prove for smooth functions and small $\eta$, only certain subproblems will be feasible. We start with a useful lemma, before proving this result.
\begin{lemma}\label{lemma:ggprimegeq0} Let $z \in \mathcal{Z}$. Assume that $\ell_{z}$ is $\beta$-smooth. If we define $\w' = \w - \eta\nabla \ell_{z}(\w)$ and $\eta \leq \frac{1}{\beta}$ then we have:
\begin{align*}
    \forall (\w) \in \mathbb{R}^d,\hspace{0.2cm} \langle \nabla \ell_{z}(\w),\nabla \ell_{z}(\w') \rangle \geq 0.
\end{align*}
Note that we do not assume that $\ell_{z}$ is convex.
\end{lemma}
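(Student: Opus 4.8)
The plan is to derive the claim directly from the gradient-Lipschitz characterisation of $\beta$-smoothness together with the Cauchy--Schwarz inequality; no convexity is needed, which is consistent with the way $\beta$-smoothness is already used in Lemma \ref{lemma:smooth_bound}. First I would dispose of the degenerate case $\nabla \ell_z(\w) = \bm{0}$, for which $\w' = \w$ and the inequality $\langle \nabla \ell_z(\w), \nabla \ell_z(\w')\rangle \ge 0$ holds trivially with both sides equal to $0$. So assume $\nabla \ell_z(\w) \neq \bm{0}$.

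Next I would write $\nabla \ell_z(\w') = \nabla \ell_z(\w) + \big(\nabla \ell_z(\w') - \nabla \ell_z(\w)\big)$ and expand, giving
\[
\langle \nabla \ell_z(\w), \nabla \ell_z(\w') \rangle
= \|\nabla \ell_z(\w)\|^2 + \big\langle \nabla \ell_z(\w),\, \nabla \ell_z(\w') - \nabla \ell_z(\w) \big\rangle .
\]
The key ingredient is that $\beta$-smoothness implies the gradient is $\beta$-Lipschitz (e.g. via Lemma 3.4 of \citet{Bubeck2015}, as invoked in the proof of Lemma \ref{lemma:smooth_bound}), so
\[
\|\nabla \ell_z(\w') - \nabla \ell_z(\w)\| \le \beta \|\w' - \w\| = \beta \eta \|\nabla \ell_z(\w)\| .
\]
Applying Cauchy--Schwarz to the cross term then yields $\big\langle \nabla \ell_z(\w),\, \nabla \ell_z(\w') - \nabla \ell_z(\w) \big\rangle \ge -\beta\eta \|\nabla \ell_z(\w)\|^2$, hence
\[
\langle \nabla \ell_z(\w), \nabla \ell_z(\w') \rangle \ge (1 - \beta\eta)\,\|\nabla \ell_z(\w)\|^2 .
\]
Since $\eta \le \tfrac{1}{\beta}$ we have $1 - \beta\eta \ge 0$, so the right-hand side is non-negative, which is exactly the claim.

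I do not anticipate any real obstacle: the argument is a two-line application of gradient-Lipschitzness and Cauchy--Schwarz. The only point requiring a little care is to make sure $\beta$-smoothness is used purely in its gradient-Lipschitz form (valid without any convexity assumption), matching the lemma's stated hypotheses, and to note explicitly that the substitution $\w' - \w = -\eta \nabla \ell_z(\w)$ is what converts the Lipschitz bound into the factor $\beta\eta$ that must be dominated by $1$.
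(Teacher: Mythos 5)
Your argument is correct and complete, but it takes a slightly different algebraic route than the paper. You decompose $\nabla \ell_z(\w') = \nabla \ell_z(\w) + \bigl(\nabla \ell_z(\w') - \nabla \ell_z(\w)\bigr)$, expand the inner product, and control the cross term via Cauchy--Schwarz together with gradient Lipschitzness, landing on the bound $\langle \nabla \ell_z(\w), \nabla \ell_z(\w')\rangle \ge (1-\beta\eta)\,\|\nabla \ell_z(\w)\|^2$. The paper instead uses the exact polarization identity $2\langle a,b\rangle = \|a\|^2 + \|b\|^2 - \|a-b\|^2$, bounds $\|\nabla \ell_z(\w) - \nabla \ell_z(\w')\|^2 \le \beta^2\|\w - \w'\|^2 = \beta^2\eta^2\|\nabla \ell_z(\w)\|^2$, and obtains $\langle \nabla \ell_z(\w), \nabla \ell_z(\w')\rangle \ge \tfrac{1}{2}(1-\beta^2\eta^2)\|\nabla \ell_z(\w)\|^2 + \tfrac{1}{2}\|\nabla \ell_z(\w')\|^2$. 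Both rely solely on gradient Lipschitzness (no convexity) and both suffice for the claim; the paper's identity is exact and so carries the extra $\tfrac{1}{2}\|\nabla \ell_z(\w')\|^2$ term, whereas your Cauchy--Schwarz step trades that away but gets the sharper linear factor $(1-\beta\eta)$ rather than $\tfrac{1}{2}(1-\beta^2\eta^2)$ on the first term. Neither lower bound uniformly dominates, and for the purpose of the lemma they are interchangeable. Your handling of the degenerate case $\nabla \ell_z(\w)=\bm{0}$ is a harmless extra precaution that the paper's polarization identity makes unnecessary.
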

\begin{proof}
\begin{align*}
  2\langle  \nabla \ell_{z}(\w), \nabla \ell_{z}(\w')\rangle &= - ||\nabla \ell_{z}(\w) - \nabla \ell_{z}(\w')||^2 + ||\nabla \ell_{z}(\w)||^2 + ||\nabla \ell_{z}(\w')||^2 \\  2\langle  \nabla \ell_{z}(\w), \nabla \ell_{z}(\w')\rangle &\geq -\beta^2||\w - \w'||^2 + ||\nabla \ell_{z}(\w)||^2 + ||\nabla \ell_{z}(\w')||^2, \hspace{0.5cm}(\text{smoothness})\\
  2\langle  \nabla \ell_{z}(\w), \nabla \ell_{z}(\w')\rangle &\geq -\beta^2\eta^2||\nabla \ell_{z}(\w)||^2 + ||\nabla \ell_{z}(\w)||^2 + ||\nabla \ell_{z}(\w')||^2,\hspace{0.5cm} (\w'\text{ definition})\\
   \langle  \nabla \ell_{z}(\w), \nabla \ell_{z}(\w')\rangle &\geq \frac{1}{2}(1-\beta^2\eta^2)||\nabla \ell_{z}(\w)||^2 + \frac{1}{2}||\nabla \ell_{z}(\w')||^2,\\
   \langle  \nabla \ell_{z}(\w), \nabla \ell_{z}(\w')\rangle &\geq 0. \hspace{0.5cm}(\frac{1}{\beta} \geq \eta)
\end{align*}
\end{proof}
\begin{lemma}[Feasible Subproblems]\label{max_3_step}
If $\ell_{z_t^n}$ has smoothness constant $\beta$ and we set $\eta \leq \frac{1}{2\beta}$ for the BORAT algorithm with ($N=3$) detailed in Section \ref{N=3}, the ALIG, EALIG and MAX3 subproblems will always be infeasible.
\end{lemma}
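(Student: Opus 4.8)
The plan is to show that each of the three claimed subproblems produces a candidate $\bm{\alpha}$ that lies outside $\Delta_3$ whenever $\eta \le \frac{1}{2\beta}$, by exhibiting a coordinate that must be negative. I would handle the three cases in order of increasing technical difficulty: ALIG first, then EALIG, then MAX3.

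\emph{Step 1 (ALIG).} Recall from Table~\ref{steptypes3} that for the ALIG subproblem $\alpha^3_{\text{ALIG}} = 1 - \frac{b_1}{\eta\|\bm{g}_{z_t}\|^2}$, where $b_1 = \ell_{z_t^n}(\w_t)$. Feasibility requires $\alpha^3 \ge 0$, i.e.\ $\eta\|\bm{g}_{z_t}\|^2 \ge \ell_{z_t^n}(\w_t)$. But Lemma~\ref{lemma:smooth_bound} gives $\ell_{z_t^n}(\w_t) \ge \frac{1}{2\beta}\|\bm{g}_{z_t}\|^2 \ge \eta\|\bm{g}_{z_t}\|^2$ for $\eta \le \frac{1}{2\beta}$, so the inequality is reversed and $\alpha^3_{\text{ALIG}} \le 0$, with equality only in degenerate cases (zero gradient), which can be folded into the ZERO/SGD subproblems. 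Hence ALIG is infeasible in the nontrivial regime.

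\emph{Step 2 (EALIG).} Here $\alpha^3_{\text{EALIG}} = 1 - \frac{b_2}{\eta\|\bm{g}_{z_t}'\|^2}$ with $b_2 = \ell_{z_t^n}(\w_t') + \eta\gamma_t\langle\bm{g}_{z_t},\bm{g}_{z_t}'\rangle$. The plan is to lower-bound $b_2$: by Lemma~\ref{lemma:smooth_bound} applied at $\w_t'$, $\ell_{z_t^n}(\w_t') \ge \frac{1}{2\beta}\|\bm{g}_{z_t}'\|^2 \ge \eta\|\bm{g}_{z_t}'\|^2$, and by Lemma~\ref{lemma:ggprimegeq0} (valid since $\eta\gamma_t \le \eta \le \frac{1}{2\beta} \le \frac{1}{\beta}$) the cross term $\eta\gamma_t\langle\bm{g}_{z_t},\bm{g}_{z_t}'\rangle \ge 0$. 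Therefore $b_2 \ge \eta\|\bm{g}_{z_t}'\|^2$, giving $\alpha^3_{\text{EALIG}} \le 0$, again with infeasibility in the nontrivial case. One subtlety to check: the step $\w_t'$ is defined via $\gamma_t = \min\{1, \frac{\ell_{z_t^n}(\w_t)}{\eta\|\bm{g}_{z_t}\|^2}\}$, but under our bound on $\eta$ Step~1's computation shows $\frac{\ell_{z_t^n}(\w_t)}{\eta\|\bm{g}_{z_t}\|^2} \ge 1$, so $\gamma_t = 1$ and $\w_t' = \w_t - \eta\bm{g}_{z_t}$ exactly, which is the form Lemma~\ref{lemma:ggprimegeq0} assumes.

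\emph{Step 3 (MAX3), the main obstacle.} For MAX3 all three coordinates are nonzero and the sign analysis is genuinely two-dimensional. From Table~\ref{steptypes3}, $\alpha^3_{\text{MAX3}} = 1 - \alpha^1_{\text{MAX3}} - \alpha^2_{\text{MAX3}}$, and one must show this is negative, i.e.\ $\alpha^1_{\text{MAX3}} + \alpha^2_{\text{MAX3}} > 1$. Writing $G = \|\bm{g}_{z_t}\|^2$, $G' = \|\bm{g}_{z_t}'\|^2$, $c = \langle\bm{g}_{z_t},\bm{g}_{z_t}'\rangle$, the sum $\alpha^1 + \alpha^2$ has denominator $\eta(GG' - c^2) \ge 0$ (Cauchy--Schwarz) and numerator $b_1(G' - c) + b_2(G - c)$. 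I would substitute $b_1 \ge \frac{1}{2\beta}G$, $b_2 \ge \frac{1}{2\beta}G' + \eta\gamma_t c \ge \frac{1}{2\beta}G'$ (since $c \ge 0$ by Lemma~\ref{lemma:ggprimegeq0}, and $G' - c \ge 0$, $G - c \ge 0$ also by Lemma~\ref{lemma:ggprimegeq0}-type reasoning — these nonnegativity facts are what make the substitution valid), reducing the claim to $\frac{1}{2\beta}\big(G(G'-c) + G'(G-c)\big) \ge \eta(GG' - c^2) = \eta(G-c)(G'+c) \text{ ... }$ — actually the cleanest route is $GG'-c^2 = \frac{1}{2}\big((G-c)(G'+c)+(G'+c)(G-c)\big)$ is not symmetric, so I would instead use $GG' - c^2 \le G(G'-c) + c(G-c) \le G(G'-c) + G'(G-c)$ combined with $\eta \le \frac{1}{2\beta}$. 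This chain of elementary inequalities, each justified by $c \ge 0$, $G \ge c$, $G' \ge c$ and $\eta \le \frac{1}{2\beta}$, yields numerator $\ge \eta \times$ denominator, hence $\alpha^1+\alpha^2 \ge 1$ and $\alpha^3_{\text{MAX3}} \le 0$. The delicate bookkeeping here — keeping track of which differences are nonnegative so that the Lemma~\ref{lemma:smooth_bound} substitutions preserve the inequality direction — is the part I expect to require the most care; everything else is a direct application of the two smoothness lemmas already established.
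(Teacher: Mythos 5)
Your Steps 1 and 2 (ALIG and EALIG) match the paper's argument almost verbatim: feasibility would force $\eta > \ell_{z_t^n}(\w_t)/\|\bm{g}_{z_t}\|^2 \ge \tfrac{1}{2\beta}$ (respectively the analogous bound at $\w_t'$ after dropping the nonnegative cross term), contradicting $\eta \le \tfrac{1}{2\beta}$. The observation that $\gamma_t = 1$ under this bound is also in the paper (Lemma~\ref{lemma:gamma_t}).

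Step 3 is where the proposal departs from the paper, and it contains a genuine gap. You attempt to show $\alpha^3_{\text{MAX3}} \le 0$ directly by bounding $\alpha^1 + \alpha^2$ from below, and your inequality chain relies on the sign conditions $G \ge c$ and $G' \ge c$, which you attribute to ``Lemma~\ref{lemma:ggprimegeq0}-type reasoning.'' But Lemma~\ref{lemma:ggprimegeq0} only delivers $c = \langle\bm{g}_{z_t},\bm{g}_{z_t}'\rangle \ge 0$; it says nothing about $c$ relative to $G$ or $G'$. In the non-convex setting of this lemma (only smoothness is assumed), these nonnegativity facts can fail: with $\bm{g}_{z_t}=(1,0)$ and $\bm{g}_{z_t}'=(1.2,0.1)$ one has $\|\bm{g}_{z_t}-\bm{g}_{z_t}'\| \approx 0.22 \le \tfrac12\|\bm{g}_{z_t}\|$ (consistent with $\eta\le\tfrac{1}{2\beta}$ and $\beta$-smoothness), yet $G - c = -0.2 < 0$, and then $G(G'-c)+G'(G-c) = -0.04 < 0.01 = GG'-c^2$, so your claimed chain $GG'-c^2 \le G(G'-c)+G'(G-c)$ reverses. (Also note that $G(G'-c)+c(G-c)$ equals $GG'-c^2$ exactly, so your first ``$\le$'' is an identity, and the whole burden falls on the second step which is exactly the one that fails.) More fundamentally, showing $\alpha^3 < 0$ is not the only route to infeasibility and may not even be the right coordinate to target: for some data one could have $\alpha^3 > 0$ while $\alpha^1$ or $\alpha^2$ is negative. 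The paper sidesteps all of this by never looking at the coordinates of $\bm{\alpha}_{\text{MAX3}}$; instead it rewrites $D_{\text{MAX3}}$ as a convex combination, upper-bounds it by $\tfrac12\max\{\ell_{z_t^n}(\w_t),\, \ell_{z_t^n}(\w_t')+\eta\gamma_t c\}$ under the hypothesis $\bm{\alpha}\in\Delta_3$, and then uses the dual-value hierarchy ($D_{\text{MAX3}}\ge D_{\text{SGD}}$ and $D_{\text{MAX3}}\ge D_{\text{ESGD}}$, since the unconstrained optimum over the hyperplane dominates any point on it) to derive the same $\eta > \tfrac{1}{2\beta}$-type contradiction in two cases. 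That argument never needs $G\ge c$ or $G'\ge c$ and is the cleaner route.
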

\begin{proof}
We start by showing the ALIG step is infeasible. 
From Lemma \ref{lemma:smooth_bound} we have:
\begin{align*}
\frac{\ell_{z}(\w)}{||\bm{g}_{z_t}||^2 } \geq \frac{1}{2\beta}.
\end{align*}
For the ALI-G step to be feasible we require $\alpha^{3}_{ALIG} > 0$ or $1 - \frac{\ell_{z_t^n}(\w_t)}{\eta||\bm{g}_{z_t}||^2} > 0$. Rearranging, we have:
\begin{align*}
    \eta > \frac{\ell_{z_t^n}(\w_t)}{||\bm{g}_{z_t}||^2}.
\end{align*}
Combining these two inequalities gives:
\begin{align*}
\eta > \frac{\ell_{z}(\w)}{||\bm{g}_{z_t}||^2 } \geq \frac{1}{2\beta}.
\end{align*}
Hence for any $\frac{1}{2\beta} \geq \eta$, $\eta > \frac{\ell_{z_t^n}(\w_t)}{||\bm{g}_{z_t}||^2}$ cannot hold.
We now use a similar argument to show that the EALIG subproblem is infeasible. For EALIG to be feasible we require $\alpha^{3t}_{EALIG} > 0$, plugging in the closed form solution for $\alpha^{3}_{EALIG}$ gives:
\begin{align*}
    \eta > \frac{\ell_{z_t^n}(\w_t')+\eta\gamma_t\langle \bm{g}_{z_t},  \bm{g}_{z_t}'\rangle}{||\bm{g}_{z_t}'||^2} =\frac{\ell_{z_t^n}(\w_t')+\eta\langle \bm{g}_{z_t},  \bm{g}_{z_t}'\rangle}{||\bm{g}_{z_t}'||^2} = \frac{\ell_{z_t^n}(\w_t')}{||\bm{g}_{z_t}'||^2} + \frac{\eta\langle \bm{g}_{z_t},  \bm{g}_{z_t}'\rangle}{||\bm{g}_{z_t}'||^2} \geq \frac{\ell_{z_t^n}(\w_t')}{||\bm{g}_{z_t}'||^2} \geq \frac{1}{2\beta},
\end{align*}
where the penultimate inequality makes use of $\langle \bm{g}_{z_t}, \bm{g}_{z_t}'\rangle\geq0$, which is a direct application of Lemma \ref{lemma:ggprimegeq0}. The final inequality is a direct application of Lemma \ref{lemma:smooth_bound}. Again, it is clear that the condition $\eta >\frac{\ell_{z_t^n}(\w_t')}{||\bm{g}_{z_t}'||^2}$ cannot be satisfied for $\eta\leq\frac{1}{2\beta}$.
\\
We show that the $MAX3$ step is never taken for $\eta\leq \frac{1}{2\beta}$. First, we show that the dual value for the $MAX3$ step can be written as $D_{MAX3}(\bm{\alpha}) = \frac{1}{2}\left( \ell_{z_t^n}(\w_t)\alpha^{1t} +   \ell_{z_t^n}(\w_t')\alpha^{2t} + \eta\gamma_t \alpha^{2t}\langle \bm{g}_{z_t}, \bm{g}_{z_t}'\rangle\right)$. We start from the dual value stated in Section \ref{dual_values}.
\begin{align*}
D_{MAX3}(\bm{\alpha}) &= \frac{1}{2}\frac{ \left(\ell_{z_t^n}(\w_t')\bm{g}_{z_t} + \eta\gamma_t \langle \bm{g}_{z_t}, \bm{g}_{z_t}'\rangle \bm{g}_{z_t}  - \ell_{z_t^n}(\w_t)\bm{g}_{z_t}'\right)^2}{\eta|| \bm{g}_{z_t}||^2|| \bm{g}_{z_t}'||^2 - \eta\langle \bm{g}_{z_t}, \bm{g}_{z_t}'\rangle^2},
\end{align*}
expanding,
\begin{align*}
 D_{MAX3}(\bm{\alpha}) &= 
 \frac{1}{2}\ell_{z_t^n}(\w_t')\underbrace{\frac{ \left(\ell_{z_t^n}(\w_t')\bm{g}_{z_t} + \eta\gamma_t \langle \bm{g}_{z_t}, \bm{g}_{z_t}'\rangle \bm{g}_{z_t}  - \ell_{z_t^n}(\w_t)\bm{g}_{z_t}'\right)}{\eta|| \bm{g}_{z_t}||^2|| \bm{g}_{z_t}'||^2 - \eta\langle \bm{g}_{z_t}, \bm{g}_{z_t}'\rangle^2}\bm{g}_{z_t}}_{=\alpha^{2}}\\
 &+\frac{1}{2}\eta\gamma_t \langle \bm{g}_{z_t}, \bm{g}_{z_t}'\rangle \underbrace{\frac{ \left(\ell_{z_t^n}(\w_t')\bm{g}_{z_t} + \eta\gamma_t \langle \bm{g}_{z_t}, \bm{g}_{z_t}'\rangle \bm{g}_{z_t}  - \ell_{z_t^n}(\w_t)\bm{g}_{z_t}'\right)}{\eta|| \bm{g}_{z_t}||^2|| \bm{g}_{z_t}'||^2 - \eta\langle \bm{g}_{z_t}, \bm{g}_{z_t}'\rangle^2}\bm{g}_{z_t}}_{=\alpha^{2}}\\
 & - \frac{1}{2}\ell_{z_t^n}(\w_t)\underbrace{\frac{ \left(\ell_{z_t^n}(\w_t')\bm{g}_{z_t} + \eta\gamma_t \langle \bm{g}_{z_t}, \bm{g}_{z_t}'\rangle \bm{g}_{z_t}  - \ell_{z_t^n}(\w_t)\bm{g}_{z_t}'\right)}{\eta|| \bm{g}_{z_t}||^2|| \bm{g}_{z_t}'||^2 - \eta\langle \bm{g}_{z_t}, \bm{g}_{z_t}'\rangle^2}\bm{g}_{z_t}'.}_{=-\alpha^{1}}
\end{align*}
Using the definitions of $\alpha^{1}_{MAX3}$ and $\alpha^{2}_{MAX3}$ we recover the following expression for the $MAX3$ subproblem's dual function:
\begin{align*}
D_{MAX3}(\bm{\alpha}) &= \frac{1}{2}\left( \ell_{z_t^n}(\w_t)\alpha^{1t} +   \ell_{z_t^n}(\w_t')\alpha^{2t} + \eta \gamma_t\alpha^{2t}\langle \bm{g}_{z_t}, \bm{g}_{z_t}'\rangle\right).
\end{align*}
With the dual function in this form it is easy to upper bound the feasible dual value for the $MAX3$ subproblem as $D_{MAX3} \leq \frac{1}{2}\max \left\{\ell_{z_t^n}(\w_t), \ell_{z_t^n}(\w_t') + \eta\gamma_t\langle \bm{g}_{z_t}, \bm{g}_{z_t}'\rangle\right\}$. This is a consequence of the fact that $\bm{\alpha} \in \Delta$ must hold for feasible steps. However, from the hierarchy of dual values we have the lower bounds $D_{MAX3} \geq D_{SGD}$ and $D_{MAX3} \geq D_{ESGD}$, on the $MAX3$ dual value, see Section \ref{dual_values}. If either of these lower bounds have larger value than the feasible dual value's upper bound, the $MAX3$ step will not be feasible. We now show that this is always the case for $\eta\leq\frac{1}{2\beta}$. In order to do this we consider two cases.

We first assume $\ell_{z_t^n}(\w_t) \geq \ell_{z_t^n}(\w_t') + \eta\gamma_t\langle \bm{g}_{z_t}, \bm{g}_{z_t}'\rangle$. Hence we know the maximum feasible value for $D^{MAX3} = \frac{1}{2}\ell_{z_t^n}(\w_t)$, if either $D^{SGD}$ or $D^{ESGD}$ have larger dual value we can conclude the $MAX3$ step is unfeasible.
\begin{align*}
D_{SGD}(\bm{\alpha}) &= -\frac{\eta}{2}||\bm{g}_{z_t}||^2 + \ell_{z_t^n}(\w_t),\\
\end{align*}
Hence if the following condition holds we know the $MAX3$ step will be unfeasible:
\begin{align*}
\frac{1}{2}\ell_{z_t^n}(\w_t) &\leq -\frac{\eta}{2}||\bm{g}_{z_t}||^2 + \ell_{z_t^n}(\w_t).\\
\end{align*}
Thus, the converse must hold for the $MAX3$ step to be feasible:
\begin{align*}
\frac{1}{2}\ell_{z_t^n}(\w_t) &\geq -\frac{\eta}{2}||\bm{g}_{z_t}||^2 + \ell_{z_t^n}(\w_t),\\
\end{align*}
which is equivalent to,
\begin{align*}
 \eta &\geq  \frac{\ell_{z_t^n}(\w_t)}{||\bm{g}_{z_t}||^2}.\\
\end{align*}
Using the same logic as stated for the ALI-G step we know this condition is never satisfied for $\eta \leq \frac{1}{2\beta}$.

We now assume $\ell_{z_t^n}(\w_t) \leq \ell_{z_t^n}(\w_t') + \eta\gamma_t\langle \bm{g}_{z_t}, \bm{g}_{z_t}'\rangle$ and thus we know the max feasible value of $D^{MAX3} \leq \frac{1}{2}\ell_{z_t^n}(\w_t') + \frac{1}{2}\eta\gamma_t\langle \bm{g}_{z_t}, \bm{g}_{z_t}'\rangle$, again if either $D^{SGD}$ or $D^{ESGD}$ have larger values, we know the $MAX3$ subproblem is unfeasible:
\begin{align*}
D_{ESGD}(\bm{\alpha}) &= -\frac{\eta}{2}||\bm{g}_{z_t}'||^2 + \ell_{z_t^n}(\w_t')+ \eta\gamma_t\langle \bm{g}_{z_t}, \bm{g}_{z_t}' \rangle.
\end{align*}
Hence for the $MAX3$ step to be valid we must have:
\begin{align*}
    \frac{1}{2}\ell_{z_t^n}(\w_t') + \eta\gamma_t\langle \bm{g}_{z_t}, \bm{g}_{z_t}'\rangle \leq -\frac{\eta}{2}||\bm{g}_{z_t}'||^2 + \ell_{z_t^n}(\w_t')+ \eta\gamma_t\langle \bm{g}_{z_t}, \bm{g}_{z_t}' \rangle,
\end{align*}
which is equivalent to,
\begin{align*}
  \eta \leq  \frac{ \ell_{z_t^n}(\w_t')+ \eta\gamma_t\langle \bm{g}_{z_t}, \bm{g}_{z_t}' \rangle}{||\bm{g}_{z_t}'||^2}.
\end{align*}
Again, we have the same condition as for the EALIG step, which we have already proven can never be feasible for $\eta \leq \frac{1}{2\beta}$. Hence the $MAX3$ subproblem is never feasible for $\eta \leq \frac{1}{2\beta}$.
\end{proof}

\begin{lemma}\label{lemma_1}
For any set of vectors $\bm{a},\bm{b},\bm{c}$ then, the following inequality holds:
\begin{align*}
- 2||\bm{a}-\bm{b}||^2 &\leq -||\bm{a}-\bm{c}||^2 + 2||\bm{b}-\bm{c}||^2.
\end{align*}
\end{lemma}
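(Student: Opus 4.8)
The plan is to reduce the claimed inequality to a standard quadratic estimate by a change of variables. First I would introduce $\bm{u} \triangleq \bm{a}-\bm{b}$ and $\bm{v} \triangleq \bm{b}-\bm{c}$, so that $\bm{a}-\bm{c} = \bm{u}+\bm{v}$. Substituting these into the statement, the inequality
\begin{equation*}
-2\|\bm{a}-\bm{b}\|^2 \leq -\|\bm{a}-\bm{c}\|^2 + 2\|\bm{b}-\bm{c}\|^2
\end{equation*}
becomes $-2\|\bm{u}\|^2 \leq -\|\bm{u}+\bm{v}\|^2 + 2\|\bm{v}\|^2$, which after rearranging is exactly
\begin{equation*}
\|\bm{u}+\bm{v}\|^2 \leq 2\|\bm{u}\|^2 + 2\|\bm{v}\|^2 .
\end{equation*}

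Next I would establish this last inequality directly. Expanding the left-hand side gives $\|\bm{u}+\bm{v}\|^2 = \|\bm{u}\|^2 + 2\langle \bm{u},\bm{v}\rangle + \|\bm{v}\|^2$, so it suffices to show $2\langle \bm{u},\bm{v}\rangle \leq \|\bm{u}\|^2 + \|\bm{v}\|^2$. This is Young's inequality (equivalently $0 \leq \|\bm{u}-\bm{v}\|^2$ expanded), which holds for any pair of vectors in $\mathbb{R}^d$. Combining the expansion with this bound yields $\|\bm{u}+\bm{v}\|^2 \leq \|\bm{u}\|^2 + \|\bm{u}\|^2 + \|\bm{v}\|^2 + \|\bm{v}\|^2 = 2\|\bm{u}\|^2 + 2\|\bm{v}\|^2$, and undoing the substitution recovers the statement of the lemma.

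I do not anticipate any real obstacle here: the only point requiring care is bookkeeping the signs when rearranging, and making sure the change of variables is applied consistently. Alternatively, one could cite Lemma~\ref{lemma:parallelogram_inequality} (the parallelogram-type inequality already proved in the excerpt) applied to $\bm{u}$ and $\bm{v}$, which gives $\|\bm{u}\|^2 + \|\bm{v}\|^2 \geq \tfrac12\|\bm{u}-\bm{v}\|^2$; but since we need the reverse-direction bound $\|\bm{u}+\bm{v}\|^2 \leq 2(\|\bm{u}\|^2+\|\bm{v}\|^2)$ it is cleanest to prove it from scratch via Young's inequality as above.
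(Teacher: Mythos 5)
Your proof is correct and follows essentially the same route as the paper: the paper also sets $\bm{x}=\bm{a}-\bm{b}$, $\bm{y}=\bm{b}-\bm{c}$, expands $\|\bm{x}+\bm{y}\|^2$, and bounds the cross term via $0\leq\|\bm{x}-\bm{y}\|^2$ (i.e.\ the same $2\langle\bm{x},\bm{y}\rangle\leq\|\bm{x}\|^2+\|\bm{y}\|^2$ step you call Young's inequality). The only cosmetic difference is that the paper first establishes $\|\bm{x}+\bm{y}\|^2\leq 2\|\bm{x}\|^2+2\|\bm{y}\|^2$ in the abstract and then substitutes, whereas you substitute first and then prove the reduced inequality.
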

\begin{proof}
First consider two vectors $\bm{x}$ and $\bm{y}$.
\begin{align*}
0&\leq||\bm{x}-\bm{y}||^2,\\
0&\leq||\bm{x}||^2+||\bm{y}||^2-2\langle \bm{x},\bm{y}\rangle\\
2\langle \bm{x},\bm{y}\rangle&\leq||\bm{x}||^2+||\bm{y}||^2,\\
||\bm{x}+\bm{y}||^2&=||\bm{x}||^2+||\bm{y}||^2+2\langle \bm{x},\bm{y}\rangle,\\
||\bm{x}+\bm{y}||^2&\leq 2||\bm{x}||^2+2||\bm{y}||^2,\\
-2||\bm{x}||^2&\leq 2||\bm{y}||^2 - ||\bm{x}+\bm{y}||^2.
\end{align*}
Setting $\bm{x}=\bm{a}-\bm{b}$ and $\bm{y}=\bm{b}-\bm{c}$ gives the desired result.
\end{proof}

\begin{lemma}\label{lemma:smoothness} Let $z \in \mathcal{Z}$. Assume that $\ell_{z}$ is $\beta$-smooth and non-negative on $\mathbb{R}^d$. Then we have:
\begin{align*}
    \forall (\w) \in \mathbb{R}^d,\hspace{0.2cm} \ell_{z}(\w) \geq \frac{1}{2\beta}||\nabla \ell_{z}(\w)||^2 
\end{align*}
Note that we do not assume that $\ell_{z}$ is convex.
\end{lemma}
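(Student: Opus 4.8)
\textbf{Proof proposal for Lemma \ref{lemma:smoothness}.}

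The plan is to reproduce the standard argument that $\beta$-smoothness together with non-negativity (and hence the existence of an infimum of value at least $0$, attained in the limit or exactly) forces the squared gradient norm to be controlled by the function value. This is in fact the same statement as Lemma \ref{lemma:smooth_bound}, which appears earlier in the excerpt; the content is identical, so the proof strategy carries over verbatim. First I would invoke the descent-lemma characterization of smoothness (e.g. Lemma 3.4 of \citet{Bubeck2015}), which gives, for any fixed $\w$ and any $\bm{u} \in \mathbb{R}^d$,
\begin{equation*}
\ell_z(\bm{u}) \leq \ell_z(\w) + \nabla \ell_z(\w)^\top(\bm{u} - \w) + \frac{\beta}{2}\|\bm{u} - \w\|^2.
\end{equation*}

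Next I would use non-negativity, $\ell_z(\bm{u}) \geq 0$ for all $\bm{u}$, to turn this into the inequality
\begin{equation*}
0 \leq \ell_z(\w) + \nabla \ell_z(\w)^\top(\bm{u} - \w) + \frac{\beta}{2}\|\bm{u} - \w\|^2,
\end{equation*}
valid for every $\bm{u}$. The key step is then to minimize the right-hand side over $\bm{u}$; the quadratic in $\bm{u}$ is minimized at $\bm{u} = \w - \frac{1}{\beta}\nabla \ell_z(\w)$, which upon substitution yields
\begin{equation*}
0 \leq \ell_z(\w) - \frac{1}{\beta}\|\nabla \ell_z(\w)\|^2 + \frac{\beta}{2}\cdot\frac{1}{\beta^2}\|\nabla \ell_z(\w)\|^2 = \ell_z(\w) - \frac{1}{2\beta}\|\nabla \ell_z(\w)\|^2.
\end{equation*}
Rearranging gives exactly $\ell_z(\w) \geq \frac{1}{2\beta}\|\nabla \ell_z(\w)\|^2$, as required. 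Note that convexity of $\ell_z$ is never used: only the descent lemma (which holds for any $\beta$-smooth function, convex or not) and the global lower bound of $0$.

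The main obstacle here is essentially nonexistent — this is a routine and well-known calculation — so the only thing to be careful about is making sure the appeal to the descent lemma is to a version that does not presuppose convexity (the two-sided bound $|\ell_z(\bm{u}) - \ell_z(\w) - \nabla\ell_z(\w)^\top(\bm{u}-\w)| \leq \frac{\beta}{2}\|\bm{u}-\w\|^2$ is such a version), and that the choice of probe point $\bm{u}$ is exactly the minimizer of the quadratic majorant. Since this lemma is literally identical to Lemma \ref{lemma:smooth_bound} proved earlier in the appendix, one could alternatively just cite that result; I would present the short self-contained argument above for completeness.
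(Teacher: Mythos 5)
Your proposal is correct and follows essentially the same route as the paper's proof of Lemma \ref{lemma:smooth_bound} (which is indeed word-for-word the same statement): invoke the two-sided descent lemma, use non-negativity of $\ell_z$, and minimize the quadratic majorant over $\bm{u}$ at $\bm{u} = \w - \tfrac{1}{\beta}\nabla\ell_z(\w)$. If anything, your write-up is a bit cleaner — the paper's version has a small typo writing the probe point as $\bm{u} = -\tfrac{1}{\beta}\nabla\ell_z(\w)$ rather than $\bm{u} = \w - \tfrac{1}{\beta}\nabla\ell_z(\w)$ and leaves the $\tfrac{\beta}{2}\cdot\tfrac{1}{\beta^2}$ factor unsimplified, both of which you handle correctly.
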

\begin{proof}
Let $\w\in\mathbb{R}^d$. By Lemma 3.4 of \citet{Bubeck2015}, we have:
\begin{align*}
    \forall \hspace{0.1cm} \bm{u} \in\mathbb{R}^d,\hspace{0.2cm} |\ell_{z}(\bm{u}) - \ell_{z}(\w) - \nabla \ell_{z}(\w)^\intercal(\bm{u}-\w)| \leq \frac{\beta}{2}||\bm{u}-\w||^2.
\end{align*}
Therefore we can write: 
\begin{align*}
    \forall \hspace{0.1cm} \bm{u} \in\mathbb{R}^d,\hspace{0.2cm}  \ell_{z}(\bm{u}) \leq \ell_{z}(\w) + \nabla \ell_{z}(\w)^\intercal(\bm{u}-\w)|  +\frac{\beta}{2}||\bm{u}-\w||^2.
\end{align*}
And since $\forall \hspace{0.1cm} \bm{u}, l_z(\bm{u})\geq 0$, we have:
\begin{align*}
    \forall \hspace{0.1cm} \bm{u} \in\mathbb{R}^d,\hspace{0.2cm} 0 \leq \ell_{z}(\w) + \nabla \ell_{z}(\w)^\intercal(\bm{u}-\w)|  +\frac{\beta}{2}||\bm{u}-\w||^2.
\end{align*}
We now choose $\bm{u} = -\frac{1}{\beta}\nabla l_z(\w)$, which yeilds:
\begin{align*}
    \forall \hspace{0.1cm} \bm{u} \in\mathbb{R}^d,\hspace{0.2cm} 0 \leq \ell_{z}(\w) - \frac{1}{\beta}||\nabla \ell_{z}(\w)||^2  +\frac{\beta}{2}||\nabla \ell_{z}(\w)||^2,
\end{align*}
which gives the desired result.
\end{proof}



In this section we derive the rate for each of the remaining feasible steps, that is, $SGD$, $ESGD$ and $MAX2$.
\subsection{SGD Subproblem}
\begin{lemma}
We assume that $\Omega=\mathbb{R}^d$, for every $z \in \mathcal{Z}$, $\ell_{z}(w)$ is $\beta$ and satisfies the RSI condition with constant $\mu$. Let $\w^*$ be a solution of $f(\w)$. We assume $\forall z \in \mathcal{Z}, \ell_{z_t^n}(\w^*) = 0$. Then, if we apply BORAT with $\eta \leq \hat{\eta} = \min\{\frac{1}{4\beta},\frac{1}{4\mu}, \frac{\mu}{\beta^2}\}$ and we take the step resulting from the SGD subproblem for all $t$ we have:
\begin{align*}
\mathbb{E}[||\w_{t+1} - \w^*||^2]&\leq (1 -\hat{\eta}\mu)||\w_{t} - \w^*||^2.
\end{align*}
\end{lemma}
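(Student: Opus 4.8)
The plan is to analyze the SGD subproblem step in isolation, using the fact that when this subproblem is selected we simply perform $\w_{t+1} = \w_t - \eta \bm{g}_{z_t}$ where $\bm{g}_{z_t} = \nabla \ell_{z_t}(\w_t)$. First I would expand the squared distance to $\w^*$:
\begin{align*}
\|\w_{t+1} - \w^*\|^2 = \|\w_t - \w^*\|^2 - 2\eta \langle \bm{g}_{z_t}, \w_t - \w^*\rangle + \eta^2 \|\bm{g}_{z_t}\|^2.
\end{align*}
The RSI condition directly controls the inner product term: $\langle \nabla \ell_{z_t}(\w_t), \w_t - \w^*\rangle \le \mu \|\w_t - \w^*\|$ — wait, here I must be careful, since the RSI as stated in the paper uses $\mu\|\w - \w^*\|$ (first power), which looks dimensionally odd; I would interpret it (as is standard, e.g. in Vaswani et al.) as $\langle \nabla \ell_z(\w), \w - \w^*\rangle \ge \mu \|\w - \w^*\|^2$ and use that lower bound. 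So $-2\eta\langle \bm{g}_{z_t}, \w_t-\w^*\rangle \le -2\eta\mu\|\w_t-\w^*\|^2$.

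Next I would bound the $\eta^2\|\bm{g}_{z_t}\|^2$ term. Using Lemma~\ref{lemma:smooth_bound} (equivalently Lemma~\ref{lemma:smoothness}), which gives $\ell_{z_t}(\w_t) \ge \frac{1}{2\beta}\|\bm{g}_{z_t}\|^2$, i.e. $\|\bm{g}_{z_t}\|^2 \le 2\beta \ell_{z_t}(\w_t)$. Then I need to relate $\ell_{z_t}(\w_t)$ back to $\|\w_t - \w^*\|^2$; under $\beta$-smoothness and perfect interpolation $\ell_{z_t}(\w^*) = 0$ with $\nabla \ell_{z_t}(\w^*) = 0$ (a consequence of interpolation for each sample), smoothness gives $\ell_{z_t}(\w_t) \le \frac{\beta}{2}\|\w_t - \w^*\|^2$. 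Combining, $\eta^2\|\bm{g}_{z_t}\|^2 \le \eta^2 \cdot 2\beta \cdot \frac{\beta}{2}\|\w_t-\w^*\|^2 = \eta^2\beta^2\|\w_t-\w^*\|^2$. Plugging everything in:
\begin{align*}
\|\w_{t+1}-\w^*\|^2 \le (1 - 2\eta\mu + \eta^2\beta^2)\|\w_t-\w^*\|^2.
\end{align*}
Now I would use $\eta \le \hat\eta \le \mu/\beta^2$, which gives $\eta^2\beta^2 \le \eta\mu$, so $1 - 2\eta\mu + \eta^2\beta^2 \le 1 - \eta\mu \le 1 - \hat\eta\mu$ (using $\eta \le \hat\eta$ and monotonicity, noting $1-\eta\mu$ is decreasing in $\eta$ so actually I want $1-\eta\mu \le 1 - \hat\eta\mu$ fails unless $\eta \ge \hat\eta$; more carefully I should write $1-2\eta\mu+\eta^2\beta^2 \le 1-\eta\mu$ and then separately argue the statement holds with $\eta$ in place of $\hat\eta$, or observe that the lemma is stated for the specific choice $\eta=\hat\eta$ — in the worst case we take $\eta = \hat\eta$). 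Finally, taking expectations over $z_t$ conditionally gives $\mathbb{E}[\|\w_{t+1}-\w^*\|^2] \le (1-\hat\eta\mu)\|\w_t-\w^*\|^2$.

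The main obstacle I anticipate is the bookkeeping around the RSI constant: the paper's stated RSI inequality has a mismatched power ($\mu\|\w-\w^*\|$ rather than $\mu\|\w-\w^*\|^2$), so I would need to either silently adopt the standard quadratic form or carefully track units — this is the step where an error would most likely creep in. A secondary subtlety is justifying $\nabla \ell_{z_t}(\w^*) = 0$ from perfect interpolation (since $\w^*$ minimizes each non-negative $\ell_{z_t}$, and these are smooth, the gradient vanishes there); this is needed to get the clean smoothness bound $\ell_{z_t}(\w_t) \le \frac{\beta}{2}\|\w_t-\w^*\|^2$. Everything else is routine algebra with the constant bound $\eta \le \mu/\beta^2$.
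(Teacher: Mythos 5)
Your proposal is correct and follows the paper's proof essentially line for line: expand the squared distance, bound the cross term with RSI, bound $\eta^2\|\bm{g}_{z_t}\|^2$ via Lemma~\ref{lemma:smooth_bound} together with $\ell_{z_t}(\w_t)\le\frac{\beta}{2}\|\w_t-\w^*\|^2$, combine with $\eta\le\mu/\beta^2$, and take the conditional expectation. You are actually more careful than the paper on two points it glosses over: the RSI inequality as printed has both the wrong direction and a missing square (the paper silently applies the standard form $\langle\nabla\ell_z(\w),\w-\w^*\rangle\ge\mu\|\w-\w^*\|^2$ that you adopt), and the final replacement of $(1-\eta\mu)$ by $(1-\hat\eta\mu)$ only holds with equality at $\eta=\hat\eta$, which is exactly the "worst case" reading you identify.
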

\begin{proof}
\begin{align}
||\w_{t+1} - \w^*||^2&\leq ||\Pi_\Omega(\w_{t} - \eta \bm{g}_{z_t}') - \w^*||^2,\\
&\leq ||\w_{t} - \eta \bm{g}_{z_t}- \w^*||^2,\\
&= ||\w_{t} - \w^*||^2 + \eta^2|| \bm{g}_{z_t}||^2 -2\eta \langle  \bm{g}_{z_t}, \w_t - \w^* \rangle, \\
&\leq ||\w_{t} - \w^*||^2 + \eta^2|| \bm{g}_{z_t}||^2 -2\eta\mu||\w_{t} - \w^*||^2.  
\end{align}
We have $||\bm{g}_{z_t}||^2 \leq 2\beta \ell_{z_t^n}(\w_t)$ from Lemma \ref{lemma:smooth_bound} and $ \ell_{z_t^n}(\w_t)\leq \frac{\beta}{2} ||\w_{t} - \w^*||^2$ from smoothness giving $||\bm{g}_{z_t}||^2 \leq \beta^2 ||\w_{t} - \w^*||^2$. We can now upper bound the r.h.s producing:
\begin{align}
||\w_{t+1} - \w^*||^2&\leq ||\w_{t} - \w^*||^2 + \eta^2\beta^2 ||\w_{t} - \w^*||^2 -2\eta\mu||\w_{t} - \w^*||^2,\\
||\w_{t+1} - \w^*||^2&\leq (1 -2\eta\mu+ \eta^2\beta^2)||\w_{t} - \w^*||^2,\\
||\w_{t+1} - \w^*||^2&\leq \left(1 -\eta(2\mu- \eta\beta^2)\right)||\w_{t} - \w^*||^2.
\end{align}
Now, if we select $\eta \leq \hat{\eta} = \min\{\frac{1}{2\beta},\frac{1}{4\mu}, \frac{\mu}{\beta^2}\}$ in the worst case we get:
\begin{align}
||\w_{t+1} - \w^*||^2&\leq (1 -\hat{\eta}\mu)||\w_{t} - \w^*||^2.
\end{align}
Taking expectations with respect to $z_t$:
\begin{align}
\mathbb{E}[||\w_{t+1} - \w^*||^2]&\leq \mathbb{E}[(1 -\hat{\eta}\mu)||\w_{t} - \w^*||^2].
\end{align}
Noting that $\w_t$ does not depend on $z_t$, and neither does $\w^*$ due to the interpolation property:
\begin{align}
\mathbb{E}[||\w_{t+1} - \w^*||^2]&\leq (1 -\hat{\eta}\mu)||\w_{t} - \w^*||^2.
\end{align}
\end{proof}

\subsection{ESGD Subproblem}

\begin{lemma}\label{lemma:gamma_t}
Let $z \in \mathcal{Z}$. We assume that $\ell_{z}$ is $\beta$-smooth and non-negative on $\mathbb{R}^d$. Additionally we assume that $\eta\leq\frac{1}{2\beta}$. If we define $\gamma_t \doteq \min\{1,\frac{\ell_{z_t^n}(\w_t)}{\eta||\bm{g}_{zt}||^2}\}$, then we have:
\begin{align*}
    \gamma_t = 1, \hspace{0.1cm} \forall t
\end{align*}
\end{lemma}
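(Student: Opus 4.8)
The plan is to show that under $\beta$-smoothness, non-negativity of $\ell_z$, and $\eta \le \frac{1}{2\beta}$, the quantity $\frac{\ell_{z_t}(\w_t)}{\eta\|\bm{g}_{z_t}\|^2}$ is always at least $1$, so that the $\min$ in the definition of $\gamma_t$ is attained by the value $1$. The entire argument reduces to combining two facts that are already established in the excerpt (appearing both as Lemma~\ref{lemma:smooth_bound} and, identically, as Lemma~\ref{lemma:smoothness}).

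First I would invoke Lemma~\ref{lemma:smooth_bound}: since each $\ell_{z_t}$ is $\beta$-smooth and non-negative on $\mathbb{R}^d$, we have the bound
\begin{equation*}
\ell_{z_t}(\w_t) \ge \frac{1}{2\beta}\|\nabla \ell_{z_t}(\w_t)\|^2 = \frac{1}{2\beta}\|\bm{g}_{z_t}\|^2.
\end{equation*}
Next I would handle the degenerate case $\|\bm{g}_{z_t}\|^2 = 0$ separately: in that situation the definition of $\gamma_t$ is understood to give $\gamma_t = 1$ by convention (or one notes the step is trivial), so we may assume $\|\bm{g}_{z_t}\|^2 > 0$ and divide. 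Dividing the displayed inequality by $\eta\|\bm{g}_{z_t}\|^2 > 0$ yields
\begin{equation*}
\frac{\ell_{z_t}(\w_t)}{\eta\|\bm{g}_{z_t}\|^2} \ge \frac{1}{2\beta\eta}.
\end{equation*}
Now I would use the hypothesis $\eta \le \frac{1}{2\beta}$, which rearranges to $\frac{1}{2\beta\eta} \ge 1$, hence $\frac{\ell_{z_t}(\w_t)}{\eta\|\bm{g}_{z_t}\|^2} \ge 1$. Therefore $\min\{1, \frac{\ell_{z_t}(\w_t)}{\eta\|\bm{g}_{z_t}\|^2}\} = 1$, i.e.\ $\gamma_t = 1$, and since $t$ was arbitrary this holds for all $t$.

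There is no real obstacle here — the lemma is essentially a one-line corollary of the smoothness bound together with the step-size restriction. The only point requiring a modicum of care is the $\|\bm{g}_{z_t}\|^2 = 0$ edge case (where one cannot divide), which I would dispatch with a sentence noting it is covered by convention or is vacuous. I would keep the write-up to three or four lines.
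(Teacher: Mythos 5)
Your proposal is correct and follows essentially the same route as the paper's proof: apply Lemma~\ref{lemma:smooth_bound} to get $\ell_{z_t}(\w_t)/\|\bm{g}_{z_t}\|^2 \ge 1/(2\beta)$, combine with $\eta \le 1/(2\beta)$ to conclude the ratio in the $\min$ is at least $1$. Your explicit handling of the $\|\bm{g}_{z_t}\|^2 = 0$ edge case is a small added care that the paper leaves implicit.
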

\begin{proof}
From our assumption on $\eta$ and Lemma \ref{lemma:smooth_bound} we have:
\begin{align*}
    \eta \leq\frac{1}{2\beta}\leq \frac{\ell_{z_t^n}(\w_t)}{||\bm{g}_{zt}||^2}.
\end{align*}
Rearranging gives:
\begin{align*}
    1 \leq \frac{\ell_{z_t^n}(\w_t)}{\eta||\bm{g}_{zt}||^2}.
\end{align*}
Plugging in to the the definition of $\gamma_t$, gives the desired result.
\end{proof}

\begin{lemma}
We assume that $\Omega=\mathbb{R}^d$, for every $z \in \mathcal{Z}$, $ \ell_{z}(w)$ is $\beta$ and satisfies the RSI condition with constant $\mu$. Let $\w^*$ be a solution of $f(\w)$. We assume $\forall z \in \mathcal{Z},  \ell_{z_t^n}(\w^*) = 0$. Then, if we apply BORAT with $\eta \leq \hat{\eta} = \min\{\frac{1}{4\beta},\frac{1}{4\mu}, \frac{\mu}{\beta^2}\}$ and we take the step resulting from the ESGD subproblem for all $t$ we have:
\begin{align*}
   \mathbb{E}[||\w_{t+1} - \w^*||^2] \leq (1-\hat{\eta} \mu) ||\w_{t} - \w^*||^2.
\end{align*}
\end{lemma}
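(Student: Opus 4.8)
The plan is to mimic the proof structure already used for the SGD subproblem, since the ESGD subproblem produces an update of the form $\w_{t+1} = \w_t - \eta \bm{g}_{z_t}'$ where $\bm{g}_{z_t}' = \nabla \ell_{z_t}(\w_t')$ and $\w_t' = \w_t - \eta\gamma_t \bm{g}_{z_t}$. First I would invoke Lemma \ref{lemma:gamma_t} (which applies since $\eta \le \hat\eta \le \tfrac{1}{2\beta}$) to conclude $\gamma_t = 1$, so that $\w_t' = \w_t - \eta \bm{g}_{z_t}$ is exactly an SGD iterate. Then I would expand the distance recursion:
\begin{align*}
\|\w_{t+1} - \w^*\|^2 &= \|\Pi_\Omega(\w_t - \eta\bm{g}_{z_t}') - \w^*\|^2 \le \|\w_t - \eta\bm{g}_{z_t}' - \w^*\|^2 \\
&= \|\w_t - \w^*\|^2 + \eta^2\|\bm{g}_{z_t}'\|^2 - 2\eta\langle \bm{g}_{z_t}', \w_t - \w^*\rangle.
\end{align*}

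The key difference from the SGD case is that the gradient $\bm{g}_{z_t}'$ is evaluated at $\w_t'$, not at $\w_t$, so the RSI inner-product bound does not directly apply to $\langle \bm{g}_{z_t}', \w_t - \w^*\rangle$. The plan is to split $\langle \bm{g}_{z_t}', \w_t - \w^*\rangle = \langle \bm{g}_{z_t}', \w_t' - \w^*\rangle + \langle \bm{g}_{z_t}', \w_t - \w_t'\rangle$. The first term is bounded below by $\mu\|\w_t' - \w^*\|$ via RSI applied at $\w_t'$; the second term equals $\eta\langle \bm{g}_{z_t}', \bm{g}_{z_t}\rangle \ge 0$ by Lemma \ref{lemma:ggprimegeq0} (valid since $\eta \le \tfrac{1}{2\beta} \le \tfrac{1}{\beta}$). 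For the magnitude terms I would use Lemma \ref{lemma:smooth_bound} to get $\|\bm{g}_{z_t}'\|^2 \le 2\beta\ell_{z_t}(\w_t')$ and $\beta$-smoothness plus $\ell_{z_t}(\w^*)=0$ to get $\ell_{z_t}(\w_t') \le \tfrac{\beta}{2}\|\w_t' - \w^*\|^2$, yielding $\|\bm{g}_{z_t}'\|^2 \le \beta^2\|\w_t' - \w^*\|^2$. Combining, and dropping the nonnegative $-2\eta^2\langle \bm{g}_{z_t}',\bm{g}_{z_t}\rangle$ term, gives $\|\w_{t+1} - \w^*\|^2 \le \|\w_t - \w^*\|^2 + (\eta^2\beta^2 - 2\eta\mu)\|\w_t' - \w^*\|^2$.

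The remaining obstacle is relating $\|\w_t' - \w^*\|$ back to $\|\w_t - \w^*\|$, because the coefficient $(\eta^2\beta^2 - 2\eta\mu)$ is negative (for $\eta$ small), so I need a \emph{lower} bound on $\|\w_t' - \w^*\|^2$ in terms of $\|\w_t - \w^*\|^2$, not an upper bound. I would use Lemma \ref{lemma_1} with $\bm{a} = \w_t$, $\bm{b} = \w_t'$, $\bm{c} = \w^*$, which gives $-2\|\w_t - \w_t'\|^2 \le -\|\w_t - \w^*\|^2 + 2\|\w_t' - \w^*\|^2$, i.e. $\|\w_t' - \w^*\|^2 \ge \tfrac12\|\w_t - \w^*\|^2 - \|\w_t - \w_t'\|^2 = \tfrac12\|\w_t - \w^*\|^2 - \eta^2\|\bm{g}_{z_t}\|^2$. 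Then $\eta^2\|\bm{g}_{z_t}\|^2 \le \eta^2\beta^2\|\w_t - \w^*\|^2$ by the same smoothness-plus-interpolation argument applied at $\w_t$, so $\|\w_t' - \w^*\|^2 \ge (\tfrac12 - \eta^2\beta^2)\|\w_t - \w^*\|^2$, which is at least, say, $\tfrac14\|\w_t - \w^*\|^2$ once $\eta \le \tfrac{1}{2\beta}$. Substituting back and plugging in $\eta \le \hat\eta = \min\{\tfrac{1}{4\beta}, \tfrac{1}{4\mu}, \tfrac{\mu}{\beta^2}\}$ (note the constraints ensure $2\mu - \eta\beta^2 \ge \mu$ and the negative coefficient is controlled) yields $\|\w_{t+1} - \w^*\|^2 \le (1 - \hat\eta\mu)\|\w_t - \w^*\|^2$ in the worst case; taking expectation over $z_t$ and using that $\w_t$ and $\w^*$ are independent of $z_t$ (the latter by interpolation) finishes the proof. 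The main delicacy will be bookkeeping the constants so that all the required inequalities ($\eta^2\beta^2 \le \tfrac14$, $2\mu - \eta\beta^2 \ge \mu$, etc.) hold simultaneously under the stated bound on $\eta$; these will likely require the slightly tighter $\tfrac{1}{4\beta}$ rather than $\tfrac{1}{2\beta}$, and possibly adjusting $\tfrac{\mu}{2\beta^2}$ as in the theorem statement.
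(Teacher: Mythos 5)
Your decomposition of the inner product, the application of RSI at $\w_t'$, and the use of Lemmas \ref{lemma:gamma_t} and \ref{lemma:ggprimegeq0} all match the paper. The divergence is at the step where you treat $\eta^2\|\bm{g}_{z_t}'\|^2$ and $-2\eta^2\langle\bm{g}_{z_t}',\bm{g}_{z_t}\rangle$ separately: you bound $\|\bm{g}_{z_t}'\|^2 \le \beta^2\|\w_t'-\w^*\|^2$ and then simply discard the cross term since it is nonpositive. The paper instead keeps the cross term and combines it with the gradient-norm via the parallelogram identity
\[
\eta^2\|\bm{g}_{z_t}'\|^2 - 2\eta^2\langle\bm{g}_{z_t}',\bm{g}_{z_t}\rangle
= \eta^2\|\bm{g}_{z_t}'-\bm{g}_{z_t}\|^2 - \eta^2\|\bm{g}_{z_t}\|^2,
\]
which produces the crucial favourable term $-\eta^2\|\bm{g}_{z_t}\|^2$. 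Discarding the cross term throws this away.

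The difference is not merely cosmetic. The paper's route bounds $\|\bm{g}_{z_t}'-\bm{g}_{z_t}\|^2 \le \beta^2\|\w_t'-\w_t\|^2 = \eta^2\beta^2\|\bm{g}_{z_t}\|^2$ and applies Lemma \ref{lemma_1} (with $\bm a=\w^*$, $\bm b = \w_t'$, $\bm c = \w_t$) to the \emph{negative} RSI term $-2\eta\mu\|\w_t'-\w^*\|^2$, converting it to $-\eta\mu\|\w_t-\w^*\|^2 + 2\eta^3\mu\|\bm{g}_{z_t}\|^2$. Everything besides the leading $(1-\eta\mu)\|\w_t-\w^*\|^2$ is then gathered as $\eta^2(\eta^2\beta^2 - 1 + 2\eta\mu)\|\bm{g}_{z_t}\|^2$, whose coefficient is $\le 0$ under the stated $\eta$-restriction and can be dropped outright, yielding exactly $(1-\eta\mu)$. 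Your route instead passes through a \emph{lower} bound $\|\w_t'-\w^*\|^2 \ge (\tfrac12-\eta^2\beta^2)\|\w_t-\w^*\|^2$ and the intermediate inequality $\|\w_{t+1}-\w^*\|^2 \le \|\w_t-\w^*\|^2 + (\eta^2\beta^2 - 2\eta\mu)\|\w_t'-\w^*\|^2$. Under $\eta\le\hat\eta$ one can check that $(\eta^2\beta^2 - 2\eta\mu)(\tfrac12 - \eta^2\beta^2) \le -\tfrac14\eta\mu$, so this gives a valid exponential contraction but with constant $(1-\tfrac14\hat\eta\mu)$, not the $(1-\hat\eta\mu)$ the lemma asserts. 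In particular the coefficient accounting you defer to at the end cannot be made to close the factor of four; the stronger constant only emerges if you keep the cross term and work in terms of $\|\bm{g}_{z_t}\|^2$ as the paper does.
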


\begin{proof} 
\textit{This proof loosely follows work by \citet{vaswani2019}}.
\begin{align}
||\w_{t+1} - \w^*||^2&\leq ||\Pi_\Omega(\w_{t} -  \eta \bm{g}_{z_t}') - \w^*||^2,\\
&\leq ||\w_{t} -  \eta \bm{g}_{z_t}'- \w^*||^2,\\
&= ||\w_{t} - \w^*||^2 + \eta^2|| \bm{g}_{z_t}'||^2 -2\eta \langle  \bm{g}_{z_t}', \w_t - \w^* \rangle ,\\
&= ||\w_{t} - \w^*||^2 + \eta^2|| \bm{g}_{z_t}'||^2 -2\eta \langle  \bm{g}_{z_t}', \w_t' + \eta\gamma_t \bm{g}_{z_t} - \w^* \rangle,\\
&= ||\w_{t} - \w^*||^2 + \eta^2|| \bm{g}_{z_t}'||^2 -2\eta \langle  \bm{g}_{z_t}', \w_t' + \eta \bm{g}_{z_t} - \w^* \rangle, \hspace{0.1cm}(\text{lemma } \ref{lemma:gamma_t}) \\
&= ||\w_{t} - \w^*||^2 + \eta^2|| \bm{g}_{z_t}'||^2 -2\eta \langle  \bm{g}_{z_t}', \w_t' - \w^* \rangle -2 \eta^2\langle  \bm{g}_{z_t}',\bm{g}_{z_t}  \rangle.
\end{align}
Using the RSI condition:
\begin{align}
||\w_{t+1} - \w^*||^2& \leq ||\w_{t} - \w^*||^2 + \eta^2|| \bm{g}_{z_t}'||^2 -2\eta \mu||\w_t' - \w^*||^2 -2 \eta^2\langle  \bm{g}_{z_t}',\bm{g}_{z_t}  \rangle.
\end{align}
Using lemma (\ref{lemma_1}) to upper bound $-||\w_t' - \w^*||^2$,
\begin{align}
&= ||\w_{t} - \w^*||^2 + \eta^2|| \bm{g}_{z_t}'||^2 -\eta \mu||\w^* - \w_t||^2 + 2\eta \mu||\w_t - \w_t'||^2 -2
\eta^2\langle  \bm{g}_{z_t}',\bm{g}_{z_t}  \rangle,\\
&= (1-\eta \mu) ||\w_{t} - \w^*||^2 + \eta^2|| \bm{g}_{z_t}'||^2 + 2\eta \mu||\w_t - \w_t'||^2 -2
\eta^2\langle  \bm{g}_{z_t}',\bm{g}_{z_t}  \rangle,\\
&= (1-\eta \mu) ||\w_{t} - \w^*||^2 + \eta^2||  \bm{g}_{z_t}' - 
\bm{g}_{z_t}||^2 - \eta^2
||\bm{g}_{z_t}||^2 + 2\eta \mu||\w_t - \w_t'||^2,\\
&= (1-\eta \mu) ||\w_{t} - \w^*||^2 + \eta^2||  \bm{g}_{z_t}' - \bm{g}_{z_t}||^2 - \eta^2||\bm{g}_{z_t}||^2 + 2\eta^3 \mu||\bm{g}_{z_t}||^2,\\
&\leq (1-\eta \mu) ||\w_{t} - \w^*||^2 + \eta^2\beta^2||  \w_{t}' - \w_{t}||^2 - \eta^2||\bm{g}_{z_t}||^2 + 2\eta^3 \mu||\bm{g}_{z_t}||^2, (\text{smoothness})\\
&= (1-\eta \mu) ||\w_{t} - \w^*||^2 + \eta^4\beta^2||\bm{g}_{z_t}||^2 - \eta^2||\bm{g}_{z_t}||^2 + 2\eta^3 \mu||\bm{g}_{z_t}||^2,\\
&= (1-\eta \mu) ||\w_{t} - \w^*||^2 + \eta^2\left( \eta^2\beta^2- 1 + 2\eta \mu\right)||\bm{g}_{z_t}||^2,
\end{align}
Taking expectations with respect to $z_t$:
\begin{align}
   \mathbb{E}[||\w_{t+1} - \w^*||^2] &= \mathbb{E}[(1-\eta \mu) ||\w_{t} - \w^*||^2 + \eta^2\left( \eta^2\beta^2- 1 + 2\eta \mu\right)||\bm{g}_{z_t}||^2,
\end{align}
Noting that $\w_t$ does not depend on $z_t$, and neither does $\w^*$ due to the interpolation property: 
\begin{align}
   \mathbb{E}[||\w_{t+1} - \w^*||^2] &= (1-\eta \mu) ||\w_{t} - \w^*||^2 + \eta^2\left( \eta^2\beta^2- 1 + 2\eta \mu\right)\mathbb{E}[||\bm{g}_{z_t}||^2],
\end{align}
If we set $\eta \leq \hat{\eta} = \min\{\frac{1}{2\beta},\frac{1}{4\mu}, \frac{\mu}{\beta^2}\}$ then we have  $\eta^2\beta^2 \leq \frac{1}{4}$,  $2\eta\mu \leq \frac{1}{2}$,hence $\left( \eta^2\beta^2- 1 + 2\eta \mu\right)\leq 0$.
\begin{align}
   \mathbb{E}[||\w_{t+1} - \w^*||^2] &\leq (1-\eta \mu) ||\w_{t} - \w^*||^2.
\end{align}
Hence if we insert the chosen value for $\eta$ then we have:
\begin{align}
   \mathbb{E}[||\w_{t+1} - \w^*||^2] &\leq (1-\hat{\eta} \mu) ||\w_{t} - \w^*||^2.
\end{align}

\end{proof}

\subsection{MAX2 Subproblem}
\begin{lemma}
We assume that $\Omega=\mathbb{R}^d$, for every $z \in \mathcal{Z}$, $l_{z}(w)$ is $\beta$ and satisfies the RSI condition with constant $\mu$. Let $\w^*$ be a solution of $f(\w)$. We assume $\forall z \in \mathcal{Z}, l_{z_t}(\w^*) = 0$. Then, if we apply BORAT with $\eta \leq \hat{\eta} = \min\{\frac{1}{4\beta},\frac{1}{4\mu}, \frac{\mu}{\beta^2}\}$ and we take the step resulting from the MAX2 subproblem for all $t$ we have:
\begin{align*}
\mathbb{E}[||\w_{t+1} - \w^*||^2]&\leq  \left(1-\frac{3}{8}\hat{\eta}\mu\right)^t||\w_0 - \w^*||^2.   
\end{align*}
\end{lemma}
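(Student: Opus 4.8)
The plan is to unroll one step of the MAX2 update, show it is a strict geometric contraction towards $\wstar$, and iterate. Since $\hat{\eta}\le\frac1{4\beta}\le\frac1{2\beta}$, Lemma~\ref{lemma:gamma_t} gives $\gamma_t=1$, so the auxiliary point of the bundle is exactly the SGD-step point $\w_t'=\w_t-\eta\bm{g}_{z_t}$, and the MAX2 update (\ref{n=3update}) with $\alpha^3=0$ reads $\w_{t+1}=\Pi_\Omega\big(\w_t-\eta(\alpha^1\bm{g}_{z_t}+\alpha^2\bm{g}_{z_t}')\big)$, where by the closed forms in Table~\ref{steptypes3} one has $\alpha^1+\alpha^2=1$ and, because the MAX2 subproblem is assumed feasible, $\alpha^1,\alpha^2\ge0$. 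Writing $\bm{d}_t=\alpha^1\bm{g}_{z_t}+\alpha^2\bm{g}_{z_t}'$ and using that $\Pi_\Omega$ is non-expansive with $\wstar\in\Omega$, I would start from $\|\w_{t+1}-\wstar\|^2\le\|\w_t-\wstar\|^2+\eta^2\|\bm{d}_t\|^2-2\eta\langle\bm{d}_t,\w_t-\wstar\rangle$ and bound the two terms separately.

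For the linear term I would treat the two gradients differently so that the RSI gains from $\w_t$ and from $\w_t'$ both contribute: the first piece is handled by RSI at $\w_t$, $\langle\bm{g}_{z_t},\w_t-\wstar\rangle\ge\mu\|\w_t-\wstar\|^2$; for the second, since $\w_t-\w_t'=\eta\bm{g}_{z_t}$, I split $\langle\bm{g}_{z_t}',\w_t-\wstar\rangle=\langle\bm{g}_{z_t}',\w_t'-\wstar\rangle+\eta\langle\bm{g}_{z_t}',\bm{g}_{z_t}\rangle$, apply RSI at $\w_t'$, and then lower-bound $\|\w_t'-\wstar\|^2$ via Lemma~\ref{lemma_1} with $(\bm a,\bm b,\bm c)=(\w_t,\w_t',\wstar)$, obtaining $\|\w_t'-\wstar\|^2\ge\tfrac12\|\w_t-\wstar\|^2-\eta^2\|\bm{g}_{z_t}\|^2$. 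The leftover second-order terms, namely $\eta^2\|\bm{d}_t\|^2$ together with the $-2\eta^2\alpha^2\langle\bm{g}_{z_t}',\bm{g}_{z_t}\rangle$ produced above, I would collapse using $\alpha^1+\alpha^2=1$ and the identity $\|\bm{g}_{z_t}'\|^2-2\langle\bm{g}_{z_t},\bm{g}_{z_t}'\rangle=\|\bm{g}_{z_t}'-\bm{g}_{z_t}\|^2-\|\bm{g}_{z_t}\|^2$, followed by smoothness $\|\bm{g}_{z_t}'-\bm{g}_{z_t}\|\le\beta\eta\|\bm{g}_{z_t}\|$ (again using $\gamma_t=1$). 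This reduces everything to multiples of $\|\bm{g}_{z_t}\|^2$, which I then dominate by $\|\bm{g}_{z_t}\|^2\le2\beta\ell_{z_t}(\w_t)\le\beta^2\|\w_t-\wstar\|^2$ through Lemma~\ref{lemma:smooth_bound} and $\beta$-smoothness at $\wstar$ (using $\nabla\ell_{z_t}(\wstar)=0$, which holds by perfect interpolation).

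Collecting terms I expect a one-step estimate of the shape $\|\w_{t+1}-\wstar\|^2\le\big(1-\eta\mu(1+\alpha^1)+\eta^2\beta^2\,p(\alpha^1,\eta\mu)\big)\|\w_t-\wstar\|^2$ for an explicit low-degree polynomial $p$; then $\eta\le\frac{\mu}{\beta^2}$ and $\eta\le\frac1{4\mu}$ let me absorb the $\eta^2\beta^2$ terms into a fraction of $\eta\mu$, and minimizing the resulting coefficient over $\alpha^1\in[0,1]$ (the minimum is at an endpoint) leaves a contraction factor $\le1-\tfrac38\hat\eta\mu$. A trivial induction on $t$ and then taking expectations — harmless, since the per-step factor is a deterministic bound, uniform in $z_t$ and in the data-dependent weights $\alpha^1,\alpha^2$ — yields $\mathbb{E}[\|\w_{t+1}-\wstar\|^2]\le(1-\tfrac38\hat\eta\mu)^t\|\w_0-\wstar\|^2$. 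As a sanity check one may alternatively note that $\w_t-\eta\bm{d}_t=\alpha^1\w_t'+\alpha^2(\w_t-\eta\bm{g}_{z_t}')$ is a convex combination of the SGD-step and ESGD-step points, so convexity of $\|\cdot\|^2$ and non-expansiveness of $\Pi_\Omega$, combined with the one-step estimates already established for the SGD and ESGD subproblems, directly give a $(1-\hat\eta\mu)$ contraction, which is even stronger than needed; the direct expansion above is the route that makes the explicit constant $\tfrac38$ transparent.

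\textbf{Main obstacle.} The delicate part is the constant-chasing: carrying the dependence on $\alpha^1,\alpha^2$ exactly through the cross terms so that the RSI contributions at $\w_t$ and $\w_t'$ combine additively, and then verifying that the stepsize budget $\hat\eta=\min\{\tfrac1{4\beta},\tfrac1{4\mu},\tfrac\mu{\beta^2}\}$ is tight enough to dominate the second-order error uniformly over $\alpha^1\in[0,1]$; the remaining manipulations are routine applications of smoothness, RSI, and Lemma~\ref{lemma_1}.
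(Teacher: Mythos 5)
Your sketch is correct and follows the paper's skeleton — non-expansiveness of the projection, RSI applied at both $\w_t$ and the auxiliary point $\w_t'$, Lemma~\ref{lemma_1} to trade $\|\w_t'-\wstar\|^2$ for $\|\w_t-\wstar\|^2$ and $\|\w_t-\w_t'\|^2$, and finally domination of the residual by $\eta^2\|\bm{g}_{z_t}\|^2\leq\eta^2\beta^2\|\w_t-\wstar\|^2$ — but the way you dispatch the gradient cross-term is genuinely different and cleaner. The paper substitutes the closed-form $\alpha^1,\alpha^2$ from Table~\ref{steptypes3} into $\eta^2\|\alpha^1\bm{g}_{z_t}+\alpha^2\bm{g}_{z_t}'\|^2-2\eta^2\alpha^2\langle\bm{g}_{z_t}',\bm{g}_{z_t}\rangle$, simplifies the resulting rational expression, and invokes the feasibility constraint $\alpha^2\geq0$ (equivalently $\eta\|\bm{g}_{z_t}\|^2\geq\ell_{z_t}(\w_t)-\ell_{z_t}(\w_t')$) to control $(\ell_{z_t}(\w_t')-\ell_{z_t}(\w_t))^2$ by $\eta^2\|\bm{g}_{z_t}\|^4$, landing on $\leq\eta^2\|\bm{g}_{z_t}\|^2$. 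You instead keep $\alpha^1,\alpha^2$ abstract: using $\alpha^1+\alpha^2=1$ the cross-term collapses to $\eta^2\bigl[(\alpha^1)^2\|\bm{g}_{z_t}\|^2+(\alpha^2)^2(\|\bm{g}_{z_t}'\|^2-2\langle\bm{g}_{z_t},\bm{g}_{z_t}'\rangle)\bigr]$, and the stated identity plus $\|\bm{g}_{z_t}'-\bm{g}_{z_t}\|\leq\beta\eta\|\bm{g}_{z_t}\|$ turn this into $\eta^2\bigl[(\alpha^1)^2-(\alpha^2)^2(1-\beta^2\eta^2)\bigr]\|\bm{g}_{z_t}\|^2\leq\eta^2\|\bm{g}_{z_t}\|^2$ once $\eta\leq1/\beta$, using only $\alpha^1,\alpha^2\in[0,1]$. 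This avoids the explicit $\alpha$ algebra entirely and is, in my view, the more transparent derivation. Your convexity ``sanity check'' is also worth keeping: since the MAX2 pre-projection point is the convex combination $\alpha^1\w_t'+\alpha^2(\w_t-\eta\bm{g}_{z_t}')$ of the SGD and ESGD pre-projection points, convexity of $\|\cdot-\wstar\|^2$ together with the deterministic (per-$z_t$) one-step contractions already proved for those two subproblems yields a $(1-\hat\eta\mu)$ contraction outright — a shortcut, and a sharper constant, that the paper does not observe.

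One caution on the stepsize budget: with the $\hat\eta=\min\{\tfrac1{4\beta},\tfrac1{4\mu},\tfrac{\mu}{\beta^2}\}$ quoted in the lemma, $\eta^2\beta^2\leq\eta\mu$ is not a \emph{strict} fraction of $\eta\mu$, and the final bookkeeping $2\eta^3\beta^2\mu+\eta^2\beta^2-\eta\mu\leq-\tfrac38\eta\mu$ does not close. The paper's proof, and Theorem~\ref{th:rsi} as stated in the main text, actually require $\eta\leq\tfrac{\mu}{2\beta^2}$ so that $\eta^2\beta^2\leq\tfrac12\eta\mu$; the $\tfrac{\mu}{\beta^2}$ in this lemma's statement is a typo that your sketch inherits, so you should carry the corrected third term when you flesh the arithmetic out.
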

\begin{proof}
Note we assume $\gamma_t = 1$ as proved in Lemma \ref{lemma:gamma_t}.
\begin{align}
||\w_{t+1} - \w^*||^2
 &\leq ||\Pi_\Omega(\w_{t} - \eta \alpha^{1t} \bm{g}_{z_t}- \eta \alpha^{2t} \bm{g}_{z_t}')- \w^*||^2,\\
 &\leq ||\w_{t} - \eta \alpha^{1t} \bm{g}_{z_t}- \eta \alpha^{2t} \bm{g}_{z_t}'- \w^*||^2,\\
 \begin{split}
 &= ||\w_{t} - \w^*||^2 + \eta^2|| \alpha^{1t} \bm{g}_{z_t} + \alpha^{2t} \bm{g}_{z_t}'||^2 \\&-2\eta \langle  \alpha^{1t} \bm{g}_{z_t} + \alpha^{2t} \bm{g}_{z_t}', \w_t - \w^* \rangle, 
 \end{split}\\
 \begin{split}
 &= ||\w_{t} - \w^*||^2 + \eta^2|| \alpha^{1t} \bm{g}_{z_t} + \alpha^{2t} \bm{g}_{z_t}'||^2\\& -2\eta \alpha^{1t} \langle   \bm{g}_{z_t} , \w_t - \w^* \rangle  -2\eta \alpha^{2t} \langle \bm{g}_{z_t}', \w_t - \w^* \rangle,
   \end{split}\\
 \begin{split}
 &= ||\w_{t} - \w^*||^2 + \eta^2|| \alpha^{1t} \bm{g}_{z_t} + \alpha^{2t} \bm{g}_{z_t}'||^2 -2\eta \alpha^{1t} \langle   \bm{g}_{z_t} , \w_t - \w^* \rangle\\& -2\eta \alpha^{2t} \langle  \bm{g}_{z_t}', \w_t' +  \eta \bm{g}_{z_t} - \w^* \rangle, 
  \end{split}\\
 \begin{split}
 &= ||\w_{t} - \w^*||^2 + \eta^2|| \alpha^{1t} \bm{g}_{z_t} + \alpha^{2t} \bm{g}_{z_t}'||^2 -2\eta\alpha^{1t} \langle   \bm{g}_{z_t} , \w_t - \w^* \rangle\\& -2\eta \alpha^{2t} \langle \bm{g}_{z_t}', \w_t' - \w^* \rangle -2\eta^2  \alpha^{2t}\langle  \bm{g}_{z_t}',  \bm{g}_{z_t} \rangle,
  \end{split}\\
 \begin{split}
&= ||\w_{t} - \w^*||^2 + \eta^2|| \alpha^{1t} \bm{g}_{z_t} + \alpha^{2t} \bm{g}_{z_t}'||^2 -2\eta^2  \alpha^{2t}\langle  \bm{g}_{z_t}',  \bm{g}_{z_t} \rangle\\& -2\eta\alpha^{1t} \langle   \bm{g}_{z_t} , \w_t - \w^* \rangle -2\eta\alpha^{2t}  \langle \bm{g}_{z_t}', \w_t' - \w^* \rangle.
\end{split}
\end{align}
We now make use of $-\langle \bm{g}_{z_t}, \w_t-\w^* \rangle \leq -\mu ||\w^* - \w_t||^2$ (RSI condition),
\begin{align}
 \begin{split}
||\w_{t+1} - \w^*||^2&\leq ||\w_{t} - \w^*||^2 + \eta^2|| \alpha^{1t} \bm{g}_{z_t} + \alpha^{2t} \bm{g}_{z_t}'||^2 \\&-2\eta^2  \alpha^{2t}\langle  \bm{g}_{z_t}',  \bm{g}_{z_t} \rangle -2\eta \alpha^{1t} \mu||\w_t - \w^*||^2 -2\eta \langle \alpha^{2t} \bm{g}_{z_t}', \w_t' - \w^* \rangle,   \end{split}\\
 \begin{split}
 ||\w_{t+1} - \w^*||^2&\leq (1-2\eta \alpha^{1t} \mu)||\w_{t} - \w^*||^2 + \eta^2|| \alpha^{1t} \bm{g}_{z_t} + \alpha^{2t} \bm{g}_{z_t}'||^2 \\&-2\eta^2  \alpha^{2t}\langle  \bm{g}_{z_t}',  \bm{g}_{z_t} \rangle -2\eta \alpha^{2t}\langle  \bm{g}_{z_t}', \w_t' - \w^* \rangle.
 \end{split}
\end{align}
Similarly using $-\langle \bm{g}_{z_t}', \w_t'-\w^* \rangle \leq -\mu ||\w^* - \w_t'||^2$ (RSI condition),
\begin{align}
 \begin{split}
||\w_{t+1} - \w^*||^2 &\leq (1-2\eta \alpha^{1t} \mu)||\w_{t} - \w^*||^2 + \eta^2|| \alpha^{1t} \bm{g}_{z_t} + \alpha^{2t} \bm{g}_{z_t}'||^2\\& -2\eta^2  \alpha^{2t}\langle  \bm{g}_{z_t}',  \bm{g}_{z_t} \rangle -2\eta \alpha^{2t} \mu||\w_t' - \w^*||^2.
 \end{split}
\end{align}
We now upper bound $-||\w_t' - \w^*||^2$, using lemma (\ref{lemma_1}):
\begin{align}
\begin{split}
||\w_{t+1} - \w^*||^2 &\leq (1-2\eta \alpha^{1t} \mu)||\w_{t} - \w^*||^2 + \eta^2|| \alpha^{1t} \bm{g}_{z_t} + \alpha^{2t} \bm{g}_{z_t}'||^2 \\&-2\eta^2  \alpha^{2t}\langle  \bm{g}_{z_t}',  \bm{g}_{z_t} \rangle - \alpha^{2t} \eta \mu ||\w_{t} - \w^*||^2 + 2\alpha^{2t} \eta \mu||\w_{t} - \w_t'||^2.
 \end{split}
\end{align}
This gives the following general form:
\begin{align}
\begin{split}
||\w_{t+1} - \w^*||^2  &\leq (1-2\eta \alpha^{1t} \mu - \alpha^{2t} \eta \mu)||\w_{t} - \w^*||^2 + \eta^2|| \alpha^{1t} \bm{g}_{z_t} + \alpha^{2t} \bm{g}_{z_t}'||^2 \\&-2\eta^2  \alpha^{2t}\langle  \bm{g}_{z_t}',  \bm{g}_{z_t} \rangle + 2\alpha^{2t} \eta \mu||\w_{t} - \w_t'||^2. 
 \end{split}
\end{align}
We now use the inequality $||\w_{t} - \w_t'|| = \eta^2||\bm{g}_{z_t}|| \leq \eta^2\beta^2 ||\w_{t} - \w^*||$ to upper bound the final term, see SGD proof for derivation of inequality:
\begin{align}
\begin{split}
||\w_{t+1} - \w^*||^2  &\leq (1-2\eta \alpha^{1t} \mu - \alpha^{2t} \eta \mu)||\w_{t} - \w^*||^2 + \eta^2|| \alpha^{1t} \bm{g}_{z_t} + \alpha^{2t} \bm{g}_{z_t}'||^2\\&-2\eta^2  \alpha^{2t}\langle  \bm{g}_{z_t}',  \bm{g}_{z_t} \rangle + 2\eta^3  \beta^2 \mu\alpha^{2t}||\w_{t} - \w^*||,
\end{split}\\
\begin{split}
||\w_{t+1} - \w^*||^2  &\leq (1-2\eta \alpha^{1t} \mu - \alpha^{2t} \eta \mu+ 2\eta^3\alpha^{2t}  \beta^2 \mu)||\w_{t} - \w^*||^2 \\&+ \eta^2|| \alpha^{1t} \bm{g}_{z_t} + \alpha^{2t} \bm{g}_{z_t}'||^2  -2\eta^2  \alpha^{2t}\langle  \bm{g}_{z_t}',  \bm{g}_{z_t} \rangle. 
\end{split}
\end{align}
We now simplify the last two terms, starting with the first:
\begin{align}
 \eta^2|| \alpha^{1t} \bm{g}_{z_t} + \alpha^{2t} \bm{g}_{z_t}'||^2 = \eta^2((\alpha^{1t})^2 ||\bm{g}_{z_t}||^2 + 2\alpha^{1t}\alpha^{2t} \langle \bm{g}_{z_t}, \bm{g}_{z_t}' \rangle + (\alpha^{2t})^2 ||\bm{g}_{z_t}'||^2).
\end{align}
Plugging in the expressions for  $\alpha^{1t}$, $\alpha^{2t}$,$\gamma_t=1$, grouping like terms and simplifying gives the following. Note, we have excluded a few steps due to spatial constraints:
\begin{align}
\begin{split}
 &\eta^2|| \alpha^{1t} \bm{g}_{z_t} + \alpha^{2t} \bm{g}_{z_t}'||^2 \\&= \frac{(l_{z_t}(\w_t')-l_{z_t}(\w_t))^2 + 2\eta(l_{z_t}(\w_t')-l_{z_t}(\w_t))\langle \bm{g}_{z_t}, \bm{g}_{z_t}' \rangle + \eta^2||\bm{g}_{z_t}||^2||\bm{g}_{z_t}'||^2 }{ ||\bm{g}_{z_t}-\bm{g}_{z_t}'||^2}
 \end{split}
\end{align}
Plugging in $\alpha^{2t}$ into the remaining term gives the following expressions:
\begin{align}
  -2\eta^2  \alpha^{2t}\langle  \bm{g}_{z_t}',  \bm{g}_{z_t} \rangle = \frac{-2\eta^2|| \bm{g}_{z_t}||^2\langle  \bm{g}_{z_t}',  \bm{g}_{z_t} \rangle - 2\eta(l_{z_t}(\w_t')-l_{z_t}(\w_t))\langle  \bm{g}_{z_t}',  \bm{g}_{z_t} \rangle}{||\bm{g}_{z_t}-\bm{g}_{z_t}'||^2}.
\end{align}
Putting these  together,  
\begin{align}
\begin{split}
&\eta^2|| \alpha^{1t} \bm{g}_{z_t} + \alpha^{2t} \bm{g}_{z_t}'||^2 -2\eta^2  \alpha^{2t}\langle  \bm{g}_{z_t}',  \bm{g}_{z_t} \rangle  \\&= \frac{(l_{z_t}(\w_t')-l_{z_t}(\w_t))^2 + 2\eta(l_{z_t}(\w_t')-l_{z_t}(\w_t))\langle \bm{g}_{z_t}, \bm{g}_{z_t}' \rangle  }{||\bm{g}_{z_t}-\bm{g}_{z_t}'||^2}\\ &+ \frac{ \eta^2||\bm{g}_{z_t}||^2||\bm{g}_{z_t}'||^2 -2\eta^2||\bm{g}_{z_t}||^2\langle  \bm{g}_{z_t}',  \bm{g}_{z_t} \rangle-2\eta(l_{z_t}(\w_t')-l_{z_t}(\w_t))\langle  \bm{g}_{z_t}',  \bm{g}_{z_t} \rangle}{||\bm{g}_{z_t}-\bm{g}_{z_t}'||^2}.
\end{split}
\end{align}
Cancelling terms gives,
\begin{align}
\begin{split}
&\eta^2|| \alpha^{1t} \bm{g}_{z_t} + \alpha^{2t} \bm{g}_{z_t}'||^2 -2\eta^2 \alpha^{2t}\langle  \bm{g}_{z_t}',  \bm{g}_{z_t} \rangle  \\&= \frac{(l_{z_t}(\w_t')-l_{z_t}(\w_t))^2+ \eta^2||\bm{g}_{z_t}||^2||\bm{g}_{z_t}'||^2 -2\eta^2||\bm{g}_{z_t}||^2\langle  \bm{g}_{z_t}',  \bm{g}_{z_t} \rangle}{||\bm{g}_{z_t}-\bm{g}_{z_t}'||^2}.
\end{split}
\end{align}
From $\alpha^{2t} \geq 0$ we have $\eta ||\bm{g}_{z_t}||^2 \geq l_{z_t}(\w_t)-l_{z_t}(\w_t')$ hence we can upper bound $(l_{z_t}(\w_t')-l_{z_t}(\w_t))^2$ by $\eta^2||\bm{g}_{z_t}'||^4$:
\begin{align}
\begin{split}
&\eta^2|| \alpha^{1t} \bm{g}_{z_t} + \alpha^{2t} \bm{g}_{z_t}'||^2 -2\eta^2  \alpha^{2t}\langle  \bm{g}_{z_t}',  \bm{g}_{z_t} \rangle  \\&\leq \frac{\eta^2||\bm{g}_{z_t}'||^4+ \eta^2||\bm{g}_{z_t}||^2||\bm{g}_{z_t}'||^2 -2\eta^2||\bm{g}_{z_t}||^2\langle  \bm{g}_{z_t}',  \bm{g}_{z_t} \rangle}{||\bm{g}_{z_t}-\bm{g}_{z_t}'||^2} \leq \eta^2||\bm{g}_{z_t}||^2.
\end{split}
\end{align}
Again we use the inequality $||\w_{t} - \w_t'|| = \eta^2||\bm{g}_{z_t}|| \leq \eta^2\beta^2 ||\w_{t} - \w^*||$:
\begin{align}
\eta^2|| \alpha^{1t} \bm{g}_{z_t} + \alpha^{2t} \bm{g}_{z_t}'||^2 -2\eta^2  \alpha^{2t}\langle  \bm{g}_{z_t}',  \bm{g}_{z_t} \rangle  & \leq \eta^2||\bm{g}_{z_t}||^2 \leq \eta^2\beta^2||\w_{t+1} - \w^*||^2
\end{align}
Hence we get the following expression:
\begin{align}
||\w_{t+1} - \w^*||^2  &\leq (1-2\eta \alpha^{1t} \mu - \alpha^{2t} \eta \mu+ 2\eta^3\alpha^{2t}  \beta^2 \mu)||\w_{t} - \w^*||^2 + \eta^2\beta^2||\w_{t+1} - \w^*||^2 \rangle,\\
||\w_{t+1} - \w^*||^2  &\leq (1-2\eta \alpha^{1t} \mu - \alpha^{2t} \eta \mu+ 2\eta^3\alpha^{2t}  \beta^2 \mu+ \eta^2\beta^2)||\w_{t} - \w^*||^2,\\
||\w_{t+1} - \w^*||^2  &\leq (1-\eta \mu - \alpha^{1t} \eta \mu+ 2\eta^3 \beta^2 \mu - 2\eta^3\alpha^{1t}  \beta^2 \mu+ \eta^2\beta^2)||\w_{t} - \w^*||^2.
\end{align}
Upper bounding $-\alpha^{1t}$ by $0$,
\begin{align}
||\w_{t+1} - \w^*||^2  &\leq (1-\eta \mu + 2\eta^3 \beta^2 \mu + \eta^2\beta^2)||\w_{t} - \w^*||^2.
\end{align}
Taking expectations with respect to $z_t$:
\begin{align}
\mathbb{E}[||\w_{t+1} - \w^*||^2] 
 &\leq \mathbb{E}[(1-\eta \mu + 2\eta^3 \beta^2 \mu + \eta^2\beta^2)||\w_{t} - \w^*||^2].
\end{align}
Noting that $\w_k$ does not depend on $z_t$, and neither does $\w^*$ due to the interpolation property: 
\begin{align}
\mathbb{E}[||\w_{t+1} - \w^*||^2] 
 &\leq (1-\eta \mu + 2\eta^3 \beta^2 \mu + \eta^2\beta^2)||\w_{t} - \w^*||^2.
\end{align}
For the this step to be convergent we need the following condition to hold $ 2\eta^3 \beta^2 \mu + \eta^2\beta^2-\eta \mu\leq0$. However, for $\eta \leq \hat{\eta} = \min\{\frac{1}{4\beta},\frac{1}{4\mu}, \frac{\mu}{2\beta^2}\}$ we have:
\begin{align}
      2\eta^3 \beta^2 \mu + \eta^2\beta^2-\eta \mu \leq \frac{1}{8}\eta \mu + \frac{1}{2}\eta \mu-\eta \mu \leq  -\frac{3}{8}\eta \mu.
\end{align}

Hence, we recover the rate:
\begin{align}
\mathbb{E}[||\w_{t+1} - \w^*||^2]  &\leq \left(1-\frac{3}{8}\hat{\eta}\mu\right)^t||\w_0 - \w^*||^2.   
\end{align}

\end{proof}

\subsection{Worst Case Rate}
It is clear by inspection that the worst case rate derived corresponds to the MAX2 subproblem. Hence in the worst case this step is taken for all $t$, and thus a trivial induction gives the result of Theorem \ref{th:rsi}.

\section{Additional Results}\label{App:Additional_Results}

\subsection{Cifar Hyperparameters and Variance}
Here we detail the hyperparameters and variance for the ALI-G and BORAT results reported in table \ref{tab:cifar1}. For other optimisation methods please refer to Appendix E of \cite{Berrada2019a}.
\begin{table}[h]
  \caption{Cifar Hyperparameters (BORAT)}
  \label{tab:cifar1}
  \centering
  \begin{tabular}{ccccccccc}
    \toprule
    Data Set & Model & \multicolumn{4}{c}{ Hyperparameters} & \multicolumn{2}{c}{Test Accuracy}\\
    \cmidrule(r){3-6}
    \cmidrule(r){7-8}
    &  &  $N$& $\hspace{0.2cm}\eta$&$r$& batchzise& Mean & std\\
    \midrule
    \multirow{6}{*}{CIFAR10}&\multirow{3}{*}{WRN}&2&\hspace{0.2cm}0.1	&	50	&	128	&	95.4	&	0.13\\
                            &                    &3&\hspace{0.2cm}1	&	100	&	128	&	95.4	&	0.05\\
                            &                    &5&\hspace{0.2cm}1	&	75	&	128	&	95.0	&	0.08\\
    \cmidrule(r){2-8}
                            &\multirow{3}{*}{DN}&2&\hspace{0.2cm}0.1	&	100	&	64	&	94.5	&	0.09\\
                            &                    &3&\hspace{0.2cm}1	&	75	&	256	&	94.9	&	0.13\\
                            &                    &5&\hspace{0.2cm}1	&	75	&	128	&	94.9	&	0.13\\
    \midrule
    \multirow{6}{*}{CIFAR100}&\multirow{3}{*}{WRN}&2&\hspace{0.2cm}0.1	&	50	&	512	&	76.1	&	0.21\\
                            &                    &3&\hspace{0.2cm}0.1	&	50	&	256	&	76.0	&	0.16\\
                            &                    &5&\hspace{0.2cm}0.1	&	50	&	128	&	75.8	&	0.22\\
    \cmidrule(r){2-8}
                            &\multirow{3}{*}{DN} &2&\hspace{0.2cm}0.1	&	75	&	256	&	76.2	&	0.14\\
                            &                    &3&\hspace{0.2cm}0.1	&	75	&	128	&	76.5	&	0.38\\
                            &                    &5&\hspace{0.2cm}0.1	&	75	&	64	&	75.7	&	0.03\\
    \bottomrule
  \end{tabular}
\end{table}


\subsection{Empirical Run Time}

Here we detail the effect of increasing $N$ on the run time of BORAT. Due to each update requiring $N-1$ gradient evaluations BORAT with $N\geq2$ take significantly longer between updates than other methods. However, BORAT achieves good empirical convergence rates taking $N-1$ fewer parameter updates than other methods, as shown in the results section. Hence we consider the epoch time and show for each pass through the data BORAT has a similar run time to SGD.\\
Increasing $N$ both increases the run time of Algorithm \ref{alg:dualsol} but additionally BORAT must compute extra dot products when calculating $Q$. A naive implementation of Algorithm \ref{alg:dualsol} has a time complexity of $\mathcal{O}\left(\sum_{k=1}^N\frac{N!}{k!(N-k)!}k^3\right)$. However if we exploit the parallel nature of this algorithm where the sub problems are solved simultaneously, the time complexity reduces to $\mathcal{O}\left(N^3\right)$ as discussed in Section \ref{dual_solution}. Additionally we need only run Algorithm \ref{alg:dualsol} once every $N-1$ batches so the per epoch time complexity is $\mathcal{O}\left(N^2\right)$. Of course in practice Algorithm 1 is only responsible for a fraction of the run time, where its contribution is determined by the relative size of the model and $N$.\\
Table \ref{tab:cifar_timing} show with a parallel implementation the effect of the this extra computation on the training epoch time isn't significant and the time complexity scales approximately linearly with $N$. Moreover, Table \ref{tab:cifar_timing} shows for large scale learning problems, such as ImageNet the extra run time when increasing $N$ is negligible. 
\begin{table}[H]
  \centering
  \begin{tabular}{c ccccc }
    \toprule
   Optimiser & SGD & BORAT & BORAT & BORAT & BORAT  \\
    \midrule
    $N$& 1 & 2 & 3 & 4 & 5 \\
    Time (s) &51.0&55.6&68.2&69.2&74.3 \\
    \midrule
       Optimiser & BORAT & BORAT & BORAT & BORAT & BORAT  \\
         \midrule
        $N$& 6 & 7 & 8 & 9 & 10 \\
    Time (s) &77.8&82.8&88.7&94.1&99.5 \\
    \bottomrule
  \end{tabular}
     \caption{Average BORAT training epoch time for CIFAR100 data set, shown for varying N. Time quoted using a batch size of 128, CIFAR100, CE loss, a Wide ResNet 40-4, and a parallel implantation of BORAT. All Optimiser had access to 3 CPU cores, and one TITAN Xp GPU.}
     \label{tab:cifar_timing}
\end{table}

\begin{table}[H]
  \centering
  \begin{tabular}{c ccccc }
    \toprule
   Optimiser &  BORAT & BORAT & BORAT   \\
    \midrule
    $N$& 2 & 3 & 5 \\
    Time (s) &885.50&910.49&934.79 \\
    \bottomrule
  \end{tabular} 
     \caption{Average BORAT training epoch time for ImageNet data set, shown for varying N. Time quoted using a batch size of 1024, ImageNet, CE loss, a ResNet18, and a parallel implantation of BORAT. All Optimiser had access to 12 CPU cores, and 4 TITAN Xp GPUs.}
     \label{tab:imagenet_timing}
\end{table}

\vskip 0.2in

\bibliography{standardstrings,oval}

\end{document}